\newtheorem{theorem}{Theorem}
\newtheorem{lemma}{Lemma}
\def\widebar{\accentset{{\cc@style\underline{\mskip15mu}}}}
\begin{document}

\title{Grasp, See, and Place: Efficient Unknown Object Rearrangement with Policy Structure Prior}

\author{Kechun Xu, Zhongxiang Zhou, Jun Wu, Haojian Lu, Rong Xiong, Yue Wang
\thanks{{This work was supported by the National Key R$\&$D Program of China (2023YFB4705001) and the National Nature Science Foundation of China under Grant 62173293.} Kechun Xu, Zhongxiang Zhou, Jun Wu, Haojian Lu, Rong Xiong, Yue Wang are with Zhejiang University, Hangzhou, China. Corresponding author, {\tt\small wangyue@iipc.zju.edu.cn}. Project page: {\href{https://xukechun.github.io/papers/GSP/}{https://xukechun.github.io/papers/GSP}}.}
}



\maketitle

\begin{abstract}
We focus on the task of unknown object rearrangement, where a robot is supposed to re-configure the objects into a desired goal configuration specified by an RGB-D image. Recent works explore unknown object rearrangement systems by incorporating learning-based perception modules. However, they are sensitive to perception error, and pay less attention to task-level performance. In this paper, we aim to develop an effective system for unknown object rearrangement amidst perception noise. We theoretically reveal that the noisy perception impacts grasp and place in a decoupled way, and show such a decoupled structure is {valuable} to improve task optimality. We propose GSP, a dual-loop system with the decoupled structure as prior. For the inner loop, we learn {a see} policy for self-confident {in-hand object matching}. For the outer loop, we learn a grasp policy aware of object matching and grasp capability guided by task-level rewards. We leverage the foundation model CLIP for object matching, policy learning and self-termination. A series of experiments indicate that GSP can conduct unknown object rearrangement with higher completion rates and {fewer} steps.
\end{abstract}

\begin{IEEEkeywords}
Unknown Object Rearrangement, Active Perception, Object Manipulation, Task and Motion Planning.
\end{IEEEkeywords}

{
\printnomenclature
\nomenclature[03]{$o_c^i$}{Detected object crop in the current scene.}
\nomenclature[04]{$o_g^j$}{Detected object crop in the goal scene.}
\nomenclature[05]{$o_h$}{Detected crop of the in-hand object.}
\nomenclature[07]{$\mathcal{O}_c$}{Set of $M$ detected object crops in the current scene.}
\nomenclature[08]{$\mathcal{O}_g$}{Set of $N$ detected object crops in the goal scene.}
\nomenclature[09]{$\mathcal{O}_{c,1}$}{Set of objects that can be directly moved to goals.}
\nomenclature[10]{$\mathcal{O}_{c,2}$}{Set of objects whose goals are occupied by others.}
\nomenclature[13]{$\pi$}{Rearrangement policy.}
\nomenclature[14]{$\pi_{\mathcal{G}}$}{Grasp policy.}
\nomenclature[15]{$\pi_{\mathcal{P}}$}{Place policy.}
\nomenclature[17]{$\pi_{\mathcal{S}}$}{See policy.}
\nomenclature[18]{$a_{\mathcal{G}}$}{Grasp action.}
\nomenclature[19]{$a_{\mathcal{P}}$}{Place action.}
\nomenclature[20]{$a_{\mathcal{S}}$}{See action.}
\nomenclature[22]{$\mathcal{M}_c^i(j)$}{Object matching distribution of the current object $o_c^i$.}
}

\section{Introduction}
\IEEEPARstart{O}{bject} rearrangement is essential for an everyday robot to re-configure the objects into a desired goal configuration, which is an important problem in embodied AI research~\cite{batra2020rearrangement}. Traditionally, the current and goal object configurations are represented as symbolic states {\it e.g.} pose and identity. Guided by the task completion and efficiency, the rearrangement problem is generally solved using task and motion planning (TAMP)~\cite{krontiris2015dealing,king2016rearrangement,huang2019large, song2020multi,gao2021minimizing,gao2022fast,wada2022reorientbot,xu2022efficient,tian2022sampling,king2017unobservable,nam2019planning,garrett2020online,driess2020deep,liu2021ocrtoc,zhu2021hierarchical} for optimal task-level performance. Depending on the symbolic state, these methods assume known object models, thus encountering difficulty in open real-world scenarios with unknown objects. 

\begin{figure}[t]
  \centering
  \includegraphics[width=\linewidth]{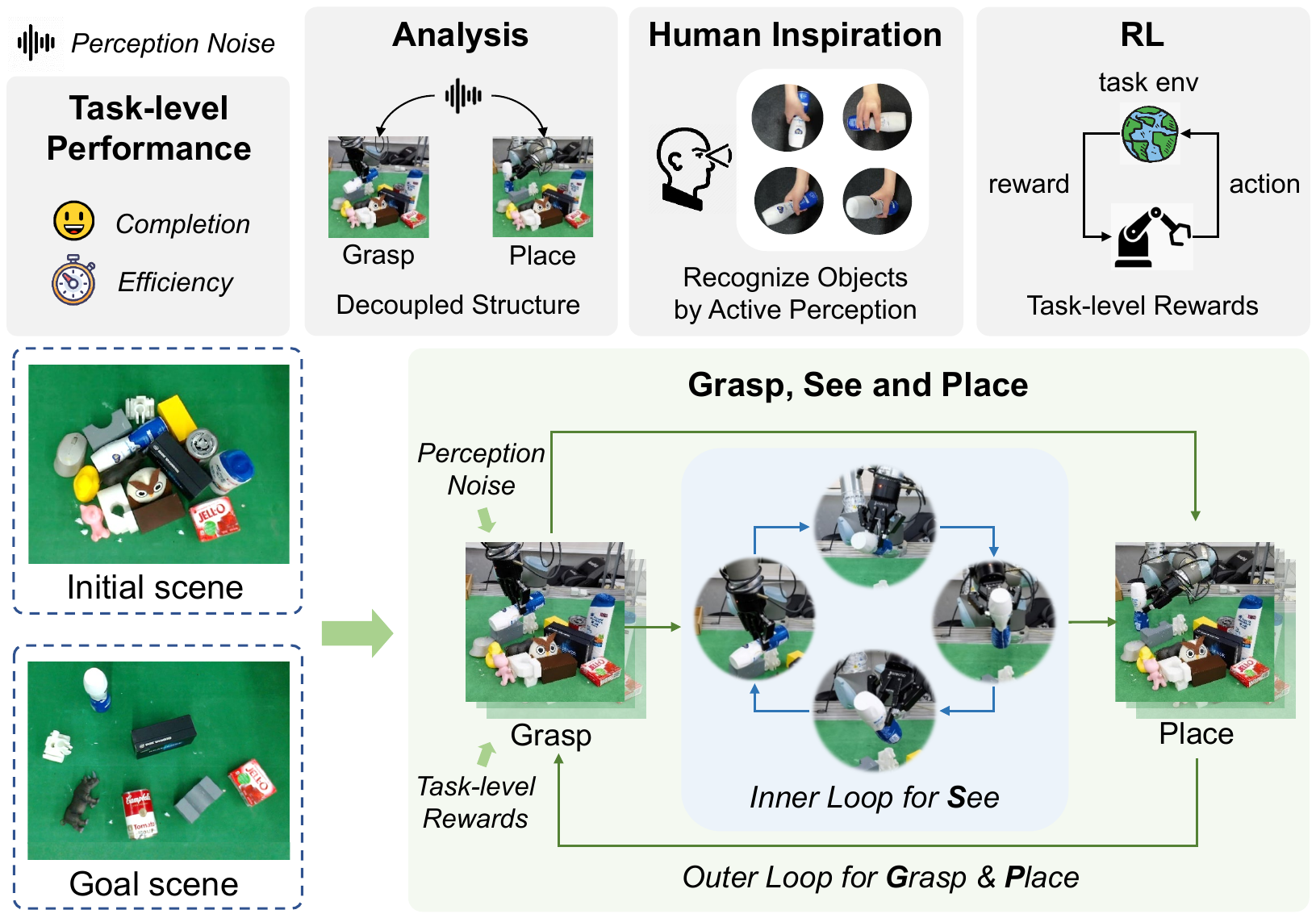}
  \vspace{-0.5cm}
  \caption{{{\bf G}rasp, {\bf S}ee, and {\bf P}lace. The robot is given the initial and goal scenes for the task of object rearrangement. Aiming at improving task-level performance with perception noise, we first derive the decoupled structure by analysis. Guided by the decoupled prior, we incorporate human behavior and task-level rewards into the general framework of GSP. In general, GSP contains two loops: the inner loop actively sees the grasped object for high self-confident matching, and the outer loop conducts the grasp and place planning.}}
  \label{fig:teaser}
  \vspace{-0.6cm}
\end{figure}

Recent works explore solutions for unknown object rearrangement with the goal object configuration given as an RGB-D image~\cite{labbe2020monte,xu2021learning,zeng2021transporter,qureshi2021nerp,goyal2022ifor,tang2023selective}. Efforts have been made to representation learning of current and goal object configurations by leveraging learning-based perception models. The pioneering work NeRP~\cite{qureshi2021nerp} builds a scene graph by matching the objects in current and goal configurations with 2-DoF {pose transformations} {\it i.e.} in-plane translations. Based on the graph, grasp and place action are learned from demonstration to complete the task. Selective Rearrangement~\cite{tang2023selective} further extends the graph-based representation to address settings with clutter and selectivity {\it i.e.} rearranging a subset of objects. The policy is guided by the graph editing distance. IFOR~\cite{goyal2022ifor} employs a pixel-level optical flow to represent the difference between current and goal object configurations, which allows for rearranging unknown objects with 3-DoF {pose transformations} {\it i.e.} in-plane translations and in-plane rotations. By minimizing the flow, the robot places the objects selected according to handcrafted rules. These fruitful progresses largely narrow down the gap of perception side from known to unknown object rearrangement. 

In contrast, looking into the planning side of unknown object rearrangement, we find that existing systems derive actions by heuristic rules or supervision, which pay less attention to the optimality of task-level performance. Taking a simple case of two objects that can be directly rearranged to the goal, their moving order does not affect the task-level performance. Hence, regarding either order as the unique ground truth for supervision brings bias into policy learning. Perception noise makes the situation even worse, as the heuristics may be built on incorrect perception results. Considering the classic task-level optimal policy design for object rearrangement with ideal perception, we raise a question: \textit{Given the noisy perception results, is there a policy that can optimize the task-level performance of unknown object rearrangement?}

To optimize task-level performance, reinforcement learning (RL) is a useful tool. However, it is challenging to directly learn the policy with RL for the long-horizon object rearrangement. In this paper, we delve into the structure of the policy by beginning with {theoretical} analysis on object rearrangement with ideal perception. We show that, to minimize the total steps, it is an optimal policy to grasp objects whose goals are non-occupied and place them to their goals, and resolve the objects whose goals are circularly occupied with the aid of buffers. Additionally, we find that the optimal grasp is multimodal, verifying the difficulty of the per-step supervision of action. When extending to noisy perception, we derive that the noise impacts the grasp and place in a \textit{decoupled} way. We furthermore show such a decoupled structure is {valuable} to improve the task-level optimality. Besides, the perception noise makes the grasp supervision even harder.

Guided by the insights, we propose an RL-based rearrangement policy with the decoupled structure as prior, namely GSP ({\bf G}rasp, {\bf S}ee and {\bf P}lace). {GSP contains two loops: an {outer loop} for grasp and place with an {inner loop} for see~(Fig.~\ref{fig:teaser}).} Thanks to the decoupled prior, we can boost the perception performance of placing individually by an inner loop. Inspired by the human behavior of looking at an object up and down to confirm its identity~\cite{bajcsy1988active,aloimonos1988active,chen2011active}), we propose to independently learn a see policy by actively rotating the in-hand object to improve {the in-hand object matching} for task optimality. 
For the outer loop, we propose to learn a policy conditioned on the uncertain object matching between current and goal images as well as grasp capability, guided by task-level rewards. Such a dual-loop reinforcement learning structure relieves the sparse reward in long-horizon tasks and focuses on task-level performance instead of per-step supervision. To model the object matching, we leverage the foundation model CLIP~\cite{radford2021learning} for the similarity measure between objects in current and goal images, which is utilized for policy learning and self-termination. Thanks to the pre-training of CLIP on large-scale data, our system demonstrates zero-shot generalization to unseen objects. We evaluate the system on cluttered and selective unknown object rearrangement tasks with {6-DoF pose transformations}. Empowered by the structure prior and the foundation model, GSP demonstrates higher task completion rates using {fewer} steps in both simulation and real-world experiments. To summarize, the main contributions are:
\begin{itemize}
    \item We theoretically reveal {that} noisy perception impacts grasp and place in a decoupled way, and show that it is {valuable} to improve {the in-hand object matching}.
    \item We propose a dual-loop policy {\it i.e.} GSP with the decoupled structure as the prior for better efficiency and dealing with unknown objects.
    \item For the inner loop, we learn {a see} policy for self-confident object matching, which improves {the in-hand object matching} for task optimality. 
    \item For the outer loop, we learn a policy to select a grasp action aware of object matching and grasping capability, guided by task-level rewards.
    \item We leverage the foundation model CLIP for object matching, policy learning and self-termination, which links the two loops for better policy performance. 
    \item We evaluate our method with scenarios on seen and unseen objects in both simulated and real-world settings. The results validate the effectiveness and generalization.
\end{itemize}

\section{Related Works}
\subsection{Tabletop Object Rearrangement}
Tabletop object rearrangement is a topic that has been explored for decades, and is considered as a typical challenge in embodied AI~\cite{batra2020rearrangement}. In this task, the robot is supposed to re-configure the objects into a desired goal configuration. Conventional works for object rearrangement~\cite{krontiris2015dealing,king2016rearrangement,huang2019large,song2020multi,gao2021minimizing,gao2022fast,wada2022reorientbot,xu2022efficient,tian2022sampling,wang2021uniform,wang2022efficient} are studied in the context of task and motion planning (TAMP), which are generally under the assumption of fully-observable states and dynamics. While some works extend these methods to fully or partially occlusion cases through learning-based methods~\cite{king2017unobservable,nam2019planning,garrett2020online,driess2020deep,liu2021ocrtoc,zhu2021hierarchical,zeng2018semantic}, they still rely on known object models for accurate pose estimation to build symbolic states. As a result, it is unfeasible to deploy these methods in open real-world environments where there are novel objects without known models, significantly limiting the applications. 

Recent works leverage the learning-based models of perception and planning to explore solutions for unknown object rearrangement with the goal configuration specified by an RGB-D image~\cite{labbe2020monte,xu2021learning,xu2021efficient,zeng2021transporter,qureshi2021nerp,goyal2022ifor,tang2023selective}. DSR-Net~\cite{xu2021learning} learns a 3D dynamic scene representation for object pushing. Transporter Networks~\cite{zeng2021transporter} acquires rearrangement skills from demonstration with objects of simple shapes. The most relevant works to ours are~\cite{qureshi2021nerp,goyal2022ifor,tang2023selective}, which develop rearrangement systems for everyday objects with unknown models. NeRP~\cite{qureshi2021nerp} builds a scene graph by segmenting all objects in current and goal scenes with unseen object segmentation and matching objects with image feature similarity. Then, neural networks are learned for object and action planning from demonstration data. However, it only considers 2-DoF in-plane translations of objects, which limits its applications. IFOR~\cite{goyal2022ifor} shows the capability of unknown object rearrangement with 3-DoF {pose transformations} ({\it i.e.} in-plane translations and in-plane rotations) by applying a pixel-level optical flow to represent the difference between current and goal scenes. By minimizing the flow, the robot places the objects selected according to handcrafted rules. Selective Rearrangement~\cite{tang2023selective} further considers settings with clutter and selectivity ({\it i.e.} rearranging a subset of objects). Analogous to NeRP~\cite{qureshi2021nerp}, Selective Rearrangement~\cite{tang2023selective} obtains object masks by unseen object segmentation, and conducts object matching to build the scene graph. Then the policy is guided by graph editing distance. These methods largely narrow down the gap of perception side from known to unknown object rearrangement. However, from the planning side, they derive actions by heuristic rules or supervisions, which pay less attention to task-level performance. Additionally, they may be sensitive to the error in perception results. If objects are in clutter or required to be rearranged with large {pose transformations}, incorrect object matching may be inevitable. Other efforts to rearrange unknown objects with {6-DoF pose transformations}~\cite{cheng2022learning,simeonov2022neural,simeonov2023se,chun2023local} focus on object reorientation for the goal relation between two objects, and only generalize across objects of similar categories. 

In this work, we set to consider the task-level performance of unknown object rearrangement with noisy perception results ({\it i.e.} uncertainty of object matching). We learn {a see} policy to improve object matching, and jointly consider object matching and grasping capability for grasp planning, which is guided by task-level rewards.
In parallel to our work, other efforts solve rearrangement problems with language instructions~\cite{shridhar2022cliport,liu2022structformer,liu2022structdiffusion,paxton2022predicting,goodwin2022semantically,xu2023joint,xu2023object}.

\subsection{Active Perception}
Active perception is a broad concept describing situations where a robot adopts strategies to change the sensor configuration ({\it e.g.} camera viewpoint) to get better information~\cite{bajcsy1988active,aloimonos1988active,chen2011active}. There are various applications of active perception in object recognition, segmentation, navigation, and manipulation~\cite{eidenberger2010active,bohg2017interactive,ren2019domain,calli2018active,fu2019active,morrison2019multi,sun2019active,saito2021select,cheng2018reinforcement,jangir2022look,zeng2022robotic,ren2023robot}. 
\cite{calli2018active} proposes a model-free viewpoint optimization method to improve the performance of object recognition and grasping synthesis. \cite{fu2019active,morrison2019multi} plans the next-best-view by information gain approaches to reduce the grasp uncertainty for better object picking in clutter or occlusion. \cite{saito2021select} utilizes active perception to recognize characteristics of the target object, so as to select a tool for better manipulation. \cite{cheng2018reinforcement,jangir2022look} learns next-best-view policy through reinforcement learning by designing task-based rewards. Other works integrate object or environment interaction for exploration (referred to as Interactive Perception~\cite{bohg2017interactive}). \cite{danielczuk2019mechanical,novkovic2020object,bejjani2021occlusion,miao2022safe,huang2022mechanical} conduct mechanical search or retrieval of goal objects in clutter with nonprehensile actions, and \cite{xu2021learning} builds continuous and complete scene representation by planar pushing. 

In this work, we leverage the idea of active perception for better object matching between the in-hand object and objects in the goal scene. Specifically, we learn {a see} policy that reorients the in-hand object based on matching results, which aims to achieve highly self-confident object matching.

\begin{figure*}[t]
  \centering
  \includegraphics[width=\linewidth]{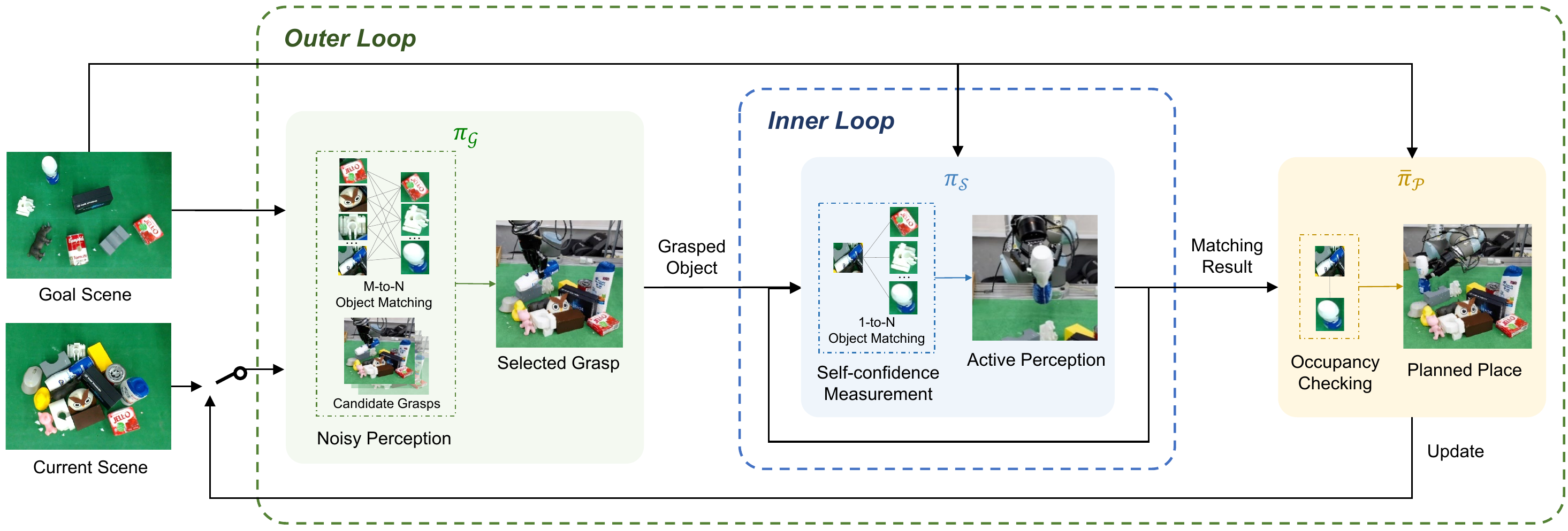}
  \vspace{-0.6cm}
  \caption{{{\bf System Overview}. Given the RGB-D images of the current and goal scenes, the grasp policy jointly considers object matching and candidate grasps to determine a selected grasp pose. After picking up an object, object matching is conducted between the grasped object and the goal objects. If the matching is self-confident, the object is rearranged to the planned place pose based on occupancy checking. Otherwise, active {perception} is triggered to predict the delta orientation of the end effector. Then the robot rotates the in-hand object to a new view until a confident matching is achieved. Overall, our method decomposes the object rearrangement process into two loops: an inner loop for {see} and an outer loop for grasp and place planning.}
  }
  \label{fig:overview}
  \vspace{-0.6cm}
\end{figure*}

\subsection{Embodied AI}
Our research lies in the broad domain of embodied AI, which concerns intelligent systems with a physical or virtual embodiment. In recent years, embodied AI has evolved from primarily focusing on navigation tasks~\cite{das2018embodied,deitke2020robothor,savva2019habitat,shridhar2020alfred,xia2018gibson} to now incorporating physical manipulation tasks~\cite{fan2018surreal,james2020rlbench, yu2020meta,ehsani2021manipulathor}. Lately, the problem of object rearrangement has been identified as a representative challenge for evaluating embodied AI~\cite{batra2020rearrangement}, and our work contributes to advancing this specific area. Moreover, the tabletop object rearrangement task in this work is more challenging than that defined in \cite{batra2020rearrangement} by considering task settings without any object models. Unlike the majority of previous studies in embodied AI that solely rely on simulated evaluations, we also assess the performance of our approach in real-world settings. {There are also efforts that focus on the sim2real problem with techniques like domain randomization~\cite{james2017transferring,james2019sim,tobin2017domain}. In this work, we employ pre-trained foundation models to process visual information, thus narrowing the sim2real gap.}

\section{Policy Structure and System Design}
In this paper, we aim to optimize the task-level performance of object rearrangement. One intuitive tool is reinforcement learning~(RL). However, it is challenging to directly learn the policy with RL for the long-horizon object rearrangement. Hence, we first delve into the structure of the policy, and then introduce our system guided by the structure.

{
\subsection{Problem Statement}
Consider a tabletop object rearrangement problem with the current and goal scenes specified as RGB-D images $I_c$ and $I_g$. $M$ detected object crops $\mathcal{O}_c=\{o_c^i\}_{i=1,...,M}$ in the current scene are supposed to be re-configured to $N$ detected object crops $\mathcal{O}_g=\{o_g^j\}_{j=1,...,N}$ in the goal scene. A current object $o_c^i$ that does not match any goal object, namely a non-goal object, is supposed to be placed outside the workspace, indicating $\mathcal{O}_g\subseteq\mathcal{O}_c$. Denote the object matching distribution between the current object $o_c^i$ and $\mathcal{O}_g$ as $\mathcal{M}_c^i(j)$. $\mathcal{M}_c^i(j)$ is a categorical distribution. Denote $\zeta_g$ as a threshold, if $\operatorname{max}_j \mathcal{M}_c^i(j)\geq \zeta_g$, $(0\textless \zeta_g \textless 1)$, then the matched goal index of $o_c^i$ is $j_i=\operatorname{argmax}_j \mathcal{M}_c^i(j)$. Otherwise, $o_c^i$ is regarded as a non-goal object. We consider grasp and place to move objects one by one. 

To complete the task, a rearrangement policy $\pi$ should determine the object moving order based on the perception results. For task-level efficiency, the optimal rearrangement policy is to minimize the total pick-n-place steps to achieve the goal configuration.

{\bf Assumptions.} We follow the assumptions in \cite{qureshi2021nerp,goyal2022ifor,tang2023selective}. First, there is no occlusion or stack in the goal scene, and the object localization of the goal scene is correct. This is because occlusion or stack hinders the goal representation to a large extent, limiting the upper bound of task-level performance for all policies. {Second, we assume that there are detected object bounding boxes and feasible predicted grasps in the current scene. Based on these assumptions, the camera angles of current and goal scenes are not necessarily the same.} Lastly, each object can be moved to its goal pose with one step of pick-n-place when the goal is free. This assumption is discussed in Sec.~\ref{sec:system-discussion}. 

{\bf Ideal vs. Noisy Perception.} Following \cite{han2018complexity}, for ideal perception, object matching is deterministic, and each current object correctly matches one of the goal objects. In this way, $\mathcal{M}_c^i(j)=\delta(j-j_i)$ if $o_c^i$ matches a goal object $o_g^{j_i}$, and otherwise $\mathcal{M}_c^i(j)=0$ for all $j$. Here $\delta$ denotes the Dirac delta function. Then the policy is to decide the object moving order. In contrast, with noisy perception, object matching is uncertain and can be wrong. This may be caused by uncertain detection, textured backgrounds, varied camera views, and etc. In tasks with 6-DoF pose transformations, the issue may be more severe than in 2-DoF or 3-DoF ones, especially when facing unseen objects in clutter. 
In this paper, we mainly tackle the rearrangement task with noisy perception.
} 

{
\subsection{Preliminaries}

}

\begin{figure}[t]
  \centering
  \includegraphics[width=0.8\linewidth]{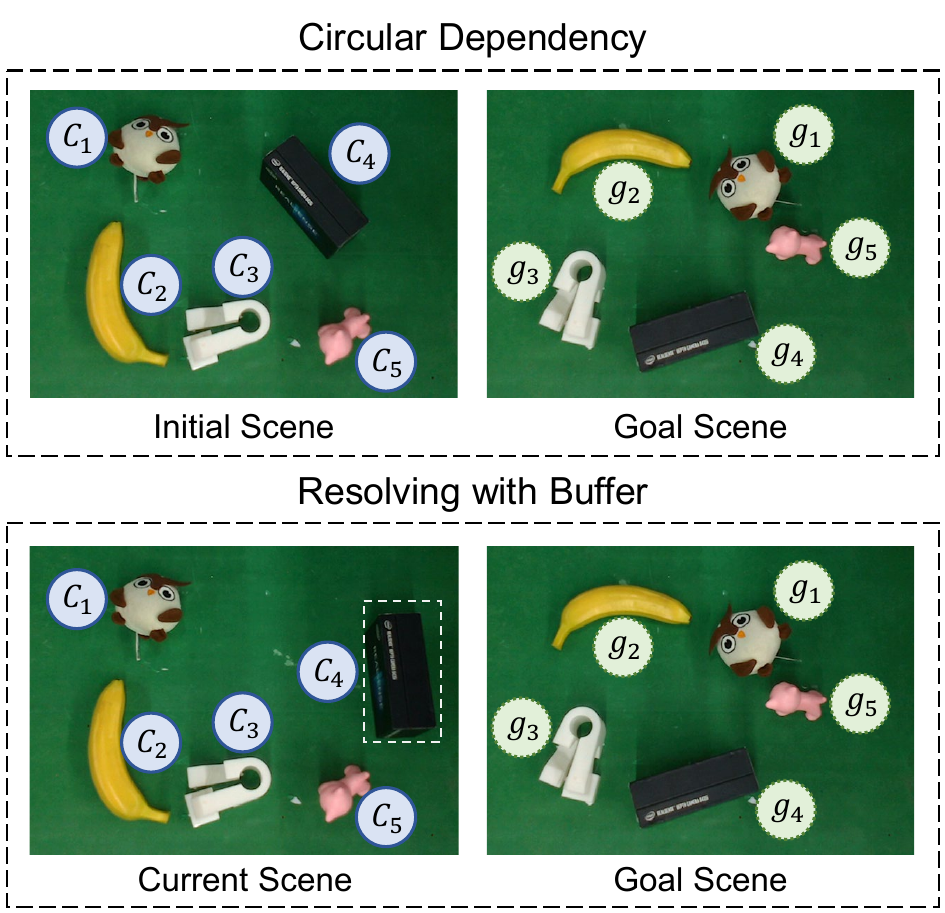}
  \vspace{-0.3cm}
  \caption{{An example to illustrate circular dependency and buffer. After moving $c_4$ to the buffer~(marked with the white box), the circular dependency breaks.}}
  \label{fig:dependency}
  \vspace{-0.6cm}
\end{figure}

{\bf Dependency.} For an object $o_c^{i_0}$, given $\mathcal{M}_c^{i_0}(j)$, its goal may be occupied by another object $o_c^{i_1}$, which we call $o_c^{i_0}$ depends on $o_c^{i_1}$. In this case, $o_c^{i_1}$ must be moved first before $o_c^{i_0}$ can be placed to its goal. Note that {one object can depend on multiple objects, but} $i_0\!\neq\!i_1$ {\it i.e.} if the goal of an object is only occupied by itself, there is no dependency on itself. The dependency relationship among current objects is represented as a dependency digraph $G_{dep}(V_{dep}, A_{dep})$, where $V_{dep}=\mathcal{O}_c$, and $(o_c^{i_0}, o_c^{i_1})\in A_{dep}$ indicates $o_c^{i_0}$ depends on $o_c^{i_1}$~\cite{gao2021minimizing}. Then each current object belongs to one of the following two sets:
\begin{equation}
\begin{aligned}
\mathcal{O}_{c,1}:\left\{o_c^{i_0}\big|\forall o_c^{i_1}\in\mathcal{O}_c, (o_c^{i_0}, o_c^{i_1})\notin A_{dep}\right\} \\
\mathcal{O}_{c,2}:\left\{o_c^{i_0}\big|\exists o_c^{i_1}\in\mathcal{O}_c, (o_c^{i_0}, o_c^{i_0})\in A_{dep}\right\}
\end{aligned}\nonumber
\end{equation}
where $i_0 \neq i_1$. Objects in {$\mathcal{O}_{c,1}$} can be directly moved to their goals. 
Instead, goals of objects in {$\mathcal{O}_{c,2}$} are initially occupied by others.
If an object is a non-goal object, it can always be moved outside, thus belonging to {$\mathcal{O}_{c,1}$}. Notably, an object may change its set membership across action steps, {\it e.g.} after an object $o_c^i \in {\mathcal{O}_{c,1}}$ is moved to the goal, another object depends on $o_c^i$ can change the membership from {$\mathcal{O}_{c,2}$} to {$\mathcal{O}_{c,1}$}. {For some objects in {$\mathcal{O}_{c,2}$}, it is possible to have circular dependencies~\cite{gao2021minimizing,gao2022fast}, {\it e.g.} among object $1, 2, 3, 4$ in Fig.~\ref{fig:dependency}. Once existing circular dependencies, some objects must be temporarily moved to the intermediate places to break the dependencies. These intermediate places are defined as buffers. A buffer can be a place that is currently not occupied and does not overlap with all goal regions. Fig.~\ref{fig:dependency} shows an illustration of circular dependency and buffer. }

{
{\bf Policy Structure.} We define the grasp action as $a_\mathcal{G}\!=\!i, i\!\in\!\{1,...,M\}$, which represents the grasps of $o_c^i$, and the place action as $a_\mathcal{P}\!=\!j^\prime,j^\prime\!\in\!\{1,...,N,N_o,N_b\}$, which represents the target of places: 
\begin{equation}
j^\prime = \begin{cases}
            j, & \text {place to $o_g^j$} \\ 
            N_o, & \text{place outside} \\
            N_b, & \text{place to the buffer} 
            \end{cases}
\end{equation}

To formulate the rearrangement policy, we represent the crop of the in-hand object after grasping as $o_h$, then the rearrangement policy is $\pi\left(a_{\mathcal{G}}, a_{\mathcal{P}}|\mathcal{O}_c, \mathcal{O}_g, o_h\right)$. $\pi$ can be factorized as grasp and place, which are denoted as $\pi_{\mathcal{G}}$ and $\pi_{\mathcal{P}}$:
\begin{equation}
\label{eq-gp}
\!\!\!\!\pi\!\left(a_{\mathcal{G}},\!a_{\mathcal{P}}|\mathcal{O}_c,\!\mathcal{O}_g,\!o_h\right)\!\!=\!\!\pi_{\mathcal{G}}\!\left(a_{\mathcal{G}}|\mathcal{O}_c,\!\mathcal{O}_g\right)\!\pi_{\mathcal{P}}\!\left(a_{\mathcal{P}}|\mathcal{O}_c,\!\mathcal{O}_g,\!o_h,\!a_{\mathcal{G}}\right) \!\!\!
\end{equation}
The grasp policy $\pi_{\mathcal{G}}$ is not conditioned on $o_h$ as it is taken after the object is grasped. The cue for $\pi_{\mathcal{G}}$ is the $M$-to-$N$ object matching $\mathcal{M}_c^i(j)$. Note that the place action can be derived from the in-hand object crop $o_h$ and the goal object crops $\mathcal{O}_g$. Thus, the cue for $\pi_{\mathcal{P}}$ is the 1-to-$N$ object matching $\mathcal{M}_h(j)$:
\begin{equation}
\label{eqn-place}
\!\!\!\pi_{\mathcal{P}}\!\left(a_{\mathcal{P}}|\mathcal{O}_c,\!\mathcal{O}_g,\!o_h,\!a_{\mathcal{G}}\right)\!=\!\pi_{\mathcal{P}}\!\left(a_{\mathcal{P}}|o_h,\!\mathcal{O}_g\right)
\end{equation}

\subsection{Policy with Ideal Perception}
\label{sec:opt-ideal}
With ideal perception, the in-hand object is certainly the object grasped by $a_{\mathcal{G}}$. Thus, the in-hand object matching $\mathcal{M}_h(j)$ is exactly $\mathcal{M}_c^{a_{\mathcal{G}}}(j)$ employed in the grasp policy:
\begin{equation}
\label{eqn-ideal_place}
\pi_{\mathcal{P}}\!\left(a_{\mathcal{P}}|o_h,\!\mathcal{O}_g\right)\!=\!\pi_{\mathcal{P}}\!\left(a_{\mathcal{P}}|a_{\mathcal{G}},\!\mathcal{O}_g\right)
\end{equation}}

{\bf Optimal Policy.} We represent the grasp policy to resolve objects with circular dependency in \cite{han2018complexity} as $\pi_{\mathcal{G}}^{fvs}$, and formulate a policy as: 
\begin{equation}
\label{eqn-pi0}
\pi^0=\pi_{\mathcal{G}}^0\bar{\pi}_{\mathcal{P}}^0
\end{equation}
where $\pi_{\mathcal{G}}^0$ and {$\bar{\pi}_{\mathcal{P}}^0$} are defined as: 
\begin{equation}
\pi_{\mathcal{G}}^0=\begin{cases} \sum_i\delta(a_{\mathcal{G}}=i|o_c^i \in \mathcal{O}_{c,1}), & \exists o_c^i,o_c^i\in\mathcal{O}_{c,1} \\
            \pi_{\mathcal{G}}^{fvs}, & \forall o_c^i,o_c^i\in\mathcal{O}_{c,2} \end{cases}
\end{equation}
\begin{equation}
\label{eqn-pi0-place}
\!\!\!\bar{\pi}_{\mathcal{P}}^0\!= 
                    \begin{cases} \delta(a_{\mathcal{P}}\!=\!j|a_{\mathcal{G}}), & \!o_c^{{a_{\mathcal{G}}}}\!\in\!\mathcal{O}_{c,1},\operatorname{max}_j \mathcal{M}_c^{a_{\mathcal{G}}}(j)\!\geq\!\zeta_g \\
                    \delta(a_{\mathcal{P}}\!=\!N_o|a_{\mathcal{G}}), & \!o_c^{{a_{\mathcal{G}}}}\!\in\!\mathcal{O}_{c,1},\operatorname{max}_j \mathcal{M}_c^{a_{\mathcal{G}}}(j)\textless\zeta_g \\
                    \delta(a_{\mathcal{P}}\!=\!N_b|a_{\mathcal{G}}), & \!o_c^{{a_{\mathcal{G}}}}\!\in\!\mathcal{O}_{c,2},\operatorname{max}_j \mathcal{M}_c^{a_{\mathcal{G}}}(j)\!\geq\!\zeta_g \end{cases}
\end{equation}
If there exist objects in {$\mathcal{O}_{c,1}$} in the current scene, $\pi^0$ iteratively moves them to their goals. {Otherwise, $\pi^0$ resolves objects with circular dependency as $\pi_{\mathcal{G}}^{fvs}$ by placing some of the objects with circular dependency into the buffers. Here $\bar{\pi}_{\mathcal{P}}^0$ is rule-based}, and the grasp and place policies are formulated with $\delta$ function for convenient comparisons of those with perception noise. Then we have:
\begin{theorem}
Given a tabletop object rearrangement problem from the configuration of $M$ objects $\mathcal{O}_c$ to that of $N$ objects $\mathcal{O}_g$, $\pi^0$ is an optimal policy {under ideal perception}.
\end{theorem}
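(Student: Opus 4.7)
The plan is to prove optimality by matching a universal lower bound on the step count with the step count achieved by $\pi^0$. First I would establish the lower bound: any feasible policy must perform at least $M$ pick-and-place moves, since each of the $M-N$ non-goal objects must be removed from the workspace and each of the $N$ goal objects must be transported to its goal, and each object contributes at least one move under the one-step pick-and-place assumption. If the subgraph of $G_{dep}$ restricted to the circularly dependent vertices has a minimum feedback vertex set of size $k$, then at least $k$ additional pick-and-place moves are unavoidable, because every cycle-breaking object must first be parked in a buffer and later moved to its true goal, costing one extra operation per feedback vertex. The characterization of $k$ as the restricted minimum FVS, together with the optimality of $\pi_{\mathcal{G}}^{fvs}$ for resolving such cycles, will be cited from \cite{han2018complexity} rather than re-derived. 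The resulting lower bound on the optimal step count is $M+k$.

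Next I would upper-bound the step count of $\pi^0$ by $M+k$ using induction on $|\mathcal{O}_c|$. The inductive step splits into two cases matching the branches of $\pi_{\mathcal{G}}^0$. If $\mathcal{O}_{c,1}\neq\emptyset$, then $\pi_{\mathcal{G}}^0$ selects some $o_c^{i_0}\in\mathcal{O}_{c,1}$, and under ideal perception $\mathcal{M}_c^{i_0}$ is a Dirac mass, so the condition $\operatorname{max}_j \mathcal{M}_c^{a_{\mathcal{G}}}(j)\geq\zeta_g$ routes $\bar{\pi}_{\mathcal{P}}^0$ into the correct branch: the grasped object is placed directly at its goal if it matches one, or outside otherwise, in a single step. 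The residual problem has $M-1$ objects, and I would show that its restricted-subgraph FVS is still $k$, so the inductive hypothesis closes the step. If $\mathcal{O}_{c,1}=\emptyset$, then only circularly dependent objects remain and $\pi_{\mathcal{G}}^0$ invokes $\pi_{\mathcal{G}}^{fvs}$ paired with the buffer branch of $\bar{\pi}_{\mathcal{P}}^0$, which by \cite{han2018complexity} discharges the remaining $|\mathcal{O}_{c,2}|$ objects in $|\mathcal{O}_{c,2}|+k$ steps. Summing across the induction yields total cost $M+k$, matching the lower bound.

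The main obstacle is the invariance claim used in the first case: deleting a vertex of $\mathcal{O}_{c,1}$ from $G_{dep}$ cannot increase the minimum FVS of the residual graph. The key structural observation is that any $o_c^{i_0}\in\mathcal{O}_{c,1}$ has out-degree zero in $G_{dep}$ by definition, hence cannot lie on any directed cycle. When $\pi^0$ moves $o_c^{i_0}$, the vertex and all its incident (in-)arcs are removed, and freeing its original cell can only delete arcs of the form $(o_c^{i_1},o_c^{i_0})$ without creating new ones, since no object's goal cell becomes newly occupied. Consequently every directed cycle of the original $G_{dep}$ is preserved verbatim in the residual graph (no edges internal to a cycle are touched), and therefore the minimum FVS is unchanged. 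I would formalize this as a short graph-theoretic lemma stating that removing an out-degree-zero vertex and deleting a subset of its in-arcs preserves the minimum feedback vertex set of the restricted cyclic subgraph. Once this lemma is in hand, the induction closes cleanly and $\pi^0$ attains the $M+k$ bound, establishing optimality under ideal perception.
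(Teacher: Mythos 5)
Your proof is correct and rests on the same two pillars as the paper's: the $M+|B|$ lower bound inherited from the FVS characterization in \cite{han2018complexity}, and a matching step count achieved by $\pi^0$. The organization differs, however. The paper argues by a one-shot case analysis on which of the sets $\mathcal{O}_{c,1}$, $\mathcal{O}_{c,2}^{d^+}$, $\mathcal{O}_{c,2}\setminus\mathcal{O}_{c,2}^{d^+}$ are populated, counting the steps of each phase in bulk ($M^0$ moves to exhaust $\mathcal{O}_{c,1}$ iteratively, then $|B|$ buffer placements from $\pi_{\mathcal{G}}^{fvs}$, then the remaining $M-M^0$ objects), and it proves by contradiction that once $\mathcal{O}_{c,1}$ is exhausted only objects in $\mathcal{O}_{c,2}^{d^+}$ survive. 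You instead induct one object at a time and isolate as an explicit lemma the fact that deleting an out-degree-zero vertex of $G_{dep}$ (together with a subset of its in-arcs, and creating no new arcs because goal regions are disjoint) preserves every directed cycle and hence the minimum feedback vertex set. The paper uses this invariance only implicitly --- it silently charges the $|B|$ buffer placements computed on the \emph{original} graph to the \emph{residual} problem --- so your lemma makes rigorous a step the paper glosses over. The price of the inductive route is that you must still justify, as the paper does via its contradiction argument and as you merely assert, that when $\mathcal{O}_{c,1}=\emptyset$ every remaining object lies in $\mathcal{O}_{c,2}^{d^+}$ (each has a directed path into some cycle), so that the $\pi_{\mathcal{G}}^{fvs}$ accounting covers everything that is left. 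Net, the two arguments are equivalent in substance; yours is slightly more pedantic where the paper is slightly more terse.
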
 
{
\begin{proof}
    See Appendix~A~\cite{appendix}.
\end{proof}
}

{\bf Non-uniqueness of Grasp.} By taking the optimal policy $\pi^0$, the order of grasping objects $o_c^i \in {\mathcal{O}_{c,1}}$ into their goals does not affect the total pick-n-place steps. Thus, the optimal grasp policy is multimodal. Particularly, the optimal distribution of grasping objects in {$\mathcal{O}_{c,1}$} is uniform. Besides, the optimal rearrangement policy $\pi^0$ may not be unique as well.

{\subsection{Policy with Noisy Perception}
\label{sec:opt-noise}
}
The most affected component of the policy $\pi^0$ in Eqn.~\ref{eqn-pi0} is $\mathcal{M}_c^i(j)$. {Due to perception noise, $\mathcal{M}_c^i(j)$ is uncertain, possibly resulting in incorrect matching for the grasp policy $\pi_\mathcal{G}^0$.} For the place policy {$\bar{\pi}_\mathcal{P}^0$}, the 1-to-$N$ object matching $\mathcal{M}_h(j)$ derived from $\mathcal{M}_c^{a_\mathcal{G}}(j)$ may also be wrong. These wrong object matching can cause additional steps, impacting the optimality. 

{\bf Decoupled Perception.} Recalling Eqn.~\ref{eqn-ideal_place}, we find that the 1-to-$N$ object matching $\mathcal{M}_h(j)$ may not be necessary to derive from the $M$-to-$N$ object matching as $\mathcal{M}_c^{a_{\mathcal{G}}}(j)$:
\begin{equation}
\label{eqn-noise_place}
\pi_{\mathcal{P}}\!\left(a_{\mathcal{P}}|o_h,\!\mathcal{O}_g\right)\!\neq\!\pi_{\mathcal{P}}\!\left(a_{\mathcal{P}}|a_{\mathcal{G}},\!\mathcal{O}_g\right)
\end{equation}
In this way, the pathway of perception noise impacting the two policies is decoupled. Thus, it is possible to improve the {in-hand object matching} independently after grasping an object.

{
{\bf Improvement by In-hand Object Matching.} We formulate a policy:
\begin{equation}
\label{eqn-pi1}
\pi^1=\pi_{\mathcal{G}}^0\bar{\pi}_{\mathcal{P}}^1
\end{equation}
where $\bar{\pi}_{\mathcal{P}}^1$ is the place policy conditioned on $\mathcal{M}_h(j)$:
\begin{equation}
\label{eqn-pi1-place}
\!\!\!\bar{\pi}_{\mathcal{P}}^1\!=\! 
                    \begin{cases} \delta(a_{\mathcal{P}}\!=\!j|o_h), & \!o_h\!\in\!\mathcal{O}_{c,1},\operatorname{max}_j \mathcal{M}_h(j)\!\geq\!\zeta_g \\
                    \delta(a_{\mathcal{P}}\!=\!N_o|o_h), & \!o_h\!\in\!\mathcal{O}_{c,1},\operatorname{max}_j \mathcal{M}_h(j)\textless\zeta_g \\
                    \delta(a_{\mathcal{P}}\!=\!N_b|o_h), & \!o_h\!\in\!\mathcal{O}_{c,2},\operatorname{max}_j \mathcal{M}_h(j)\!\geq\!\zeta_g \end{cases}
\end{equation}
Similar to $\bar{\pi}_{\mathcal{P}}^0$, $\bar{\pi}_{\mathcal{P}}^1$ is also rule-based. 
We show that improvement of the in-hand object matching $\mathcal{M}_h(j)$ is valuable for the improvement of task-level performance:
\begin{theorem}
Given a tabletop object rearrangement problem from the configuration of $M$ objects $\mathcal{O}_c$ to that of $N$ objects $\mathcal{O}_g$, let total pick-n-place steps of $\pi^0$~(Eqn.~\ref{eqn-pi0}) and $\pi^1$~(Eqn.~\ref{eqn-pi1}) be $\mathcal{F}^{\pi^0}(\mathcal{O}_c|\mathcal{O}_g)$ and $\mathcal{F}^{\pi^1}(\mathcal{O}_c|\mathcal{O}_g)$, if $\mathcal{M}_h(j)$ has less matching error than $\mathcal{M}_c^{a_\mathcal{G}}(j)$ under noisy perception, then $\mathcal{F}^{\pi^0}\geq\mathcal{F}^{\pi^1}$.
\end{theorem}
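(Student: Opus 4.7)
The plan is to compare the two executions on the same (noisy) perception inputs, exploiting that $\pi^0$ and $\pi^1$ share the grasp policy $\pi_{\mathcal{G}}^0$ and the dependency-induced partition $(\mathcal{O}_{c,1},\mathcal{O}_{c,2})$, so they differ only in the place rule, which is driven by $\mathcal{M}_c^{a_{\mathcal{G}}}(j)$ in $\pi^0$ and by $\mathcal{M}_h(j)$ in $\pi^1$. Any gap in the total pick-n-place count must therefore be charged entirely to divergent place decisions, which reduces the proof to counting such divergences and their downstream cost.

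First I would formalize a per-grasp place error: given the true matched goal index $j_{a_{\mathcal{G}}}$ of $o_c^{a_{\mathcal{G}}}$ (with a distinguished sentinel when the object is a non-goal one), the thresholded argmax used by $\bar{\pi}_{\mathcal{P}}^0$ or $\bar{\pi}_{\mathcal{P}}^1$ is declared \emph{erroneous} if it disagrees with $j_{a_{\mathcal{G}}}$ under the $\zeta_g$ threshold. The hypothesis that $\mathcal{M}_h$ has less matching error than $\mathcal{M}_c^{a_{\mathcal{G}}}$ then provides a coupling under which the error indicator of $\bar{\pi}_{\mathcal{P}}^1$ is dominated by that of $\bar{\pi}_{\mathcal{P}}^0$ at each grasp, so $\pi^1$ never commits strictly more place errors than $\pi^0$.

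Next I would prove a step-cost lemma: each place error forces at least one additional pick-n-place beyond the ideal-perception optimum guaranteed by Theorem~1. This follows from a case analysis of Eqn.~\ref{eqn-pi1-place}: a goal object deposited at the wrong $o_g^j$ must later be re-grasped and re-placed; a goal object mis-routed to $N_o$ must be retrieved from outside and re-placed; and a non-goal object mis-routed to some $o_g^j$ must be evicted before the true owner can be deposited. Combining the coupling above with this lemma and summing over the shared grasp schedule yields $\mathcal{F}^{\pi^0}\geq\mathcal{F}^{\pi^1}$.

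The main obstacle will be making the coupling survive across rounds, because after a divergent place the two executions reach different intermediate scenes and subsequent invocations of $\pi_{\mathcal{G}}^0$ see different $\mathcal{O}_c$ and possibly different dependency digraphs. I would handle this by inducting on the number of objects that still need to be placed correctly and maintaining a dominance invariant: every object already correctly placed by $\pi^0$ is also correctly placed by $\pi^1$, so $\pi^1$'s residual problem is always a sub-instance of $\pi^0$'s. A secondary subtlety is the circular-dependency branch $\pi_{\mathcal{G}}^{fvs}$ and the buffer action $N_b$: since both policies classify $\mathcal{O}_{c,1},\mathcal{O}_{c,2}$ from the same $\mathcal{M}_c^i(j)$, the decision to visit a buffer is shared, and $\mathcal{M}_h$ only refines the final deposit target after the buffer detour, which keeps the step accounting monotone in the number of place errors and closes the argument.
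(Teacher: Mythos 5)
Your proposal is correct and follows essentially the same route as the paper's Appendix~B proof: both arguments fix the shared grasp policy $\pi_{\mathcal{G}}^0$, observe that the two policies diverge only at place decisions driven by $\mathcal{M}_c^{a_{\mathcal{G}}}$ versus $\mathcal{M}_h$, show by case analysis on goal occupancy that each wrong match costs at least one extra pick-n-place while a correct one does not, and accumulate over all in-hand objects. Your version is in fact more careful than the paper's --- the explicit coupling, the step-cost lemma, and the dominance invariant across rounds make rigorous the accumulation step that the paper only asserts informally.
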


\begin{proof}
    See Appendix~B~\cite{appendix}.
\end{proof}
}
{\bf Grasp Supervision.} Note that the optimal grasp with ideal perception is multimodal, now the noisy object matching and grasping capability make the optimal policy a more complex distribution, {making it hard to define rule-based solutions. Therefore, the optimal grasp policy with noisy perception should be learned.} For policy learning, if we cannot sample the grasp from this distribution as labels for supervision, there must be a bias injected into the resultant learned policy.

{In general, Theorem 1 reveals the optimal policy structure and the decoupled nature of grasp and place. Based on the structure prior, Theorem 2 indicates the importance of improving in-hand object matching for task-level performance.}

{
\subsection{Rearrangement Policy with Structure Prior}
\label{sec:system}
Guided by the insights, we propose to improve the in-hand object matching. Inspired by human behavior, we find that active perception~\cite{bajcsy1988active,aloimonos1988active,chen2011active} is an effective technique to improve perception. Specifically, active perception is to sense in a closed loop, where the robot takes actions based on the perception, and to refine the in-hand object matching. As for the grasp policy, we employ reinforcement learning guided by task-level rewards, to avoid the drawback of sole supervision.

{\bf Active Perception of In-hand Object Matching.} We introduce an active perception policy to actively reorient the in-hand object for better object matching. This policy is named as see policy $\pi_{\mathcal{S}}$, which predicts the action of active perception $a_{\mathcal{S}}$ only conditioned on the object matching between the in-hand object and the goal objects. Therefore, $\pi_{\mathcal{S}}$ can be learned independently from the whole rearrangement system, thus easing the reinforcement learning. 

{\bf See and Place.} By incorporating $\pi_{\mathcal{S}}$, $\pi_{\mathcal{P}}$ in Eqn.~\ref{eqn-noise_place} is factorized as a learning-based $\pi_{\mathcal{S}}$ and a rule-based $\bar{\pi}_{\mathcal{P}}$:
\begin{equation}
\label{eq-gsp}
\pi_{\mathcal{P}}\!\left(a_{\mathcal{S}},a_{\mathcal{P}}|o_h,\! \mathcal{O}_g\right)\!=\!\pi_{\mathcal{S}}\!\left(a_{\mathcal{S}}|o_h,\! \mathcal{O}_g\right)\!\bar{\pi}_{\mathcal{P}}\!\left(a_{\mathcal{P}}|o_h,\! \mathcal{O}_g,\!a_{\mathcal{S}}\right)
\end{equation}
Upon getting a confident object matching, the place policy $\bar{\pi}_{\mathcal{P}}$ follows Eqn.~\ref{eqn-pi1-place} to place the in-hand object to the goal. To achieve the 6-DoF aligned placement, the place policy employs the relative pose estimation between the in-hand and goal object patches after the match is determined.

{\bf Grasp.} Based on the see and place policies, we further build the learning-based grasp policy $\pi_{\mathcal{G}}$. Given RGB-D images of the current and the goal scenes, object image crops are obtained by an open-set detection module~\cite{zhou2023open}. Then the grasp policy jointly takes object crops of current and goal scenes, as well as candidate grasps as input, to output a selected grasp to execute. In this way, the grasp policy is aware of the noisy object matching, placement, and grasp capability. Besides, we employ reinforcement learning guided by task-level rewards, to avoid the drawback of supervision.  

{\bf Overall Behavior.} As a whole, our system, namely GSP, decomposes the object rearrangement process into two loops: an inner loop of {see} for {improving the in-hand object matching} and an outer loop for the planning of grasp and place~(Fig.~\ref{fig:overview}). This dual-loop structure relieves the sparse reward for RL in long-horizon tasks, and focuses on task-level performance. At inference time, GSP predicts an action sequence of grasp, see and place to manipulate an object for each step. The system imitates a human-like household behavior of rearranging a clutter of objects: According to a desired goal configuration, it first plans a grasp to pick up an object. Once the object is in hand, it confirms object matching with the desired configuration by reorienting the object to examine it from different views. Then, the object is rearranged to a planned place pose. If the in-hand object is wrongly matched, or needs to be placed in a buffer due to the occupancy of the goal, it will be considered in the subsequent process until it is placed in the goal region. Finally, the rearrangement process ends if the current scene is consistent with the goal scene.
}

{
\vspace{-0.2cm}
\subsection{Discussions}
\label{sec:system-discussion}
{{\bf Comparative Analysis.}} Based on the above analysis, we discuss the three most relevant systems dealing with unknown object rearrangement: NeRP~\cite{qureshi2021nerp}, IFOR~\cite{goyal2022ifor} and Selective Rearrangement~\cite{tang2023selective}. NeRP learns grasp and place from demonstrations collected by a model-based expert. IFOR deploys a rule-based planner that greedily picks the objects with non-occupied goals and large pose transformations. Selective Rearrangement selects objects by comparing graph editing distance, which also follows a greedy strategy. These strategies act as per step labels, and it is hard to verify whether they follow the correct optimal grasp distribution, probably bringing bias to the learned policy. Lastly, none of them mention the importance of the decoupled perception for task-level improvement, thus the performance is bounded by the initial difficult $M$-to-$N$ object matching. 

{\bf 6-DoF Transformation Tasks.} For 6-DoF transformation tasks, things can be more challenging than 2-DoF or 3-DoF ones, which is reflected in the object matching and placement. For object matching, the wrong cases can be more frequent. Thereby active perception is especially beneficial for 6-DoF transformation tasks. For object placement, objects may need to be reoriented several times to achieve their goal poses. However, firstly, it does not affect the grasp policy {\it i.e.} the object moving order. Secondly, for the place of such kind of objects, the additional steps of object reorientation can be the same for all policies. Therefore, the theorems still hold. That is, the optimality of $\pi^0$ and the decoupling property of noisy perception remain the same. Given the second point, we assume that each object can be moved to its goal pose with one step of pick-n-place when the goal is free. 
}

\begin{figure}[t]
  \centering
  \includegraphics[width=\linewidth]{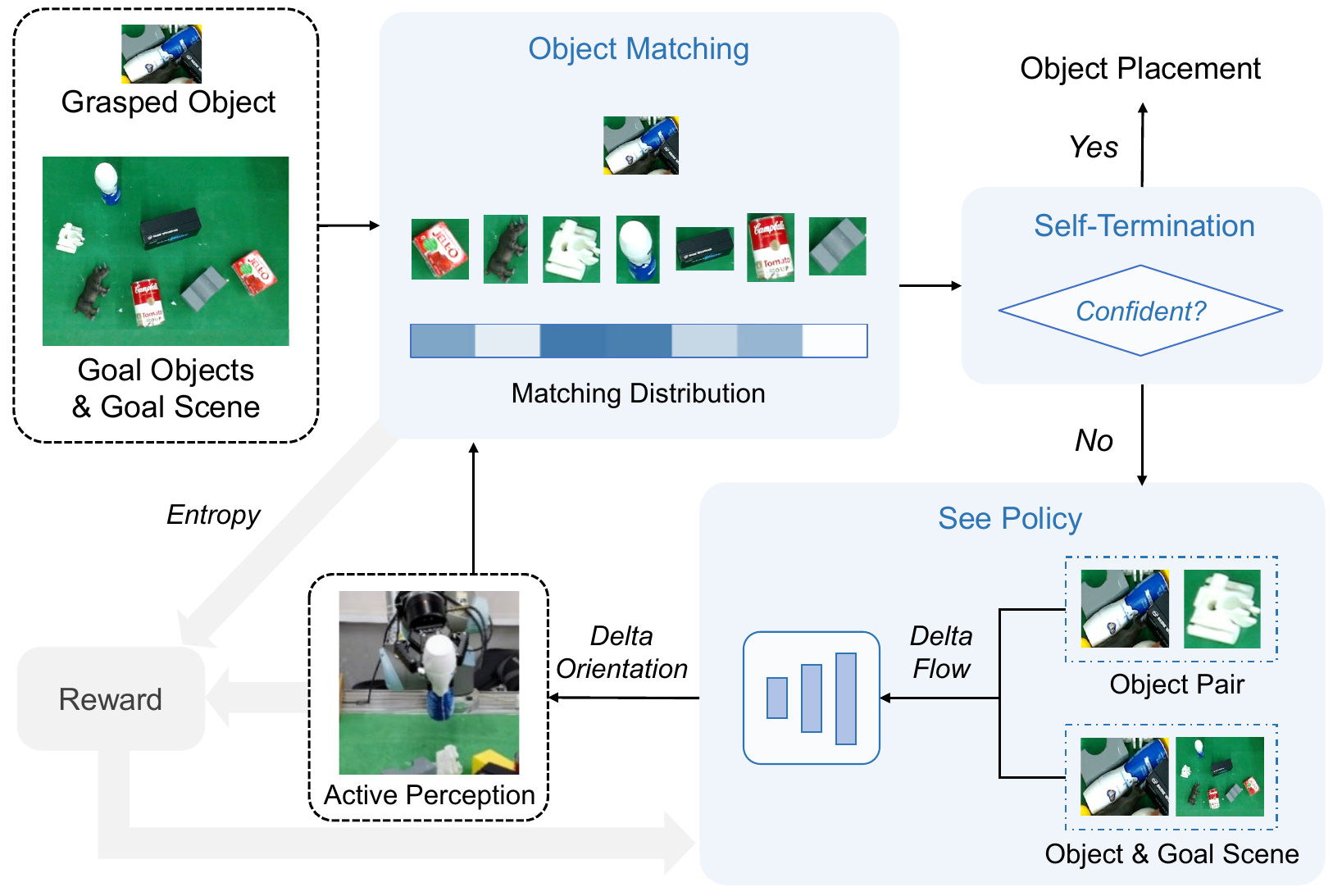}
  \vspace{-0.7cm}
  \caption{{Architecture of the inner loop, which consists of four key components: object matching, self-termination, see policy, and reward formulation. The four components form a closed loop for self-confident object matching.}}
  \label{fig:as-architecture}
  \vspace{-0.6cm}
\end{figure}

{
\vspace{-0.2cm}
\section{Inner Loop: See}
\label{active_seeing}
}

We propose to learn {see} policy $\pi_{\mathcal{S}}$ as the inner loop to improve {the in-hand object matching}. By actively reorienting the in-hand object, $\pi_{\mathcal{S}}$ gets multi-view observations for self-confident object matching. Different from the previous works that implement ResNet~\cite{he2016deep} for object matching, we employ the foundation model CLIP~\cite{radford2021learning} to capture object-level features and enable zero-shot generalization. Additionally, CLIP is utilized to densify the reward and self-termination. To guide the fine-grained reorientation, we follow IFOR~\cite{goyal2022ifor} to use optical flow for policy observation, which extracts fine-grained correspondences of images. 

Fig.~\ref{fig:as-architecture} illustrates the architecture of the inner loop, which consists of four key components: object matching, self-termination, {see} policy, and reward formulation. Given the in-hand object crop {$o_h$}, the object matching module pairs it with the goal objects $\mathcal{O}_g$, followed by the self-termination module determining whether the matching result is confident. If not confident, the {see} policy predicts the delta orientation of the end-effector to rotate the grasped object to another view for matching. To learn the {see} policy, a dense reward mechanism based on the matching results is proposed. Our system will jump out of the inner loop to object placement as soon as it receives a confident matching. 
\vspace{-0.2cm}
\subsection{Object Matching and Self-confidence}
\label{matcher}
Once successfully grasping an object {$o_h$}, the robot is supposed to identify whether the grasped object is in the goal scene and, if so, identify the matched goal object in $\mathcal{O}_g$. 

{\bf Modeling of $\mathcal{M}_{h}(j)$.} Considering $\Phi$ as the image encoder, the correlations between the grasped object {$o_h$} and the goal objects $\mathcal{O}_g$ can be represented as the cosine similarities of their image embeddings. We formulate the matching from {$o_h$} to $\mathcal{O}_g$ as a one-shot classification problem, then the matching distribution $\mathcal{M}_{h}(j)$ can be represented as the softmax distribution of correlations between {$o_h$} and $\mathcal{O}_g$:
\begin{equation}
\label{eqn-visual_emb}
e_o=\frac{\Phi(o)}{|\Phi(o)|}, o \in \{o_h, \mathcal{O}_g\}\\
\end{equation}
\begin{equation}
\label{eqn-match_dist}
\mathcal{M}_{h}(j)=\mathrm{Softmax}\left\{\left\langle e_{o_h}, e_{o_g^j}\right\rangle\right\}_{j=1, ., N}
\end{equation}
where $e_o$ denotes the normalized image embedding of the object image crop $o$, $\langle\cdot, \cdot\rangle$ represents the operation of cosine similarity {\it i.e.} inner-product.

{\bf Matching Confidence.} Given the matching distribution $\mathcal{M}_{h}(j)$, the matched goal object is formulated as:
\begin{equation}
o_g^{j_h}=o_g^{\operatorname{argmax}_j \mathcal{M}_{h}(j)}
\end{equation}
Accordingly, the matching score and distribution entropy are formulated as:
\begin{equation}
\label{eqn-score}
m_{h}=\operatorname{max}_j \mathcal{M}_{h}(j)
\end{equation}
\begin{equation}
\label{eqn-ent}
H_h=-\sum_j \mathcal{M}_{h}(j) \log \mathcal{M}_{h}(j)
\end{equation}
Note that $m_{h}$ manifests the matching confidence, and $H_h$ can represent the matching uncertainty.

{\bf CLIP Feature based Matching.} We propose to leverage the pre-trained vision-language model CLIP~\cite{radford2021learning} for object matching, which aligns text and image features into a common space on vastly large-scale datasets. With semantic priors, CLIP cares more about the global information of the object image crop, and endows the robot with the capability of zero-shot generalization to unseen objects. 

{\bf CLIP-Adapter.} In order to achieve better object matching performance in our environments, we further finetune CLIP with CLIP-Adapter~\cite{gao2021clip} which adopts a simple residual transformation layer over the CLIP feature tokens. Specifically, we apply a visual adapter $\mathrm{A}_v(\cdot)$ with the gating ratio $\epsilon=0.2$ to balance and mix the knowledge from the original tokens and the adapter outputs. Eqn.~\ref{eqn-match_dist} is accordingly updated:
\begin{equation}
\label{eqn-visual_emb_finetune}
{\hat e}_o=\epsilon \mathrm{A}_v(e_o)+(1-\epsilon) e_o
\end{equation}
\begin{equation}
\label{eqn-match_dist_new}
\mathcal{M}_{h}(j)=\mathrm{Softmax}\left\{\left\langle {\hat e}_{o_h}, {\hat e}_{o_g^j}\right\rangle\right\}_{j=1, ., N}
\end{equation}
\vspace{-0.5cm}

\begin{figure}[t]
  \centering
  \includegraphics[width=\linewidth]{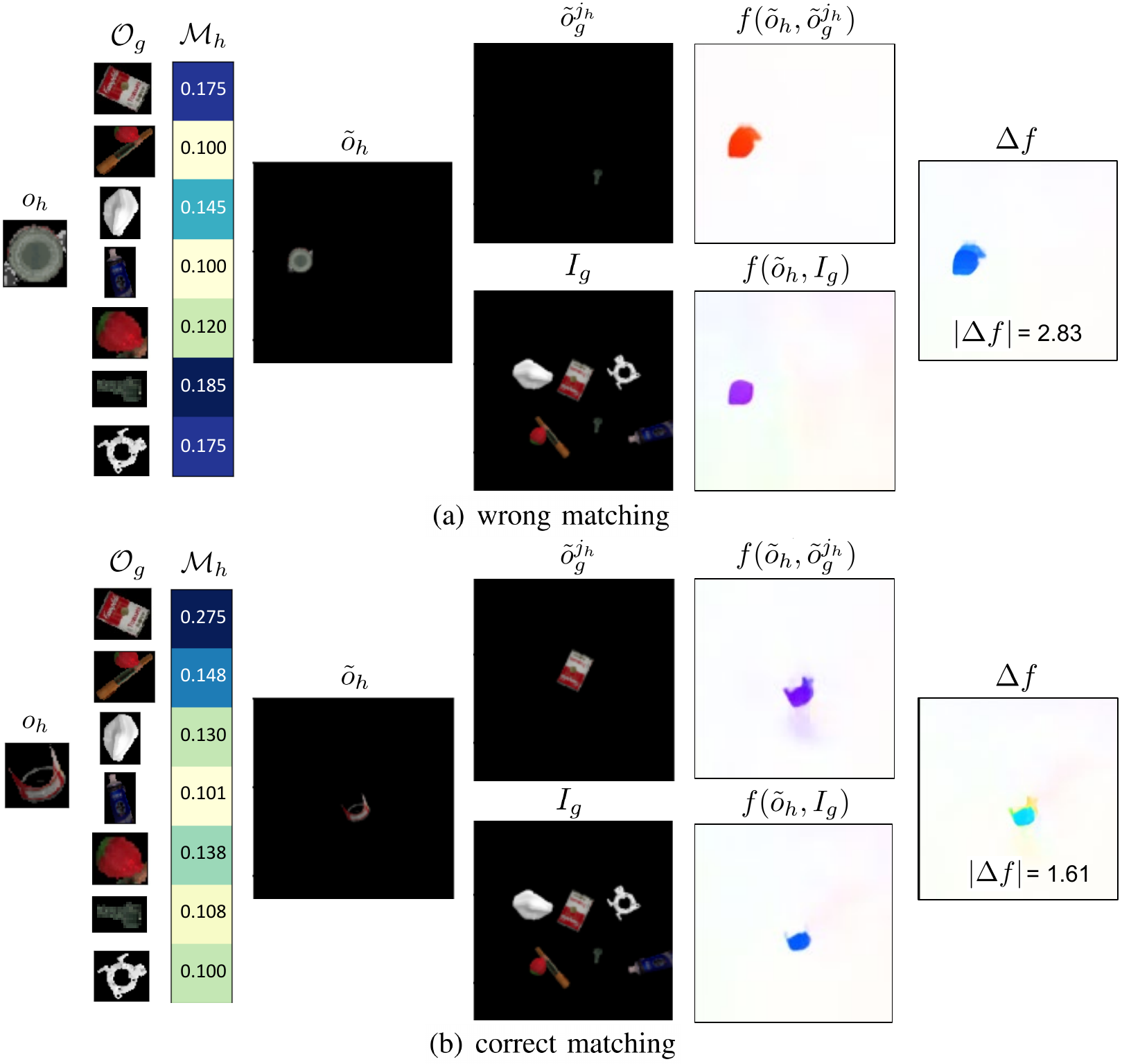}
  \vspace{-0.7cm}
  \caption{{State representation of the {see} policy. The robot is supposed to rotate the tomato soup can for a confident matching to a set of goal objects. We visualize two cases of scene representation including (a) wrong matching and (b) correct matching with the same flow field color coding in \cite{teed2020raft}. For each case, optical flows $f$ are generated between the grasped object padding image $\tilde{o}_h$ and the current-matched object padding image $\tilde{o}_g^{j_h}$~(object flow), as well as the goal image $I_g$~(global flow). Then their delta flow $\Delta f$ is the state representation of the see policy, marked with the average magnitude. Note that the image crop of the grasped object is disturbed by the gripper, thus bringing noise for matching.}}
  \label{fig:state-representation}
  \vspace{-0.7cm}
\end{figure}

{
\subsection{See Policy}
\label{sec:as_policy}
}
Based on the object matching result $\mathcal{M}_h(j)$, the {see} policy $\pi_\mathcal{S}$ is to rotate the grasped object {$o_h$} for more reliable matching~({\it i.e.} to improve the matching confidence), and determine when to stop the execution of {see}. There are several challenges to learning such a policy, including what to formulate the policy state, how to guide policy training, and when to stop the policy execution. 

{{\bf Policy State.}
For the active perception process, the policy state is supposed to represent the differences of the in-hand object in different views. Although we leverage CLIP~\cite{radford2021learning} for object matching, it is hard to provide such representation, as CLIP focuses more on high-level semantic information ({\it i.e.} object-level information). }Following \cite{goyal2022ifor}, we generate dense pixel-wise correspondences between images using RAFT~\cite{teed2020raft}. Such representation contains fine-grained matching information that is sensitive to pose transformation. These pixel-wise correspondences guide the prediction of fine-grained actions and manifest the object-level matching information. 

To generate the optical flow of the grasped object, object representation is processed into the padding image, which is of the same size as the RGB image of the scene, but masked outside the bounding box of the grasped object. Denoting the padding image of the grasped object as {$\tilde{o}_h$}, the padding image of matched goal object as $\tilde{o}_g^{j_h}$, we represent the object optical flow from {$\tilde{o}_h$} to $\tilde{o}_g^{j_h}$ as {$f(\tilde{o}_h, \tilde{o}_g^{j_h})$}, and the global optical flow from {$\tilde{o}_h$} to $I_g$ as {$f(\tilde{o}_h, I_g)$}. We notice that these two optical flows are similar if the matching is of high confidence. That is, the delta flow of {$f(\tilde{o}_h, \tilde{o}_g^{j_h})$} and {$f(\tilde{o}_h, I_g)$} indicates the extent to which dense correspondences are affected by non-matched objects. Thus, the delta optical flow can serve as the representation of pixel-wise matching uncertainty between {$o_h$} and $\mathcal{O}_g$, which is formulated as the state of the {see} policy $\pi_\mathcal{S}$:
\begin{equation}
\Delta f=f(\tilde{o}_h, \tilde{o}_g^{j_h})-f(\tilde{o}_h, I_g)
\end{equation}

Fig.~\ref{fig:state-representation} shows two cases of state representation for wrong matching and correct matching. The direction and magnitude of optical flow are visualized with color coding with same as \cite{teed2020raft}, and the darker color mirrors the larger magnitude. In these two cases, the robot is supposed to rotate the tomato soup can for a confident matching to a set of goal objects. We can see that when the matching is correct, the average magnitude of the delta flow is smaller than that of the wrong matching, which reflects the influence of non-matched objects. Note that the image crop of the grasped object is disturbed by the gripper, thus bringing noise for matching. 

{{\bf Policy Action.}} Given the state $\Delta f$, the policy $\pi_\mathcal{S}$ predicts the active {perception} action $a_{\mathcal{S}}$, which is defined as the continuous 3-DoF delta orientation angles~({\it i.e.} pitch, row, yaw) of the end effector from its current pose. The execution of $a_{\mathcal{S}}$ leads to reorientation of the in-hand object {$o_h$}, creating a new view for object matching. As it is hard to annotate an in-hand reorientation action with its {``see''} quality, we learn $\pi_\mathcal{S}$ through reinforcement learning. 

{\bf Reward Formulation.} Considering that the target of the {see} policy is to improve the matching confidence, we propose to guide the policy training with the delta entropy of matching distribution, which can represent the change of matching uncertainty. If the matching distribution entropy is reduced after rotating the grasped object, the rotating action is regarded as effective for more confident matching, providing dense rewards to improve the sample efficiency. We define the $\mathds{1}_{m}$ for the matching result (Eqn.~\ref{eqn-match_success}), and an episode ends if the object is correctly matched. Also, $\mathds{1}_{r}$ denotes the rotate completion result (Eqn.~\ref{eqn-rotate_success}). If the robot successfully plans and executes the predicted action, then the rotation process is regarded as completed. 
\begin{equation}
\label{eqn-match_success}
\mathds{1}_{m}= \begin{cases}1, & \text{if correctly matched} \\ 0, & \text{otherwise}\end{cases}
\end{equation}
\begin{equation}
\label{eqn-rotate_success}
\mathds{1}_{r}= \begin{cases}1, & \text {if completely rotated} \\ 0, & \text{otherwise}\end{cases}
\end{equation}

Our reward function is defined as follows.
\begin{equation}
\label{eqn-reward}
R_{\mathcal{S}}=\mathds{1}_{m}+\Delta H_h-P_{\mathcal{S}} \\
\end{equation}
where $\Delta H_h$ represents the delta entropy of the matching distribution before and after active {perception} (Eqn. \ref{eqn-deltaent}). And $P_{\mathcal{S}}$ is the penalty function defined as the weighted combination of rotation failure and action magnitude (Eqn.~\ref{eqn-punish}).
For the {see} policy, large object rotation ({\it i.e.} large action magnitude) may result in unsafe execution {\it e.g.} self-collision. Therefore, a weighted penalty of the action magnitude $\left\|a_{{\mathcal{S}}}\right\|$ is conducted.
\begin{equation}
\label{eqn-deltaent}
\Delta H_h=H_h^{\text{before}\_\text{see}}-H_h^{\text{after}\_\text{see}} \\
\end{equation}
\vspace{-0.5cm}
\begin{equation}
\label{eqn-punish}
P_{\mathcal{S}}=\lambda \left(1-\mathds{1}_{r}\right)+\mu\left\|a_{{\mathcal{S}}}\right\| \\
\end{equation}

{\bf Self-Termination.} During policy training, the robot receives the ground-truth terminal signal of policy execution {\it i.e.} the robot gets ground-truth feedback of matching correctness. However, in actual application, there is no ground-truth terminal signal and the policy is supposed to self-terminate the execution. For active {perception}, the execution termination depends on the matching confidence of the grasped object to the goal objects {\it i.e.} $m_{h}$. If the matching confidence exceeds a threshold $\zeta_m$, the matching is regarded as valid, and then the object will be placed. This threshold $\zeta_m$ is namely the confidence-based terminal signal. It can be derived from the matching distribution, or be learned by the training scheme.

{
\subsection{Implementation Details of CLIP Finetuning}
\label{as_details}
To achieve better matching performance, we finetune CLIP with CLIP-Adapter~\cite{gao2021clip}. 

\label{sec:as-data-collection}
{\bf Data Collection.} We collect 3k samples for CLIP finetuning in PyBullet~\cite{coumans2021} with a statically-mounted camera of Intel RealSense L515 overlooking the tabletop. Camera images are projected as heightmaps as in \cite{zeng2021transporter}. The object models are from GraspNet-1Billion~\cite{fang2020graspnet}. During the data collection process, 5$\sim$7 objects are sampled to be randomly dropped into the workspace to generate object crops as the support set. Each of them, as well as 5 other objects are separately dropped into the workspace with random poses to generate object crops as queries. Each query object crop pairs with the support set to form a sample. For simplicity, we generate object bounding boxes for object image crops from the mask image in Pybullet, of which each pixel denotes the index of the object visualized in the camera. We label the samples that have a matched object in the support set with one-hot encodings, while labeling others with uniform distributions. 

{\bf Training Details.} The adapter is trained with Kullback-Leibler divergence and Adam optimizer for 400 epochs. Step-scheduled learning rates are conducted from the initial learning rate at $1\times 10^{-3}$, with the decay ratio as 0.5 and the step size of 50 epochs.

\subsection{Implementation Details of Policy Learning}
We learn the {see} policy $\pi_\mathcal{S}$ with reinforcement learning in simulation. Our training environment involves a UR5 arm, a ROBOTIQ-85 gripper and a statically-mounted camera of Intel RealSense L515 in PyBullet. 

{\bf Data Collection.} During the training process, we randomly sample an object to grasp for active perception and 5$\sim$7 objects containing the object to grasp as the goal object set. For each episode, the object to grasp is dropped into the workspace with a random pose, and graspnet~\cite{fang2020graspnet} predicts a set of feasible 6-DoF grasp poses. Then the robot randomly chooses a grasp to execute until it picks up the object, and conducts object matching. If the matching from the finetuned CLIP is incorrect, the robot plans to a pre-defined pose as the initial pose for see, followed by several predicted see actions upon this initial pose. The episode is regarded as successful if the matching is correct within 5 see steps. 

{\bf Hyperparameters.} For the weighted penalty, we use $\lambda\!=\!0.2, \mu\!=\!0.18$. For confidence-based terminal signals, 
we use the threshold of $\zeta_m\!=\!\frac{1}{N}+0.12$ for finetuned CLIP. More details can be seen in Appendix~C~\cite{appendix}.
}

\section{Outer Loop: Grasp and Place}
Based on the analysis in Sec.~\ref{sec:opt-noise}, the perception of grasp and place can be decoupled, which enables the independent learning of grasp and see. Given the {see} policy, we build the outer loop of grasp and place for the whole rearrangement process. Since the learned {see} policy $\pi_{\mathcal{S}}$ provides closed-loop perception for place, the place policy {$\bar{\pi}_{\mathcal{P}}$} can follow Eqn.~\ref{eqn-pi1-place}. Instead, the grasp policy $\pi_{\mathcal{G}}$ needs to consider the uncertainty of object matching and grasping, which is hard to analytically formulate. Therefore, we focus on the learning of $\pi_{\mathcal{G}}$. 

\begin{figure}[t]
  \centering
  \includegraphics[width=\linewidth]{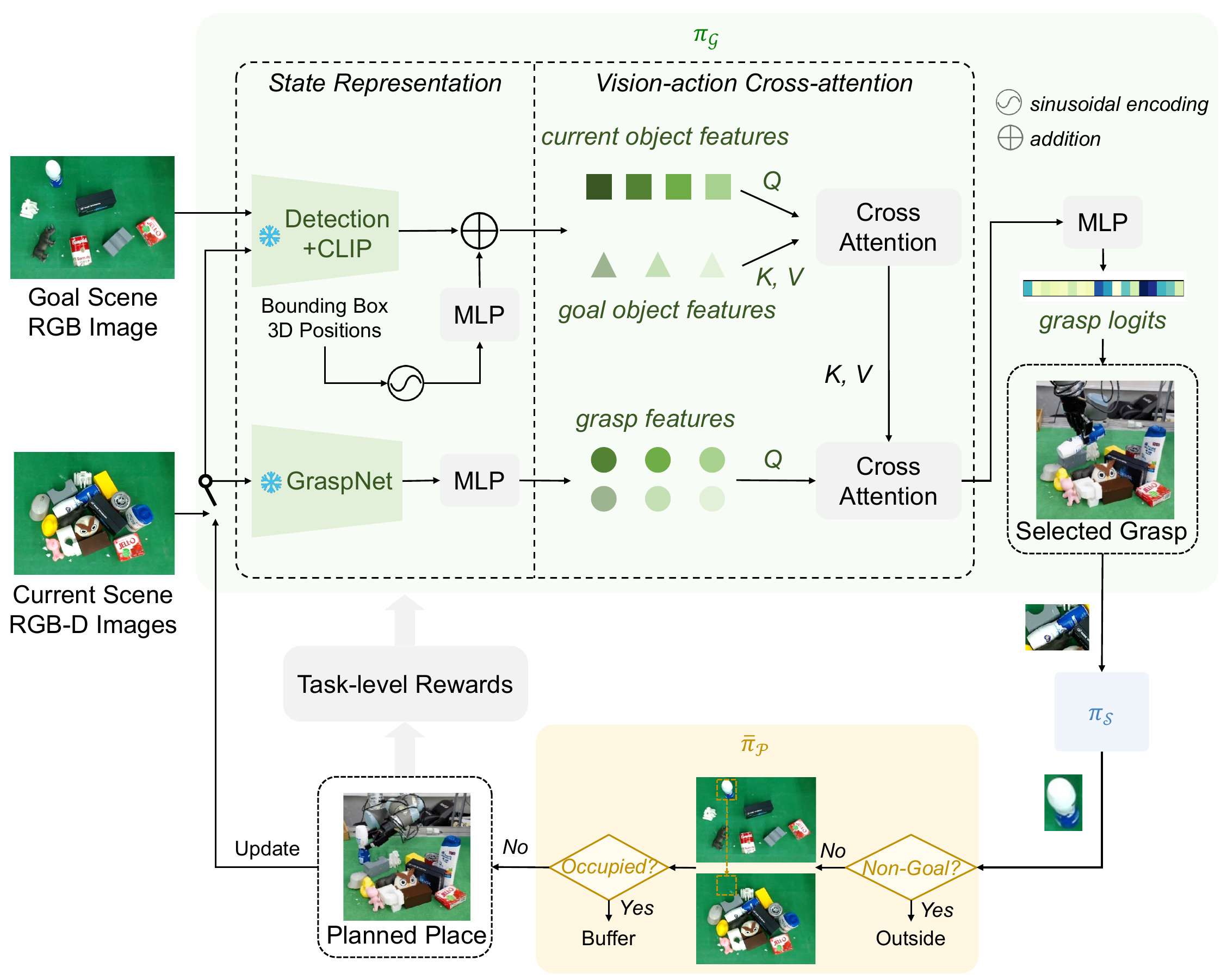}
  \vspace{-0.6cm}
  \caption{Architecture of the outer loop, which consists of grasp and place policy. The grasp policy $\pi_{\mathcal{G}}$ takes as input current and goal images, as well as candidate grasp poses to select a grasp to execute. Specifically, it jointly models object selection and action planning as available grasp evaluation. $\pi_{\mathcal{G}}$ fuses current object features, goal object features and grasp action features by cross-attention. The place policy {$\bar{\pi}_{\mathcal{P}}$} follows Eqn.~\ref{eqn-pi1-place} based on the matching result of the {see} policy $\pi_{\mathcal{S}}$.}
  \label{fig:outer-loop}
  \vspace{-0.6cm}
\end{figure}

As shown in Fig.~\ref{fig:outer-loop}, the grasp policy $\pi_{\mathcal{G}}$ takes as input the current and goal images as well as the candidate grasps, and selects a grasp to execute. Instead of separately conducting object selection and grasp prediction of the corresponding object, we directly conduct grasp selection of the currently available grasp actions. This is more feasible in clutter, avoiding the problem of selecting the object without an available grasp. After grasping an object, the {see} policy $\pi_{\mathcal{S}}$ obtains a self-confident matching, followed by the place policy {$\bar{\pi}_{\mathcal{P}}$} moving the object as Eqn.~\ref{eqn-pi1-place}. Then the current image is updated for the next grasp, forming the outer loop. The grasp policy is trained through RL guided by task-level rewards based on the place result, avoiding the drawback of per-step supervision. Notably, built on learned $\pi_{\mathcal{S}}$ and {$\bar{\pi}_{\mathcal{P}}$}, the sparse reward problem in the long-horizon task is relieved.

\subsection{Grasp Policy}
{\bf Policy Architecture.} Fig.~\ref{fig:outer-loop} shows the policy architecture of $\pi_{\mathcal{G}}$. The grasp policy takes as input the current object image crops $\mathcal{O}_c$ as well as the goal object image crops $\mathcal{O}_g$ from the detection module, and generates visual features of all image crops by finetuned CLIP image encoder. Each object visual feature is fused with the embedding of the bounding box center position to generate object visual-position feature. Also, we utilize the pre-trained graspnet~\cite{fang2020graspnet} to yield a set of grasp poses $\mathcal{A}_\mathcal{G}=\{p_{\mathcal{G}}^k\}_{k=1,...,L}$ that are encoded to action features. Given these feature embeddings, we fuse vision and action information with cross-attention. Then a followed MLP takes as input the cross-attention features, and outputs the grasp logits for grasp selection. 

{{\bf Policy State.}} We obtain object bounding boxes to generate object image crops as object-centric representations, and yield visual features through finetuned CLIP described in Sec.~\ref{matcher}. Specifically, given the RGB image, bounding boxes are extracted from the detection module, and object crops are obtained by cropping the raw RGB image with corresponding bounding boxes. Then each object crop is fed into the finetuned CLIP image encoder to get the object visual feature {${\hat e}_{o}$} by {Eqn.~\ref{eqn-visual_emb_finetune}}. Also, we obtain 3D position $pos_o=(x, y, z)$ of each bounding box center by transforming pixel coordinates to world coordinates with camera extrinsics for $(x, y)$ and getting the depth of the center pixel for $z$. The 3D position is then projected into a nonlinear space (positional embedding as in \cite{mildenhall2020nerf}), followed by an MLP to encode the object position embedding $e_{pos}$:
\begin{equation}
\label{eqn-pos_emb}
e_{pos}=\mathrm{MLP_1}(\mathrm{PE}(pos_o))
\end{equation}
For the current scene and the goal scene, object visual features and position embeddings are generated in the same way, denoted as $\{{\hat e}_{o_c^i}\}_{i=1,...,M}$, $\{{e}_{pos_c^i}\}_{i=1,...,M}$, $\{{\hat e}_{o_g^j}\}_{j=1,...,N}$, $\{{e}_{pos_g^j}\}_{j=1,...,N}$ respectively. Besides, we adopt the pre-trained model of graspnet~\cite{fang2020graspnet} to get a set of grasp poses of the whole scene. Graspnet~\cite{fang2020graspnet} is a 6-DoF grasp detector taking as input a scene point cloud, and predicting $L$ 6-DoF grasp poses $\{p_{\mathcal{G}}^k\}_{k=1,...,L}$, which are encoded into $L$ action embeddings $\{e_{p_{\mathcal{G}}^k}\}_{k=1,...,L}$ by MLP:
\begin{equation}
\label{eqn-grasp_emb}
e_{p_{\mathcal{G}}^k}=\mathrm{MLP_2}(p_{\mathcal{G}}^k)
\end{equation}

{{\bf Policy Action.}} We fuse the embeddings from vision and action by transformer’s attention mechanism \cite{vaswani2017attention}: 
\begin{equation}
\text{Attention}(Q, K, V)=\text{Softmax}\left(Q K^T\right) V \\
\end{equation}
where $Q, K, V$ denote query, key and value respectively. 

First, we fuse each object's visual feature with its position feature. Each object position embedding $e_{pos}$ is fused with the corresponding object visual feature {${\hat e}_{o}$} by adding: 
\begin{equation}
\label{eqn-op_fusion}
e_{op} = {\hat e}_{o} + e_{pos}
\end{equation}
Object visual-position features $e_{op}$ of the current scene and the goal scene are fused by a cross-attention module with $M$ current scene object visual-position features $\{e_{{op}_c^i}\}_{i=1,...,M}$ as queries, $N$ goal scene object visual-position features $\{e_{{op}_g^j}\}_{j=1,...,N}$ as keys and values, generating $M$ visual fusion features $\{e_{{op}_{cg}^i}\}_{i=1,...,M}$. These features implicitly represent the information of object matching and placement. 

Then, another cross-attention module takes $L$ grasp features $\{e_{p_{\mathcal{G}}^k}\}_{k=1,...,L}$ as queries and $M$ visual fusion features $\{e_{{op}_{cg}^i}\}_{i=1,...,M}$ as keys and values. In this way, we combine the information of object matching, placement as well as candidate grasps.

After two stages of feature fusion, we get $L$ vision-action cross-attention features $\{e_{{opa}^k}\}_{k=1,...,L}$, which jointly consider object matching, placement as well as grasp capability. These features are then fed into an MLP network to output logits of $L$ grasp poses $\{l_a^k\}_{k=1,...,L}$:
\begin{equation}
\label{eqn-logits}
l_a^k =\mathrm{MLP_3}(e_{{opa}^k})
\end{equation}

Finally, the grasp pose with the highest logit is chosen for execution {\it i.e. }$p_{\mathcal{G}}^{k^\star}$:
\begin{equation}
k^\star =\operatorname{argmax}_k l_a^k 
\end{equation}

{\bf Reward Formulation.} According to Sec.~\ref{sec:opt-noise}, the grasp distribution is multimodal conditioned on subjective uncertainty factors, which is hard to learn by supervision. Simply imitating the demonstration data labeled with a feasible grasp at each step may bring bias into the policy. Instead, our grasp policy is trained through reinforcement learning guided by task-level rewards, avoiding the drawback of per-step supervision. The policy and critic MLPs both take $L$ cross-attention features as input, and output logits and Q values of $L$ grasp poses. Note that our policy and critic are capable of processing a variable number of actions by parallel processing $L$ cross-attention features, which corresponds to $L$ grasps. We use a task-level reward function as: 
\begin{equation}
\!\!R_{\mathcal{G}}\!=\!\left\{\begin{array}{ll}
1, &\!\text{if\;placing\;the\;object\;to\;its\;desired\;pose} \\
0, &\!\text{if\;placing\;the\;object\;to\;the\;buffer} \\
\!\!\!-1.5, &\!\text{if\;re-placing\;object\;that\;is\;at\;desired\;pose}
\end{array}\right.\\\!\!\!\!\!\!
\end{equation}
When the object is placed in the buffer, the robot gets no reward, which implicitly hinders the grasp of objects whose matched goal is occupied. By guiding the grasp policy with task-level rewards, we consider the task-level efficiency, and can implicitly learn the multimodality of the optimal policy.

\subsection{Place Policy} 
{
When the object is confidently matched, the robot places it to its desired configuration. Given the self-confident matching from $\pi_{\mathcal{S}}$, the place policy $\bar{\pi}_{\mathcal{P}}$ follows Eqn.~\ref{eqn-pi1-place}. {The buffer region includes pixels that are currently unoccupied and do not overlap with any goal regions.} If the grasped object needs to be moved to the buffer, a pixel will be sampled from the region of the buffer for object placement. 

In simulation, the robot has access to the privileged data including the current and goal poses of each object. Considering that we focus more on the learning of $\pi_{\mathcal{G}}$ and $\pi_{\mathcal{S}}$ in this work, and $\bar{ \pi}_{\mathcal{P}}$ brings the same effect to all policies, for simplicity, we calculate the ground-truth 6-DoF pose transformations between the current and goal poses of each object. Note that the goal pose is the pose of the predicted matched goal object. In the real world, object placement is a relative pose estimation problem studied by previous works~\cite{goyal2022ifor,tang2023selective}. This process is implemented by off-the-shelf techniques in our system. We first extract feature correspondence points between the grasped object and its matched goal object by SuperGlue~\cite{sarlin2020superglue}, then predict transformation by RANSAC registration. {Here we use SuperGlue for pose estimation instead of optical flow, as it extracts better correspondences when two objects are successfully matched.} Sometimes SuperGlue\!~\cite{sarlin2020superglue} may fail to extract enough~({\it i.e.} $\geq$5) correspondence points. In these cases, we heuristically set several gripper orientations facing downward or forward for choices, and the one nearest to the final gripper pose is executed.
} 

{
\subsection{Implementation Details of Policy Learning}
\label{sec:grasp-train}
}
Our object models are from GraspNet-1Billion, and object bounding boxes are generated the same way as Sec.~\ref{as_details}. Note that this representation is not a ground-truth bounding box, but considers object occlusion. 

{
{\bf Behavior Cloning Pre-training.} To ease the training of the grasp policy, we first pre-train the policy by behavior cloning. 

We collect 560 sequences of demonstration data. The demonstration sequences are generated by a rule-based planner similar to IFOR~\cite{goyal2022ifor}, which first removes the non-goal objects, then grasps objects that can be directly rearranged to their goal poses, and if none of the objects can be directly rearranged to the goals, randomly grasps objects to the free buffer space in the workspace. For each step through the demonstration sequence, the grasp action selected by the rule-based planner is labeled as 1, with other candidate grasps labeled as 0. Note that during the collection of demonstration data, the agent always has access to the ground-truth object matching and grasping capability. 

By learning from rule-based demonstration data following the optimal policy proved in Sec.~\ref{sec:opt-ideal}, we guide the policy to greedily grasp objects that can be directly placed to the goals, and then resolve the remaining objects with the aid of buffer. In this stage, the policy is trained with cross-entropy loss and AdamW optimizer for 200 epochs. We use the step scheduler for learning rate tuning, which is initially at $1\times 10^{-3}$, with the decay ratio as 0.4 at the step size of 40 epochs.

{\bf Reinforcement Learning.} Based on the behavior cloning pre-training, the grasp policy is further trained with a reinforcement learning stage. 

For each episode, 5$\sim$7 objects are randomly placed into the workspace as the goal scene. Then 6-DoF pose transformations are deployed to these objects with 5 other objects added as non-goal objects to form the initial scene. If all the goal objects are placed at the desired poses and the non-goal objects are placed outside, the episode is regarded as completed. If the planning step exceeds 30, the episode ends as a failure. 

Since the critic network is not trained through behavior cloning, directly training the networks from scratch may bring wrong guidance of exploration from the critic loss. As a result, we first fix the network parameters of the policy MLP as well as the feature encoder to train the critic MLP for 8k iterations as pre-training. Then the policy and critic networks are trained through reinforcement learning for 1.5k episodes. More details can be seen in Appendix~D~\cite{appendix}.
} 

{
\section{Experiments of See}
}
In this section, we carry out a series of experiments to evaluate the {see} policy proposed in Sec.~\ref{active_seeing}. The goals of the experiments are: 1) to validate our confidence-based self-termination mechanism; 2) to demonstrate the effectiveness of our {see} policy in improving object matching under perception noise; and 3) to evaluate the generalization performance of our policy on unseen objects. 

\subsection{Experimental Setup}
{\bf Test Settings.} Our testing cases are collected in PyBullet with the same case generation procedure in Sec.~\ref{sec:as-data-collection}, employing a different random seed. {Example visualizations of the environmental setup are shown in Fig.~\ref{fig:as-cases}.} 

{{\bf Data Collection.}} For each case, the robot grasps the object and rotates it to a new view based on the current object matching result. To highlight the improvement over the initial object matching, we further collect cases where CLIP initially struggles to provide the correct match. Cases are generated with seen and unseen objects from GraspNet-1Billion (48 object models as seen ones, and 24 as unseen ones). {Overall, there are 300 testing cases divided according to different standards. For CLIP-based methods, the testing cases are divided into 4 testing settings: seen normal~(SN), seen hard~(SH), unseen normal~(UN) and unseen hard~(UH). ``Normal'' represents the cases collected in the actual applications, while ``hard'' refers to cases where CLIP frequently fails initially. For comparison to non-CLIP-based methods, the testing cases are divided according to task difficulties into 4 testing settings: less clutter~(LC), heavy clutter~(HC), non-similar appearance~(NA) and similar appearance~(SA). LC and HC denote the clutter level of the goal scene, while SA and NA indicate whether there are objects with similar visual appearances. Example testing cases are visualized in Fig.~\ref{fig:as-cases} and Appendix~E~\cite{appendix}.} For fair comparison, all methods execute the same grasp actions for the same testing cases. 

{\bf Evaluation Metrics.} We evaluate the methods with a series of test settings. Each test setting is measured with 2 metrics:
\begin{itemize}
    \item {\bf Match Success}: the average percentage of match success rate over all test runs. If the robot gets correct object matching within 5 action attempts in a run, then the task is considered successful and completed.
    \item {{\bf See Steps}}: the average motion number of active {perception} per task completion.
\end{itemize}

\begin{figure}[t]
  \centering
  \includegraphics[width=\linewidth]{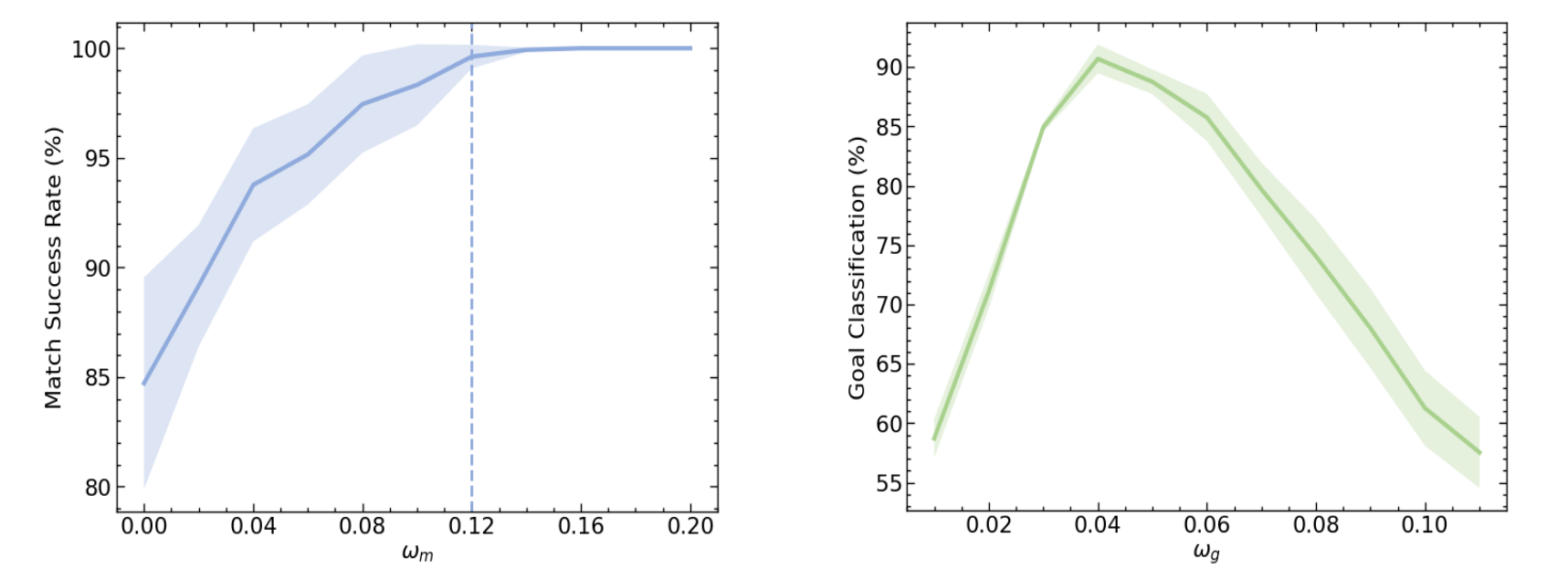}
  \vspace{-0.75cm}
  \caption{Match success rate (left) across different confidence offsets upon the uniform distribution. We choose $\omega_m=0.12$ for the confidence-based terminal signal. Goal classification (right) across different confidence offsets upon the uniform distribution. We choose $\omega_g=0.04$ for the boundary of goal and non-goal objects.}
  \label{fig:terminal-signal}
  \vspace{-0.45cm}
\end{figure}
\begin{figure}[t]
  \centering
  \includegraphics[width=\linewidth]{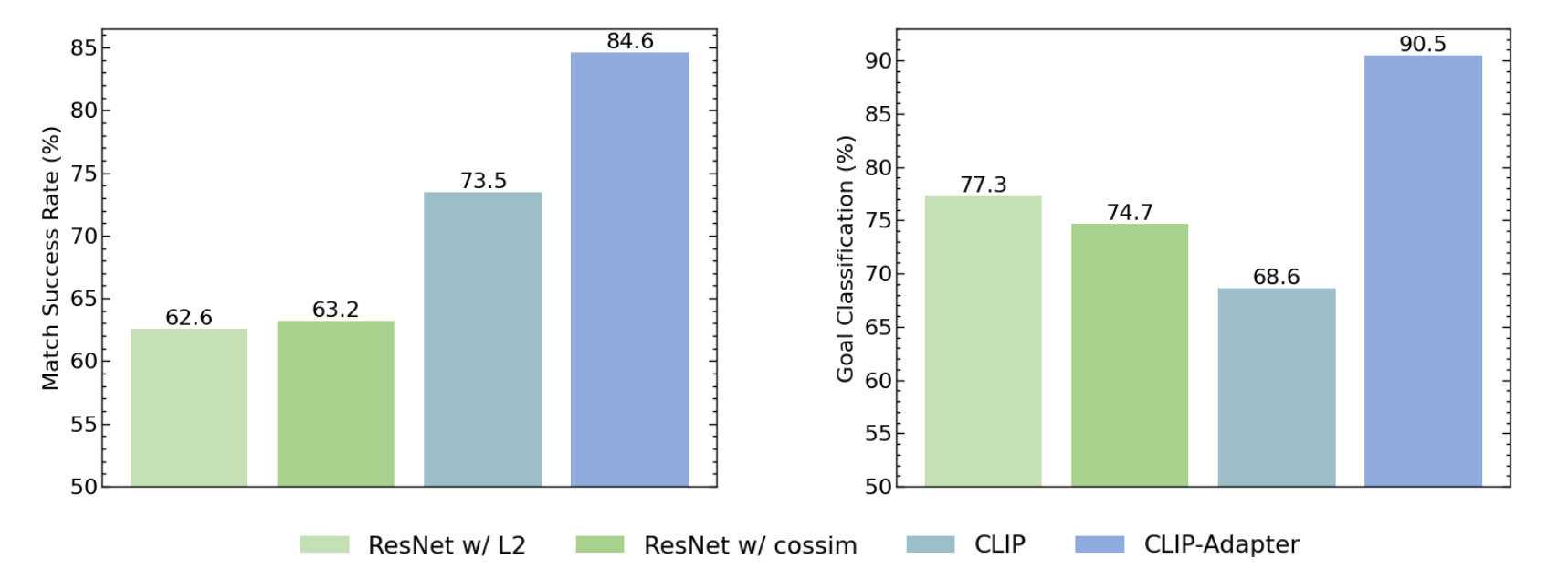}
  \vspace{-0.75cm}
  \caption{Comparisons in match success rate (left) and goal classification (right) of matchers.}
  \vspace{-0.55cm}
  \label{fig:matcher}
\end{figure}
\subsection{Validation of Confidence-based Terminal Signal}
\label{sec:ts-cb-details}
Our first experiment verifies the confidence-based terminal signal of the finetuned CLIP. We define the confident threshold upon the uniform distribution of $\frac{1}{N}$, where $N$ is the number of goal objects. We collect 3k samples of {6-DoF pose transformations} including 1.5k data with goal objects and 1.5k data with non-goal objects. For goal objects, we measure the match success rate under different confidence-based terminal signals of $\zeta_m=\frac{1}{N}+\omega_m$, where $\omega_m$ is the confidence offset upon the uniform distribution. For each $\omega_m$, we calculate the match success rate of data whose matching score is lower than $\zeta_m$. Results are shown in Fig.~\ref{fig:terminal-signal}, from which we can see that when $\omega_m=0.12$, the match success rate is around 1 with a small standard deviation. For all the data of goal and non-goal objects, we measure the goal classification rate with different $\zeta_g=\frac{1}{N}+\omega_g$ as the boundary of goal and non-goal objects. Goal classification is the accuracy of distinguishing goal and non-goal objects. From Fig.~\ref{fig:terminal-signal}, we choose $\omega_g=0.04$ for the boundary of goal and non-goal objects.

\vspace{-0.3cm}
\subsection{Comparison of Matcher}
\label{sec:matcher-comparison}
We compare different matchers to test their object matching performances: 1) {\bf ResNet w/ L2}~\cite{qureshi2021nerp} leverages ImageNet pre-trained ResNet50 to extract latent features and computes L2 norm based object similarity scores; 2) {\bf ResNet w/ cossim}~\cite{tang2023selective} deploys ImageNet pre-trained ResNet50 but calculates the cosine similarity for matching; 3) {\bf CLIP} and 4) {\bf CLIP-Adapter}. {We calculate separate confidence thresholds for different matchers in the same way in Sec.~\ref{sec:ts-cb-details}.}

We collect 3k samples of {6-DoF pose transformations} which are unseen during CLIP finetuning. For goal objects, we measure the match success rate. Note that if the goal object is misclassified as a non-goal object, it is regarded as a failure matching. For all objects, we measure the goal classification rate. Compared results are reported in Fig.~\ref{fig:matcher}. We can observe that ResNet50 with different distance metrics have similar performances, demonstrating poor match success rate in cases of large {pose transformations}. This may be because ResNet50 concerns the local appearance of the object. Instead, with the semantic priors, CLIP cares more about the object-level information, and thus can identify objects of the same category with large {pose transformations}. However, it is interesting to note that CLIP struggles to distinguish goal or non-goal objects with a fixed threshold. By adapter finetuning, CLIP-Adapter shows much better goal classification accuracy and match success rate.

\begin{table}[t]
\caption{Simulation Results of See on Normal And Hard Cases}
\label{table:simulated-as-normal}
\vspace{-0.5cm}
\begin{center}
\resizebox{\columnwidth}{!}{
\begin{tabular}{cccccccccccc}
\hline 
Method & TS & L & D & \multicolumn{4}{c}{Match Success} & \multicolumn{4}{c}{See Steps}\\
\hline 
 & & & & SN & UN & SH & UH & SN & UN & SH & UH\\
\hline
CLIP $\spadesuit$ & & & & \makecell[c]{62.0} & \makecell[c]{48.0} & \makecell[c]{29.0} & \makecell[c]{26.0} & \makecell[c]{--} & \makecell[c]{--} & \makecell[c]{--} & \makecell[c]{--}\\
CLIP-Adapter $\spadesuit$ & & & & \makecell[c]{76.0} & \makecell[c]{50.0} & \makecell[c]{57.0} & \makecell[c]{29.0} & \makecell[c]{--} & \makecell[c]{--} & \makecell[c]{--} & \makecell[c]{--}\\
\hline
Random-CLIP & GT & \XSolidBrush & \XSolidBrush & \makecell[c]{77.3} & \makecell[c]{66.7} & \makecell[c]{55.1} & \makecell[c]{47.1} & \makecell[c]{{\bf 1.50}} & \makecell[c]{2.13} & \makecell[c]{1.51} & \makecell[c]{{\bf 1.73}}\\
Random-Adapter & GT & \XSolidBrush & \XSolidBrush & \makecell[c]{90.5} & \makecell[c]{69.6} & \makecell[c]{73.6} & \makecell[c]{51.7} & \makecell[c]{2.00} & \makecell[c]{{\bf 1.14}} & \makecell[c]{{\bf 1.33}} & \makecell[c]{2.10}\\
Sparse-CLIP & GT & \Checkmark & \XSolidBrush & \makecell[c]{80.4} & \makecell[c]{73.2} & \makecell[c]{55.1} & \makecell[c]{46.3} & \makecell[c]{1.80} & \makecell[c]{2.14} & \makecell[c]{1.86} & \makecell[c]{2.40}\\
Sparse-Adapter & GT & \Checkmark & \XSolidBrush & \makecell[c]{90.9} & \makecell[c]{71.4} & \makecell[c]{76.1} & \makecell[c]{58.6} & \makecell[c]{2.17} & \makecell[c]{1.86} & \makecell[c]{2.25} & \makecell[c]{2.07} \\
Dense-CLIP & GT & \Checkmark & \Checkmark & \makecell[c]{{88.7}}  & \makecell[c]{\bf 84.4} & \makecell[c]{69.5} & \makecell[c]{{\bf 64.8}} & \makecell[c]{{1.75}}& \makecell[c]{1.71} & \makecell[c]{{1.61}}& \makecell[c]{1.94} \\
Dense-Adapter & GT & \Checkmark & \Checkmark & \makecell[c]{{\bf 95.3}}  & \makecell[c]{80.0} & \makecell[c]{{\bf 84.4}}  & \makecell[c]{63.1} & \makecell[c]{{2.00}}& \makecell[c]{1.43} & \makecell[c]{{2.04}}& \makecell[c]{1.89}\\
\hline
Random-CLIP & CB & \XSolidBrush & \XSolidBrush & \makecell[c]{68.9} & \makecell[c]{48.8} & \makecell[c]{39.9} & \makecell[c]{21.8} & \makecell[c]{2.63} & \makecell[c]{3.67} & \makecell[c]{{\bf 2.35}} & \makecell[c]{2.71} \\
Random-Adapter & CB & \XSolidBrush & \XSolidBrush & \makecell[c]{78.6} & \makecell[c]{54.6} & \makecell[c]{59.8} & \makecell[c]{34.7} & \makecell[c]{3.83} & \makecell[c]{{2.93}} & \makecell[c]{2.45} & \makecell[c]{3.40}\\
Sparse-CLIP & CB & \Checkmark & \XSolidBrush & \makecell[c]{65.9} & \makecell[c]{53.5} & \makecell[c]{41.3} & \makecell[c]{23.9} & \makecell[c]{3.25} & \makecell[c]{{\bf 2.55}} & \makecell[c]{2.61} & \makecell[c]{{\bf 2.36}}\\
Sparse-Adapter & CB & \Checkmark & \XSolidBrush & \makecell[c]{77.3} & \makecell[c]{56.1} & \makecell[c]{64.7} & \makecell[c]{33.3}  & \makecell[c]{2.86} & \makecell[c]{2.79} & \makecell[c]{2.62} & \makecell[c]{3.38}\\
Dense-CLIP & CB & \Checkmark & \Checkmark & \makecell[c]{75.0}  & \makecell[c]{68.1} & \makecell[c]{{63.9}}  & \makecell[c]{29.0} & \makecell[c]{\bf {2.46}}& \makecell[c]{3.20} & \makecell[c]{2.38}& \makecell[c]{{2.45}}\\
Dense-Adapter & CB & \Checkmark & \Checkmark & \makecell[c]{{\bf 85.7}}  & \makecell[c]{{\bf 68.3}} & \makecell[c]{{\bf 69.0}} & \makecell[c]{{\bf 39.7}} & \makecell[c]{{2.62}}& \makecell[c]{3.32} & \makecell[c]{{2.64}}& \makecell[c]{3.36}\\
\hline
\end{tabular}
}
\begin{tablenotes}
\scriptsize
\item * Policies marked with $\spadesuit$ do not use active perception. TS represents Terminal Signal. L and D denote whether learned and using dense reward respectively. GT and CB denote ground-truth and confidence-based terminal signals respectively.
\end{tablenotes}
\end{center}
\vspace{-0.65cm}
\end{table}

\begin{table}[t]
\caption{Simulation Results of See on Different Task Difficulties}
\label{table:simulated-as-difficulties}
\vspace{-0.5cm}
\begin{center}
\resizebox{\columnwidth}{!}{
\begin{tabular}{ccccccccccc}
\hline 
Method & TS & \multicolumn{4}{c}{Match Success} & \multicolumn{4}{c}{See Steps} \\
\hline 
 & & LC & HC & NA & SA & LC & HC & NA & SA \\
\hline
Dense-LoFTR & GT & \makecell[c]{61.9} & \makecell[c]{45.5} & \makecell[c]{61.6} & \makecell[c]{59.1} & 1.92 & 1.60 & 2.04 & 2.14\\
Dense-R3M & GT & \makecell[c]{78.8} & \makecell[c]{61.5} & \makecell[c]{82.0} & \makecell[c]{63.3} & 1.52 & 2.20 & 1.44 & 2.22\\
Dense-MVP & GT & \makecell[c]{88.1} & \makecell[c]{{\bf 87.0}} & \makecell[c]{90.5} & \makecell[c]{70.2} & {\bf 1.27} & {\bf 1.20} & {\bf 1.08} & {\bf 1.11}\\
Dense-Adapter & GT & \makecell[c]{{\bf 92.7}}  & \makecell[c]{84.0} & \makecell[c]{{\bf 92.8}}& \makecell[c]{{\bf 83.3}} & 1.68 & 1.71 & 2.00 & 1.90\\
\hline
Dense-LoFTR & CB & \makecell[c]{32.6} & \makecell[c]{27.6} & \makecell[c]{27.6} & \makecell[c]{34.7} & 2.45& {\bf 1.00}& 1.89 & 3.00\\
Dense-R3M & CB & \makecell[c]{48.9} & \makecell[c]{48.0} & \makecell[c]{59.0} & \makecell[c]{41.2} & {\bf 2.42} & 2.00 & {\bf 1.73} & {\bf 1.67}\\
Dense-MVP & CB & \makecell[c]{66.4} & \makecell[c]{62.5} & \makecell[c]{65.7} & \makecell[c]{52.2} & 3.08& 3.50& 2.40& 3.92\\
Dense-Adapter & CB & \makecell[c]{{\bf 70.5}}  & \makecell[c]{{\bf 65.0}} & \makecell[c]{{\bf 80.0}}& \makecell[c]{{\bf 65.0}} & 3.95 & 3.60 & 3.50 & 3.68\\
\hline
\end{tabular}
}
\begin{tablenotes}
\scriptsize
\item * TS represents Terminal Signal. All the policies are learned with dense rewards. GT and CB denote ground-truth and confidence-based terminal signals respectively.
\end{tablenotes}
\end{center}
\vspace{-0.75cm}
\end{table}

\subsection{Ablation Studies}
We compare our methods with a series of ablation methods on all test settings to demonstrate: 1) whether CLIP finetuning endows better object matching; 2) whether our learning-based policy is more efficient than random exploration; 3) whether our dense reward mechanism is effective; 4) whether our terminal signal is efficacious for {the see policy}; 5) whether our policy can generalize to unseen objects; 6) whether semantic-based matcher ({\it i.e.} CLIP, CLIP-Adapter) brings benefits to our policy. For fair comparison, {all learning-based policies are trained with the same scheme as in \ref{as_details}, including environment setup, RL training process and RL algorithm details, except for using different rewards or visual encoders.}

{\bf CLIP versus CLIP-Adapter.} We plug policies into CLIP and CLIP-Adapter to measure the effectiveness of CLIP finetuning. We can see from Table~\ref{table:simulated-as-normal} that all of the policies with finetuned CLIP outperform those with pre-trained CLIP by 6$\%$$\sim$15$\%$ match success rate in cases of {SN}, and show comparable or better match performance in cases of {UN}. The performance gain is more obvious in the hard cases. This suggests the advantages of the lightweight adapter in improving the matching performance without losing the generalization capability.

{\bf Random Exploration versus Learning-based Policies.} We also test policies that randomly rotate the in-hand object to the next view for matching~({Random-CLIP}, {Random-Adapter}) to validate the efficiency of our learning-based policies~({Dense-CLIP}, {Dense-Adapter}). {In Table~\ref{table:simulated-as-normal}, ``L'' denotes whether the policy is learned}. Results demonstrate that our methods achieve higher match success rates, indicating the effectiveness and efficiency of our learning-based policies. While random policies can get better performance upon the sole matchers with arbitrary exploration, they still fail to defeat our policies using the same matchers. Besides, although random policies achieve the least {see steps} in two settings, it does not mean that random strategy has high efficiency, as the corresponding match success rates are lower than our policies by around 11$\%$.

{\bf Sparse Reward versus Dense Reward.} We further make comparisons between our policies and ablation policies that deploy sparse reward without the guidance of delta entropy and penalty~({Sparse-CLIP}, {Sparse-Adapter}). {Testing results in Table~\ref{table:simulated-as-normal} (``D'' indicates whether the policy is learned with dense rewards)} show the advantage of the guided dense rewards, which simultaneously improves the match success rate and the {see} efficiency. In contrast, {Sparse-CLIP} and {Sparse-Adapter} require more {see steps} to achieve a good matching view.

{\bf Ground-truth~(GT) versus Confidence-based~(CB) Terminal Signals{~(TS)}.} We validate the terminal signals of the {see} policy. We first test the methods with ground-truth terminal signals that indicate whether the grasped object is correctly matched. Results shown in Table~\ref{table:simulated-as-normal} demonstrate that our methods can achieve the highest match success rate with around 2 {see steps}. However, in actual application, there is no ground-truth terminal signal. Therefore, the policy is supposed to terminate the {see} process by itself. We define confidence-based terminal signals for {see} in Sec.~\ref{sec:as_policy}. Overall, without ground-truth terminal signals, all of the policies get lower match success rates and cost more {see steps}. Nonetheless, {Dense-Adapter} consistently outperforms all other methods in match success rate across all testing settings, with a little bit more {see steps}. Also, by learning with dense rewards which encourages the policy to improve matching confidence as soon as possible, our policies display {fewer} performance drops in both match success rate and efficiency, further confirming the advantage of our dense reward mechanism. Instead, the random strategy struggles to identify an effective next view to reduce the matching uncertainty, and often executes failed in-hand rotation {\it e.g.} collision between the object and the arm.

{\bf Seen Objects versus Unseen Objects.} {All CLIP-based methods} are tested with both seen objects and unseen objects to validate the generalization performance. Overall, our policy (Dense-Adapter) achieves the highest match success rate across all testing settings with seen and unseen objects, verifying the object generalization capability of our method. This may benefit from CLIP and our self-confidence strategy. Through a balanced adapter finetuned mechanism, we inherit the zero-shot generalization from CLIP. In addition, we guide the policy learning with CLIP matching uncertainty to get self-confident matching, which follows the CLIP distribution, thus preserving the generalization ability to some extent. It is interesting to note from Table~\ref{table:simulated-as-normal} that when testing with unseen objects and ground-truth terminal signals, Dense-Adapter shows a slightly lower match success rate than Dense-CLIP, but costs {fewer see steps}. This may be due to the fact that the mixture of adapted features descends the generalization to some extent. However, with confidence-based terminal signals, Dense-Adapter gets higher match success rates than Dense-CLIP. We attribute this performance to our CLIP finetuning scheme. {By finetuning CLIP with the one-hot encoded labels of the ground-truth matched objects, we implicitly bring confidence prior into the matcher. }

{\bf Normal Cases versus Hard Cases.} We test all {CLIP-based methods} with hard cases where CLIP fails most of the time. Generally, the performance gain from CLIP to policies is more obvious. The performance gap among policies is also more apparent. For example, in normal cases, Random-Adapter can get around 90$\%$ match success rate with seen objects and ground-truth terminal signals. However, in hard cases, it struggles to find an effective view through random exploration, resulting in a large performance drop. Instead, our policy can still achieve around 85$\%$ match success rate with {fewer see steps} compared to random policies and policies with sparse rewards.

{
{\bf Different Visual Matchers.} We apply ablation studies with three different visual matchers {\it i.e.} LoFTR~\cite{sun2021loftr}, R3M~\cite{nair2022r3m} and MVP~\cite{xiao2022masked}, namely {Dense-LoFTR}, Dense-R3M and Dense-MVP respectively. LoFTR~\cite{sun2021loftr} is a transformer-based image matching method, while R3M~\cite{nair2022r3m} and MVP~\cite{xiao2022masked} are visual representation models pre-trained with human video data for robotics tasks. Comparison results on different task difficulties are reported in Table.~\ref{table:simulated-as-difficulties}. Overall, Dense-Adapter achieves higher match success rates across different task difficulties. 

Dense-LoFTR exhibits the worst performance. This may be due to the fact that LoFTR focuses on pixel-level information, and is not concerned about object-level information. Additionally, the object-centric image crop, characterized by low resolution and little texture, poses a challenge for LoFTR. By pre-training with human video data, Dense-R3M and Dense-MVP get better awareness of objects, thus gaining better performance. In particular, Dense-MVP shows comparable results with Dense-Adapter on the task difficulty settings of LC, HC and NA with GT terminal signal. 
However, Dense-MVP struggles to distinguish objects with similar visual appearances, making it hard to self-terminate the see process with the correct results. Conversely, finetuning CLIP using one-hot encoded labels incorporates contrastive priors into the visual representation, prompting our policy to obtain better matching results. Compared to Dense-LoFTR, our policy leverages the semantic priors from CLIP to recognize the same objects despite large pose transformations.
}

\vspace{-0.3cm}
\subsection{Case Studies}
\begin{figure*}[t]
  \centering
  \includegraphics[width=\linewidth]{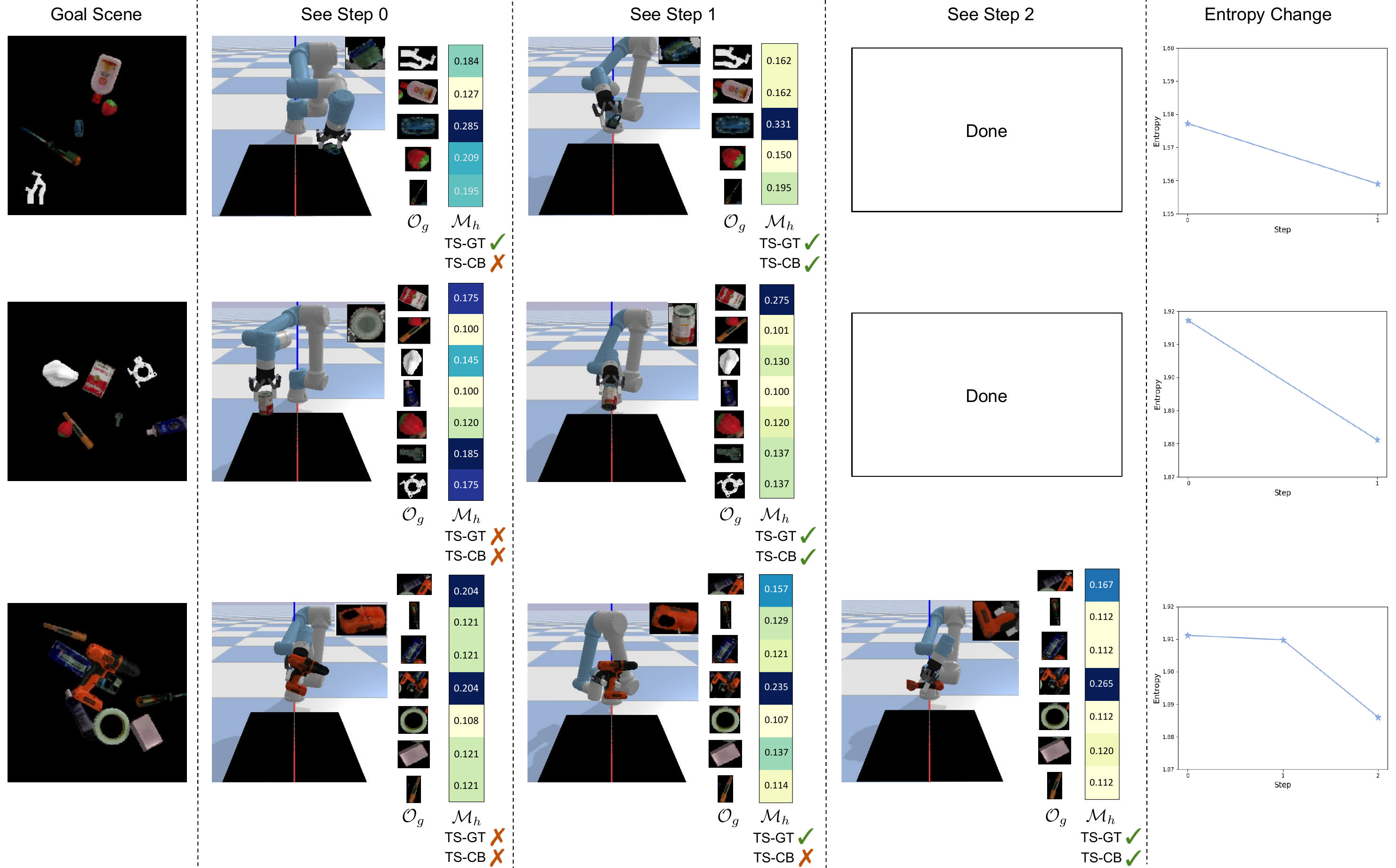}
  \vspace{-0.7cm}
  \caption{{Case studies of the see policy. For each case, we show the see process in a third-person view and camera view (object-centric), and the corresponding matching distribution to the goal objects is shown beside. For each see step, we mark the status of GT terminal signal and CB terminal signal, where \Checkmark represents reaching the terminal signal and \XSolidBrush otherwise. Also, the entropy change of the whole see process is paired with each case. The first two cases are of the easy category, and the third is of the hard category.}}
  \label{fig:as-cases}
  \vspace{-0.6cm}
\end{figure*}

We show three typical cases of {see} in Fig.~\ref{fig:as-cases}. In the first case, the matching is initially correct but does not activate the confidence-based terminal signal. Thus, active {perception} is required to improve matching confidence. In the second case, the ground truth terminal signal and the confidence-based terminal signal are activated at the same time, which also suggests the effectiveness of the {see} policy to rectify the wrong matching. Most cases align with the third one, where the {see} policy first corrects the matching, and then improves the matching confidence for self-termination. Overall, the matching entropy can be reduced via active {perception}.

\section{Experiments of Object Rearrangement}
In this section, we evaluate the application of our object rearrangement system. The goals of the experiments are: 1) to indicate the efficiency of the grasp policy; 2) to demonstrate that our system can achieve better task-level performance for object rearrangement with the challenge of clutter, swap, selectivity and {6-DoF pose transformations}; 3) to evaluate the generalization of our system to unseen objects; 4) to test whether our system can successfully transfer to the real world. 

\subsection{Experimental Setup}
{\bf Test Settings.} In the following experiments, we evaluate our methods with a series of testing settings shown in Table~\ref{table:simulated-rearrangement}. There are 200 testing cases divided into different settings. 

{{\bf Data Collection.}} For each case, 4$\sim$8 objects are placed in the workspace to form the goal configuration, captured as the goal image. Then 3-DoF or 6-DoF pose transformations are applied to the goal objects while 0$\sim$5 non-goal objects are randomly dropped into the workspace, resulting in the initial scene. Cases involving non-goal objects are regarded as selective ones. All testing cases require swaps, where the goal positions of some objects are initially occupied by others, and initially constitute clutter scenes. Each setting is tested with seen and unseen objects. {Example visualizations of testing cases are shown in Fig.~\ref{fig:or-cases} and Appendix~E~\cite{appendix}.}

{\bf Evaluation Metrics.} We evaluate the methods with a series of test settings, which are measured with 5 metrics:
\begin{itemize}
    \item {\bf Task Completion}: the average percentage of completion rate over all test runs. If the robot places all goal objects to their goal position within 5cm error and places all non-goal objects outside in $b$ pick-n-place steps, then the task is considered successful and completed.
    \item {{\bf Completion Planning Steps}}: the average pick-n-place number over all completed test runs. 
    \item {{\bf Overall Planning Steps}}: the average pick-n-place number over all test runs. If the task is completed, the planning step is the pick-n-place number to complete the task. Otherwise, the planning step is recorded as the max planning step $b$.
    \item {\bf Pos Error}: the average position error~($cm$) of all the goal objects per task completion. Position error is defined as the Euclidean distance between the goal arrangement and the achieved arrangement.
    \item {\bf ADD-S}: the average ADD-S metric~($10^{-2}$, {the average closest point distances of model points~\cite{hinterstoisser2013model}}) of all the goal objects per task completion, where $X_a^j$ and $X_g^j$ denote the point of the achieved and goal point clouds of $o_g^j$ respectively.
    \begin{equation}
    \!\!\!\text{ADD-S}=\frac{1}{N}\frac{1}{|X_g^j|}\sum_{j}\sum_{X_g^j}\min_{X_a^j}\left\|X_g^j-X_a^j\right\|
    \end{equation}
     
\end{itemize}

{All the metrics expect {\bf Overall Planning Steps} and {\bf ADD-S} are consistent with those in \cite{goyal2022ifor,tang2023selective}. {\bf Completion Planning Steps} only considers the efficiency of completed test runs. However, if the completion rate is low, then fewer planning steps does not mean the corresponding policy is more efficient. Thereby we have another metric {\bf Overall Planning Steps} to consider the overall efficiency of a policy. {\bf ADD-S} is an additional metric to evaluate the 6-DoF pose error of the rearrangement task. {\bf Pos Error} and {\bf ADD-S} are only evaluated in the real world. This is because the robot has access to the privileged data of object poses in simulation. In real-world experiments, object placement is implemented by off-the-shelf relative pose estimation to make the evaluation closed-loop. {\bf Pos Error} and {\bf ADD-S} are calculated after each inference as feedback of whether the task has been completed.}

\subsection{Baseline Implementations}
We compare the performance of our system to the following baselines. Note that there is no active {perception} in these two baseline methods.

{
{\bf E2E} is a method that directly takes the images of current and goal scenes as input and outputs the grasp pose. The images of current and goal scenes are embedded by CLIP and fused with cross-attention. Then, the policy predicts continuous grasp actions with a GMM (Gaussian Mixture Model) output head, which has been shown effective to capture the diverse multimodal behaviors~\cite{mandlekar2021matters}. For fair comparison, the training process~(including behavior cloning pre-training and reinforcement learning), the place strategy, as well as the data amount are the same as ours.
}

{\bf IFOR$^{\dag}$} is a method using a rule-based planner, which sorts the relative pose transformations of all objects between the current and the goal scenes, and greedily picks objects with larger relative transformations and free goal positions. As the raw version of IFOR~\cite{goyal2022ifor} cannot handle rearrangement settings with selectivity, we apply our matcher ({\it i.e.} finetuned CLIP) to distinguish non-goal objects. For fair comparison, the graspnet and the place strategy also remain the same as ours.

{\bf SeRe$^{\dag}$} is a method that conducts learning-based graph planning for object sequencing and action selection. The raw version of Selective Rearrangement~\cite{tang2023selective} implements top-down pixel-wise action selection, which is limited in 3-DoF settings. To extend it to 6-DoF settings, we reproduce the method with the graspnet used in our method to predict 6-DoF grasp poses. The matcher and the place strategy are the same as ours.

\subsection{Simulation Experiments}
Our simulation environment is built in PyBullet~\cite{coumans2021}, which involves a UR5 arm, a ROBOTIQ-85 gripper, and a camera of Intel RealSense L515.

\begin{figure}[t]
  \centering
  \includegraphics[width=\linewidth]
  {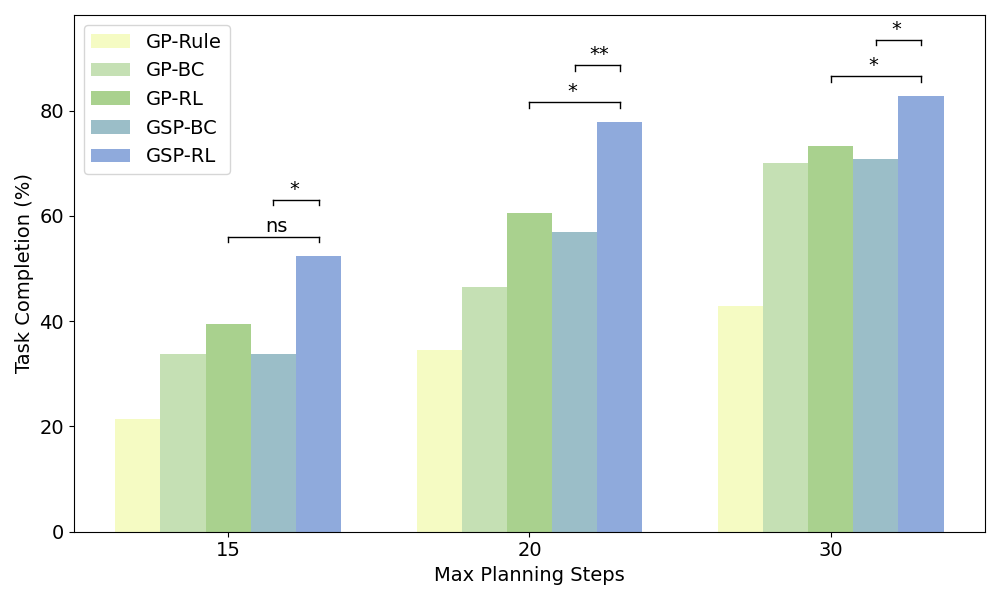}
  \vspace{-0.7cm}
  \caption{Ablation studies in planning efficiency of unknown object rearrangement. We compare GSP(-RL) with four ablation methods with the setting of 6-DoF, selective and seen objects. {Statistical significance is computed by Mann-Whitney U Test where ns means $p\textless0.1$, $\star$ means $p\textless0.05$ and $\star\star$ means $p\textless0.01$.}}
  \label{fig:efficiency}
  \vspace{-0.6cm}
\end{figure}

\begin{table*}[t]
\caption{Simulation Results of Object Rearrangements}
\label{table:simulated-rearrangement}
\vspace{-0.2cm}
\begin{center}
\begin{threeparttable}
\begin{tabular}{ccccccccccccc}
\hline Method & Rotation & \makebox[0.05\textwidth][c]{Selectivity}& Init. \#obj. & Goal \#obj. & \multicolumn{2}{c}{Task Completion} & \multicolumn{2}{c}{Completion Planning Steps} & \multicolumn{2}{c}{Overall Planning Steps}\\
\hline 
 & & & & & seen & unseen & seen & unseen & seen & unseen \\
\hline 
E2E & \multirow{5}{*}{3-DoF} & \multirow{5}{*}{\XSolidBrush} & \multirow{5}{*}{4-8} & \multirow{5}{*}{4-8} & 0.0 & 0.0 & -- & -- & 30.0$\pm$0.0 & 30.0$\pm$0.0 \\
IFOR$^{\dag}$ & & & & & 80.0 & 66.7 & {\bf 10.0$\pm$2.7} & {\bf 9.00$\pm$1.8} & \underline{14.0$\pm$4.7}& 16.0$\pm$4.9\\
SeRe$^{\dag}$ & & & & & 80.0 & 70.0 & 11.3$\pm$2.5 & 10.5$\pm$0.8 & 15.0$\pm$4.8 & 16.4$\pm$4.5\\
GSP-BC & & & & & \underline{88.5} & \underline{76.7} & 14.3$\pm$3.7 & \underline{9.50$\pm$3.6} & 16.1$\pm$4.3& \underline{14.3$\pm$4.4}\\
GSP & & & & & {\bf 92.0} & {\bf 83.3}& \underline{11.1$\pm$2.3} & 11.0$\pm$1.6 & {\bf 12.6$\pm$3.4}& {\bf 14.2$\pm$3.5}\\
\hline
E2E & \multirow{5}{*}{3-DoF} & \multirow{5}{*}{\Checkmark} & \multirow{5}{*}{9-13} & \multirow{5}{*}{4-8} & 0.0 & 0.0 & -- & -- & 30.0$\pm$0.0 & 30.0$\pm$0.0 \\
IFOR$^{\dag}$ & & & & & \underline{88.9} & \underline{66.7} & {\bf 14.4$\pm$1.7} & \underline{13.0$\pm$1.2} & {\bf 16.1$\pm$2.5} & \underline{18.7$\pm$4.0}\\
SeRe$^{\dag}$ & & & & & 83.3 & \underline{66.7}& \underline{14.5$\pm$2.8} &15.8$\pm$1.2 & \underline{17.1$\pm$3.1}& 20.5$\pm$3.4\\
GSP-BC & & & & & 68.8 & \underline{66.7} & 17.2$\pm$2.3 & {\bf 12.5$\pm$1.6} & 21.2$\pm$3.5& {\bf 18.3$\pm$4.1}\\
GSP & & & & & {\bf 91.7} & {\bf 76.7}& 16.8$\pm$2.9 & 19.0$\pm$2.2 & 17.9$\pm$3.3 & 21.6$\pm$3.0\\
\hline
E2E & \multirow{5}{*}{6-DoF} & \multirow{5}{*}{\XSolidBrush} & \multirow{5}{*}{4-8} & \multirow{5}{*}{4-8} & 0.0 & 0.0 & --- & -- & 30.0$\pm$0.0 & 30.0$\pm$0.0 \\
IFOR$^{\dag}$ & & & & & 50.0 & 33.3& \underline{8.00$\pm$0.5} & \underline{8.00$\pm$1.4} & 19.0$\pm$3.9& 22.7$\pm$5.1\\
SeRe$^{\dag}$ & & & & & 41.2 & 30.0& {\bf 7.50$\pm$0.5} & {\bf 6.00$\pm$0.6} & 20.7$\pm$5.5& 22.8$\pm$5.5\\
GSP-BC & & & & & {\bf 94.1} & \underline{50.0} & 12.6$\pm$3.0 & 9.00$\pm$2.6 & {\bf 13.6$\pm$3.6}& \underline{19.5$\pm$5.2}\\
GSP & & & & & \underline{92.9} & {\bf 73.3}& 13.7$\pm$2.9 & 10.5$\pm$2.5 & \underline{14.9$\pm$3.5}& {\bf 15.7$\pm$4.2}\\
\hline
E2E & \multirow{5}{*}{6-DoF} & \multirow{5}{*}{\Checkmark} & \multirow{5}{*}{9-13} & \multirow{5}{*}{4-8}& 0.0 & 0.0 & -- & -- & 30.0$\pm$0.0 & 30.0$\pm$0.0 \\
IFOR$^{\dag}$ & & & & & 33.3 & 33.3 & {\bf 12.0$\pm$1.5} & {\bf 13.0$\pm$1.6} & 24.0$\pm$3.9& 24.3$\pm$4.0\\
SeRe$^{\dag}$ & & & & & 50.0 & 33.3& \underline{12.2$\pm$1.4} & \underline{14.0$\pm$2.2} & 21.1$\pm$4.4& 24.7$\pm$3.8\\
GSP-BC & & & & & \underline{75.0} & \underline{41.7}& 16.1$\pm$2.8 & \underline{14.0$\pm$1.4} &\underline{19.6$\pm$3.8} &\underline{23.3$\pm$4.0}\\
GSP & & & & & {\bf 84.6} & {\bf 70.0}& 14.7$\pm$2.5 & 19.7$\pm$2.4 &{\bf 17.1$\pm$3.6} & {\bf 22.8$\pm$2.8}\\
\hline
\end{tabular}
\begin{tablenotes}
\scriptsize
\item * All cases are with challenges of clutter and swap.
\end{tablenotes}
\end{threeparttable}
\end{center}
\vspace{-0.8cm}
\end{table*}

{\bf Ablation Studies.} In order to evaluate the planning efficiency of GSP(-RL), we compare our method with four variant methods: 1) GP-Rule, a method selecting objects to grasp and place by the rule-based planner used to collect demonstration in Sec.~\ref{sec:grasp-train}, but using predicted object matching and grasp; 2) GP-BC, a method involving grasp and place policies learned with behavior cloning; 3) GP-RL, a method involving grasp and place policies learned with behavior cloning and reinforcement learning; 4) GSP-BC, a method similar to GP-BC, but with {see} policy. We set different limits of planning step ($b\!=\!15, 20, 30$) for the comparison of task completion, and results are presented in Fig.~\ref{fig:efficiency}. {For statistical significance measurement, we conduct Mann-Whitney U Test~\cite{nachar2008mann} between GSP-RL and two ablation methods with the nearest means on 60 samples of tests.} GP-Rule performs the worst across all the methods, which neglects the uncertainty of object matching and grasping. It struggles to complete the task even extending the max planning step to $b=30$. Instead, by jointly considering object matching as well as candidate grasps for grasp selection, GP-BC gets the awareness of perception noise, and learns from demonstration. This improves the performance, especially when extending $b$ from 20 to 30. However, it still performs poorly when $b=15$. This is due to the fact that GP-BC is supervised by the demonstration data labeled as one unique grasp per step, which brings bias into the policy and pays less attention to task-level performance. By further training with reinforcement learning by task-level rewards, GP-RL can effectively improve planning efficiency. When the max step is limited below 20, GP-RL performs much better than GP-BC. Finally, by cooperating with active {perception}, GSP-RL can get better performance. This benefits from more confident object matching, which avoids additional occupancy of other objects' goals, thus reducing the planning steps. Still, we can see from GSP-BC that it also achieves more efficient performance than GP-BC. Overall, GSP-RL is statistically significant~($p\textless0.05$) when $b\geq20$.

Also, we compare GSP and GSP-BC in all settings in Table~\ref{table:simulated-rearrangement} with the max planning step $b=30$. Results indicate that GSP realizes better task completion in all testing settings except for the 6-DoF swap settings of seen objects, which also reports comparable results with {GSP-BC}. This illustrates the effectiveness of reinforcement learning with task-level rewards to improve task completion rate and action efficiency. A noteworthy observation is that {GSP-BC} shows a much worse task completion rate in selectivity settings than in non-selectivity ones. This may be owing to the fact that the demonstration data fails to provide multimodal information {\it e.g.} feasibility of grasps of several objects. In selective settings, such situations are more common, as all the non-goal objects can be directly moved outside. On the contrary, training by reinforcement learning enables the policy to explore more planning sequences beyond the demonstration ones, and implicitly learn a multimodal value function for task-level planning. 

{\bf Comparisons to Baselines.} From compared results in Table~\ref{table:simulated-rearrangement}, we can observe that our system outperforms all baselines on task completion rate. {Results indicate that E2E all fails. This is due to the fact that {\bf E2E} struggles to learn grasp capability in clutter from scratch with limited data, especially for the long-horizon object rearrangement task affected by various perception noises. 
With object-centric representation and the pre-trained graspnet, other methods can achieve better performance.} In 3-DoF settings, there are relatively fewer wrong matches. {IFOR$^{\dag}$} and {SeRe$^{\dag}$} can achieve 80$\%$$\sim$90$\%$ task completion rate within 15 {completion planning steps}, which is consistent with results in \cite{goyal2022ifor,tang2023selective}. However, in 6-DoF settings, these two methods get much worse performance in task completion at 30$\%\sim$50$\%$. Note that although they spend the least planning steps per completion, it may be due to the unfortunate fact that they fail in all cases where the object number is more than the average number of their {completion planning steps}, leading to large {overall planning steps}. For example, in the setting of {6-DoF pose transformations}, selectivity and seen objects, the average number of {completion planning steps} of {IFOR$^{\dag}$} is 12, but the most number of initial objects is 13, which means {IFOR$^{\dag}$} fails in all cases with 13 initial objects. With the skill of active {perception}, our system can get more confident object matching for rearrangement planning, thus achieving much better task completion with 11$\sim$17 {completion planning steps}. Besides, compared to IFOR$^{\dag}$ and SeRe$^{\dag}$ which first choose objects through rule-based planners and then conduct object grasping, GSP evaluates the quality of available grasps by considering perception noise and task-level performance, which also accounts for our better performance.

{\bf Generalization to Unseen Objects.} We test all the approaches with unseen objects. It can be observed from Table~\ref{table:simulated-rearrangement} that GSP can achieve the highest completion rate, demonstrating the generalization capability of our method. Although GSP spends more planning steps per completion, especially in selectivity settings, it does not mean that our method is worse than others. In cases with unseen objects, perception error occurs more frequently than those with seen objects. Thanks to our learning scheme that considers closed-loop perception and planning, our method can handle perception error by spending more planning steps for higher task completion. 
{Generalization on other factors such as camera angle and table background refers to Appendix~F~\cite{appendix}.}

\begin{figure*}[t]
  \centering
  \includegraphics[width=\linewidth]{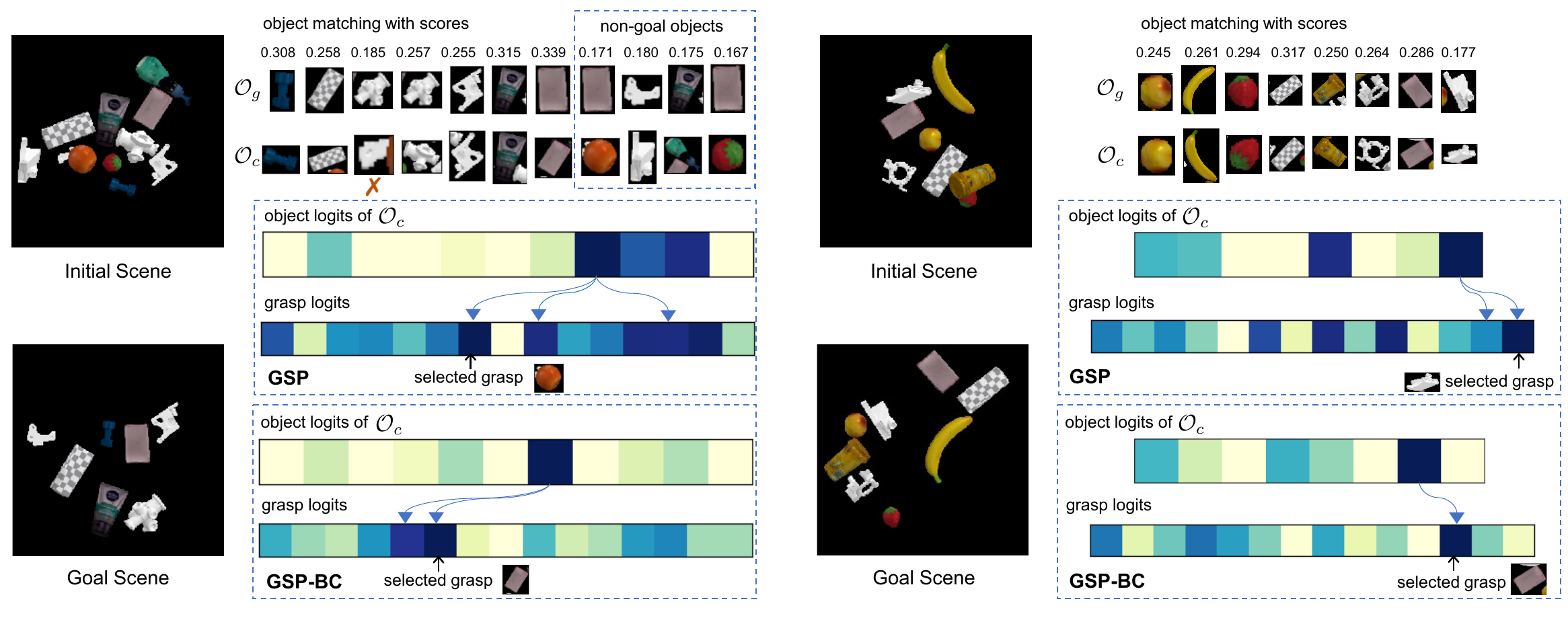}
  \vspace{-0.8cm}
  \caption{Two cases for comparison of the grasp policies. {Both cases are with 6-DoF pose transformations, while only the left case is with selectivity}. Given the initial scene and goal scene for each case, we show the object matching results between each current object and all the goal objects with their confidence scores (top). Then we visualize the object and grasp logits outputted by the grasp policy of GSP (middle) and GSP-BC (bottom) respectively. Note that the object logits are obtained by the box-grasp mapping matrix described in Sec.~\ref{sec:or-cases}. We mark the grasp poses of the object with triangles, and the selected grasp of the policy with an arrow.}
  \label{fig:or-cases}
  \vspace{-0.6cm}
\end{figure*}
{\bf Case Studies.} 
\label{sec:or-cases}
Fig.~\ref{fig:or-cases} visualizes two cases for comparison of grasp policies. For each case, we present the object logits and grasp logits from the grasp policies of GSP and GSP-BC. Note that the object logits are derived from the grasp logits, as the policies directly predict the grasp logits. Specifically, we assign grasps to objects with 3D position distances to get a $M\times K$ box-grasp mapping matrix. Then, each object logit corresponds to the highest grasp logit of its assigned grasps. The first case involves the challenge of swap and selectivity, and there is an object matching error of the third goal object, which is initially occupied by a non-goal object. By learning with task-level rewards, GSP shows multimodal high logits for non-goal objects, and selects to grasp one of the non-goal objects {\it i.e.} orange. Instead, GSP-BC shows a concentrative high logit on the pink box, the last goal object. This is because behavior cloning guides the policy with unimodal distributions. Although the pink box matches with the highest confidence score (0.339) across all current objects, its goal is occupied by other objects. That is, it should be placed to the buffer. The second is a case with swap only. GSP predicts high logits for the fifth object and the last object, whose goal is all free for placement, and chooses to grasp the last one. However, GSP-BC grasps an object whose goal is occupied. This may be because the grasp policy of GSP-BC pays less attention to task-level efficiency, and behavior cloning brings bias of the training data into the policy.

\subsection{Real-world Experiments}
\begin{figure}[t]
  \centering
  \includegraphics[width=0.95\linewidth]{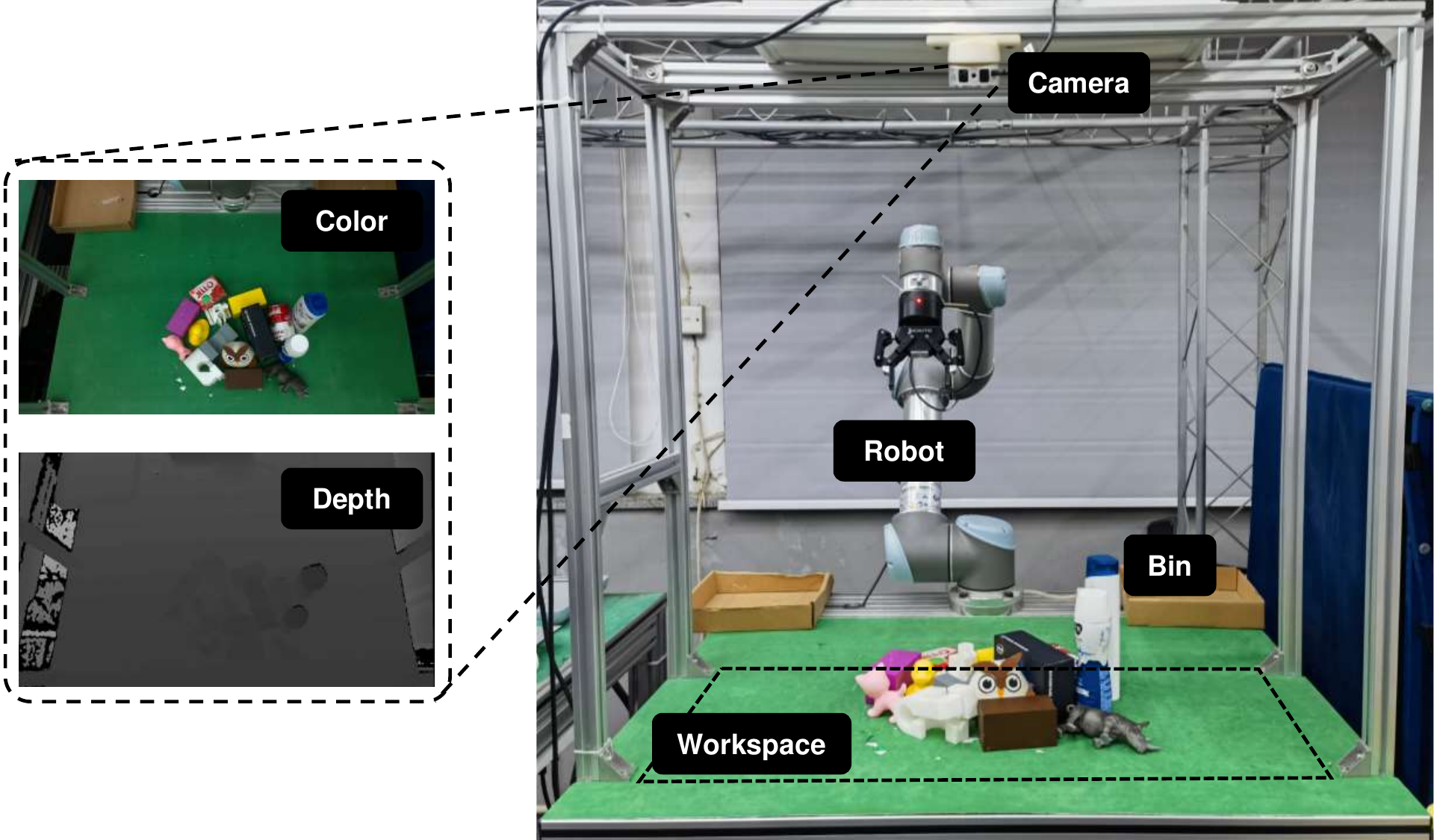}
  \vspace{-0.2cm}
  \caption{The real-world platform includes a UR5 robot arm with a ROBOTIQ-85 gripper. The bin next to the robot holds the non-goal objects. The Intel RealSense L515 camera captures RGB-D images of resolution 1280 $\times$ 720.}
  \label{fig:real-platform}
  \vspace{-0.4cm}
\end{figure}

\begin{figure}[t]
  \centering
  \includegraphics[width=\linewidth]{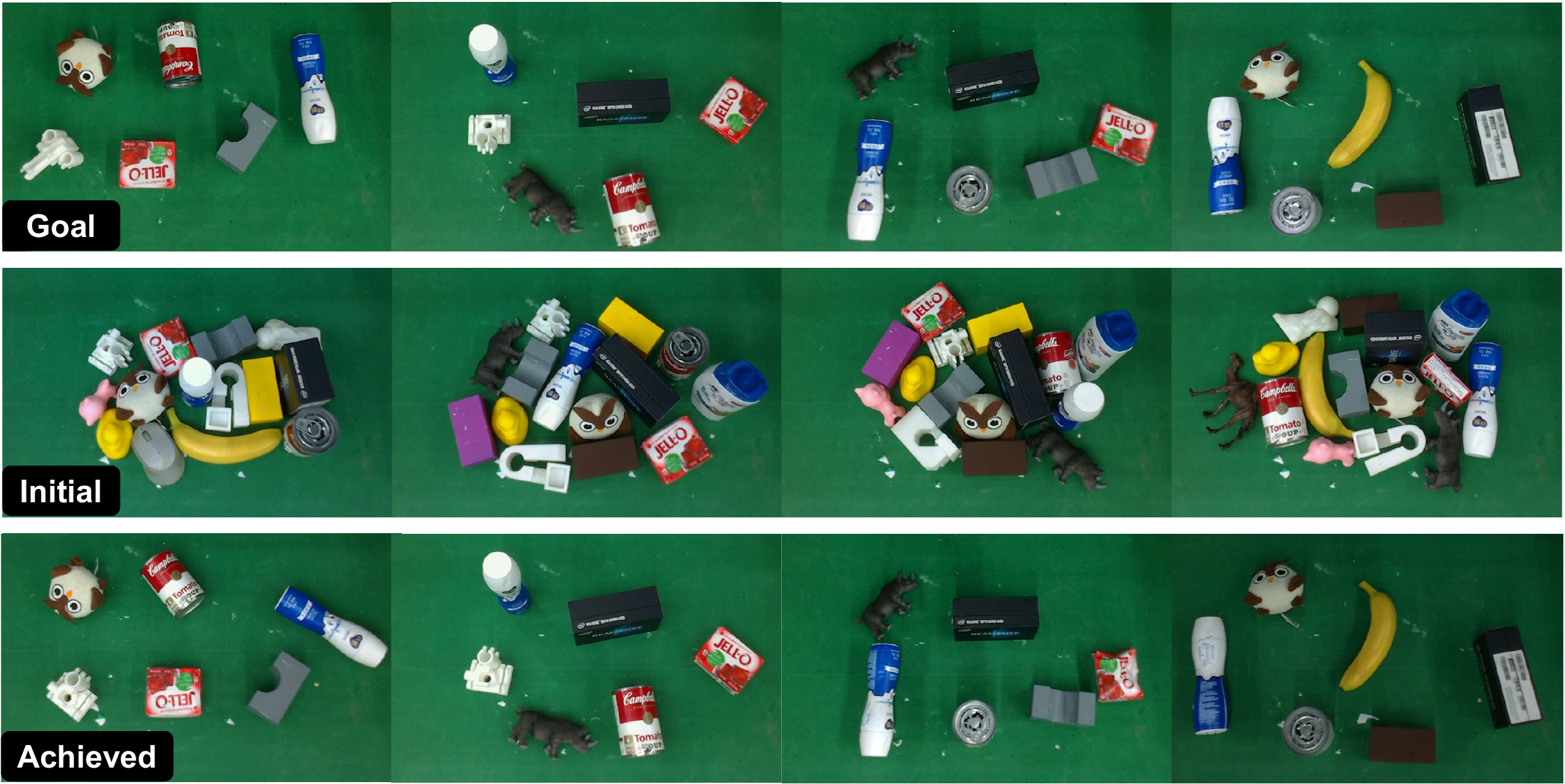}
  \vspace{-0.6cm}
  \caption{The real-world cases~(first two rows) and the corresponding achieved scenes by GSP~(the bottom row). Each case consists of 14$\sim$15 objects initially, and 6 objects in the goal scene. All of the scenes present the challenges of {6-DoF pose transformations}, swap, clutter and selectivity.}
  \label{fig:real-case}
  \vspace{-0.7cm}
\end{figure}

\begin{table}[t]
\caption{Real-world Results on Object Rearrangements}
\label{table:real-rearrangement}
\vspace{-0.1cm}
\centering
\resizebox{\linewidth}{!}{
\begin{tabular}{ccccc}
\hline
& Task Completion & Overall Planning Steps & Pos Error & ADD-S \\
\hline
\multicolumn{5}{c}{IFOR$^\dag$} \\
\hline
Case 1 & 60.0 & 27.6 $\pm$ 5.1 & 4.65 $\pm$ 0.14 & 2.01 $\pm$ 0.40\\
Case 2 & 40.0& 32.0 $\pm$ 4.9 & 4.37 $\pm$ 0.02 & 2.02 $\pm$ 0.50\\
Case 3 & 20.0& 36.0 $\pm$ 4.0 & 4.49 $\pm$ 0.00 & 0.95 $\pm$ 0.00\\
Case 4 & 20.0& 36.8 $\pm$ 3.2 & 4.28 $\pm$ 0.00 & 2.35 $\pm$ 0.00\\
Average & 35.0 & 33.1 $\pm$ 4.7 & 4.49 $\pm$ 0.17 & 1.91 $\pm$ 0.55\\
\hline
\multicolumn{5}{c}{GSP} \\
\hline
Case 1 & 60.0& 29.6 $\pm$ 4.4 & 4.09 $\pm$ 0.29 & 1.42 $\pm$ 0.19\\
Case 2 & 60.0& 28.8 $\pm$ 4.6 & 4.55 $\pm$ 0.30 & 2.28 $\pm$ 0.24\\
Case 3 & 80.0& 24.4 $\pm$ 3.9 & 2.76 $\pm$ 0.57 & 1.01 $\pm$ 0.26\\
Case 4 & 60.0& 29.2 $\pm$ 4.4 & 3.89 $\pm$ 0.28 & 1.82 $\pm$ 0.40\\
Average & {\bf 65.0} & {\bf 28.0 $\pm$ 4.5} & {\bf 3.74 $\pm$ 0.80} & {\bf 1.58 $\pm$ 0.56} \\
\hline
\end{tabular}}
\vspace{-0.7cm}
\end{table}

{{\bf Test Settings.}} In this section, we evaluate our system in real-world settings. Our real-world platform involves a UR5 robot arm with a ROBOTIQ-85 gripper, an Intel RealSense L515 capturing RGB-D images of resolution 1280$\times$720, and a bin next to the robot holding the non-goal objects~(shown in Fig.~\ref{fig:real-platform}). Object bounding boxes are generated by a pre-trained open-set detection model~\cite{zhou2023open}. The detection model is trained with data from GraspNet-1Billion with $mAP=70.70$ for known objects and $mAP=34.53$ for unknown objects. Our testing cases consist of 4 scenes, each of which has 14$\sim$15 objects initially, and 6 objects in the goal scene~(shown in Fig.~\ref{fig:real-case}). All the scenes present the challenges of {6-DoF pose transformations}, swap, clutter and selectivity. Also, each scene contains novel objects unseen during training to evaluate the generalization of the system. We compare our methods with the method {IFOR}$^{\dag}$ that shows a better performance than other baselines in our simulation experiments.

In real-world experiments, the image crop of the grasped object is screened out through the depth information and the distance to the gripper tip. If active perception is needed, the robot moves to a pre-defined see pose as the initial pose of the following see actions.

{{\bf Comparisons to Baselines.}} We test 5 times for each scene, in total 20 times testing. While it is difficult to reproduce the precise initial scene in the real world, we try to duplicate the initial configurations with the same RGB-D image. Notably, both methods are transferred from simulation to the real world without any retraining. We report results on task completion rate, {average overall planning steps}, average position error, and average ADD-S metric in Table~\ref{table:real-rearrangement}. Note that in the real world, we extend the planning step limitation to $b=40$. Results show that our system is able to transfer to the real world with novel objects, which demonstrates the generalization capability. Instead, {IFOR$^{\dag}$} shows a task completion rate low at 35$\%$. For objects required to be rearranged with large {pose transformations}~({\it i.e.} the tomato soup can needs to be rearranged from lying flat to standing upright), {IFOR$^{\dag}$} may match wrong goal objects by one shot. This leads to frequent failure, as IFOR computes pose transformations based on these wrong matches during object and action planning. In contrast, our policy, while potentially starting with an inaccurate match, successfully corrects itself by actively rotating the grasped object. Note that CLIP-Adapter is trained on multi-view images of the tomato soup in simulation, thereby minimizing the likelihood of misclassifying this object as a non-goal item.

{In general, policies incur more planning steps compared to the simulation due to the failure modes in place. First, the inaccurate 6-DoF pose estimation may cause false object placement, calling for additional steps to move the object to its goal. If the estimated pose is of large error, the motion planning module may fail to plan a feasible trajectory. Then the in-hand object should be reset manually to continue the experiment. Such steps are included in the planning steps. Second, objects may need multiple times of reorientation to achieve the goals. This may also cause failed motion planning considering the environmental constraints, leading to more planning steps.}

{{\bf Example Sequence.}} An example sequence of GSP is shown in Fig.~\ref{fig:real-sequence}. By actively seeing the object in hand, our system can get high self-confident matching, and iteratively rearrange objects that are not in their goal regions. For example, in step 9, the gray block is rearranged to the goal region of the jello, and in step 16, it is grasped once more and placed to the correct goal region. Other achieved cases are shown in Fig.~\ref{fig:real-case}. 

{{\bf Failure Cases.}} Fig.~\ref{fig:failure-case} shows two failure cases. In the first case, the initial scene comprises two gray blocks positioned horizontally, whereas the desired outcome involves only one gray block, arranged vertically. However, both gray blocks in the initial scene adopting an upright orientation can display the state in the goal scene, posing a challenge for the policy to identify the correct goal object. Thus, after one of the grey blocks is placed at the goal region, another will be always placed in the buffer. The second case includes two similar novel objects both in the initial and goal scenes, which are hard to distinguish by the policy, leading to misclassifying the objects~(one of them is misclassified as non-goal.) Additionally, if the object is rearranged to a totally different view, the policy may struggle to get an accurate matching without finetuning on multi-view images of that object.

\begin{figure}[t]
  \centering
  \includegraphics[width=\linewidth]{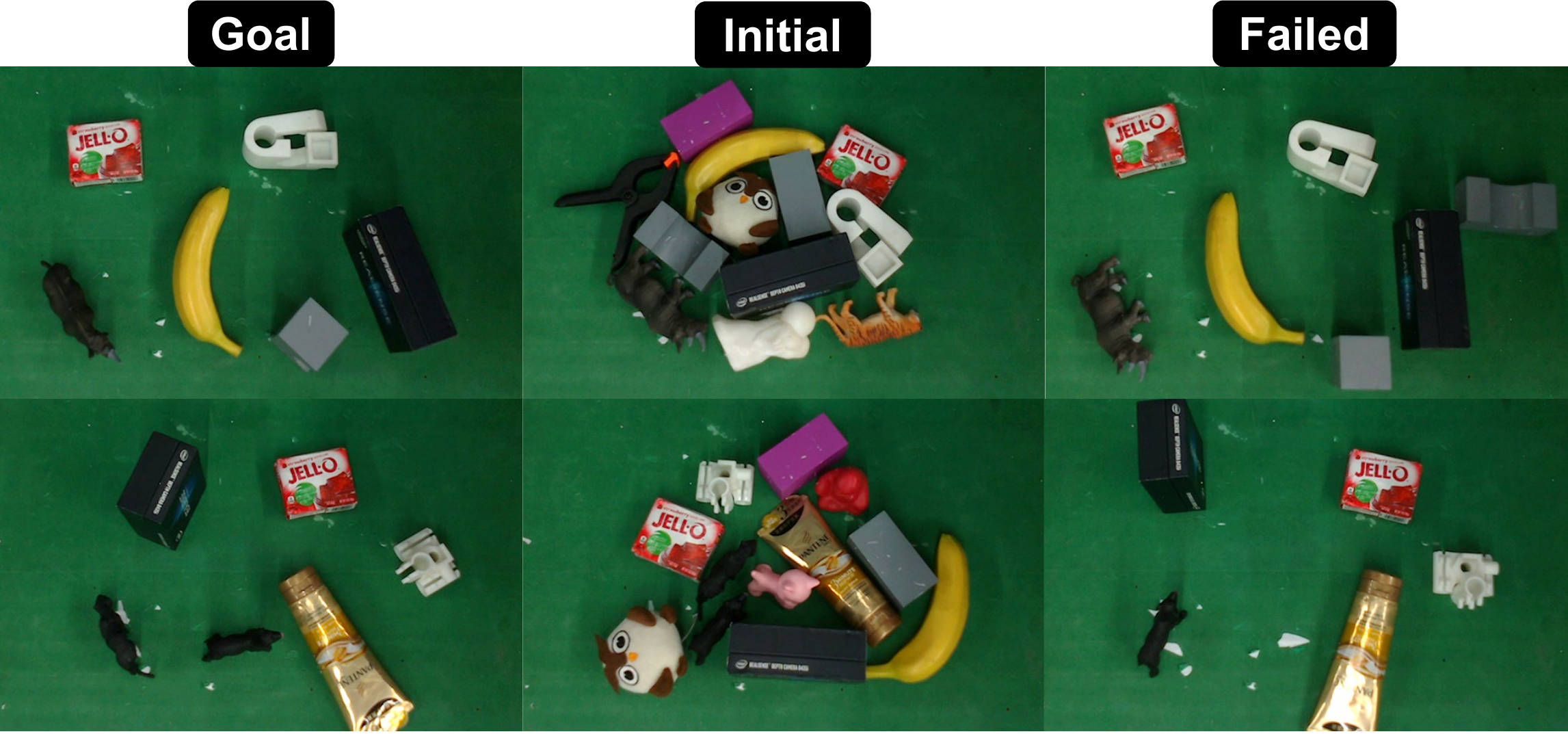}
  \vspace{-0.6cm}
  \caption{Typical failure cases. Both of the failure cases contain similar objects that are hard to distinguish.}
  \label{fig:failure-case}
  \vspace{-0.7cm}
\end{figure}

\begin{figure*}[t]
  \centering
  \includegraphics[width=\linewidth]{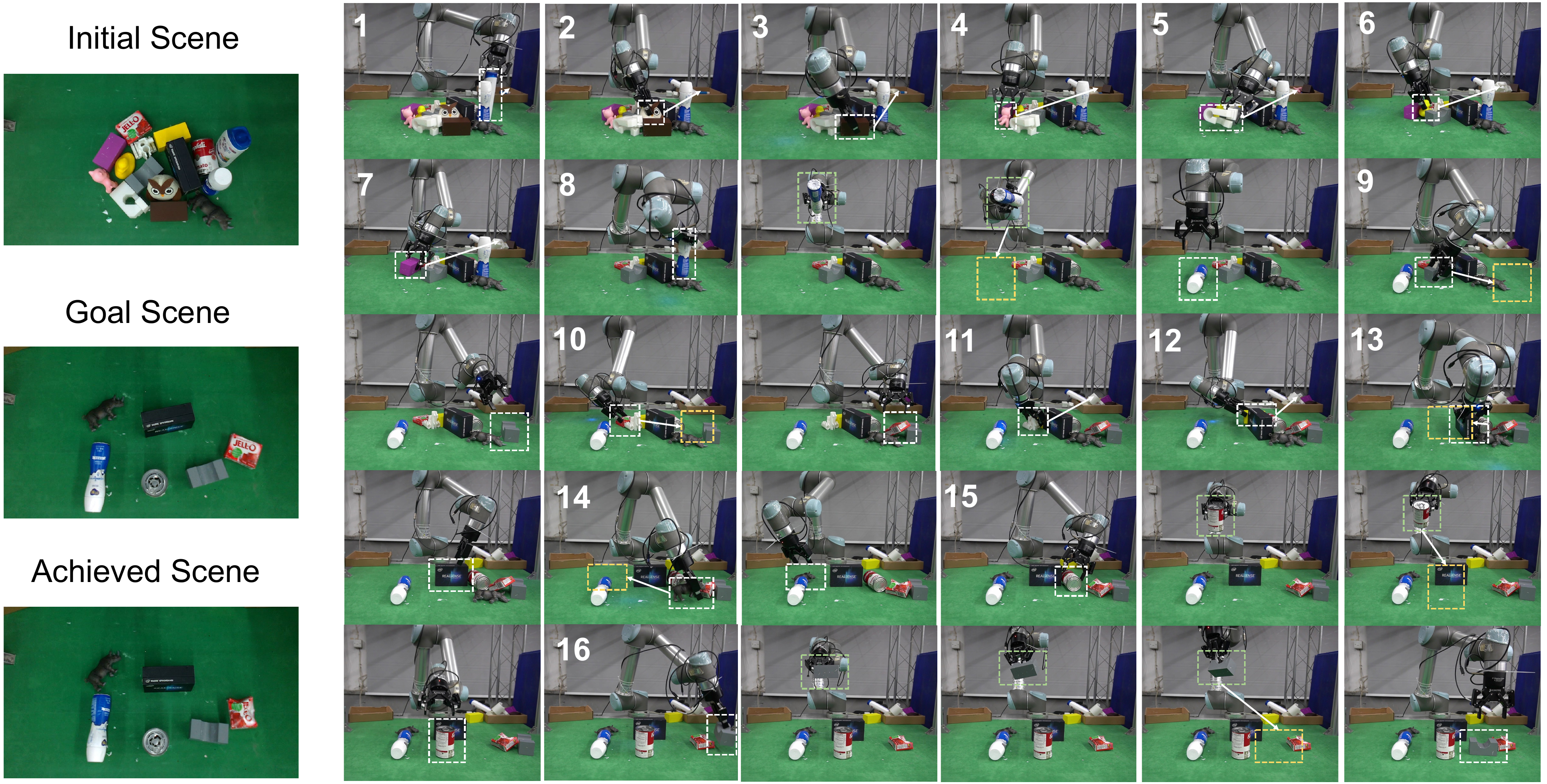}
  \vspace{-0.7cm}
  \caption{An example sequence of real-world object rearrangement, where the robot is faced with many everyday objects (clutter), only a subset objects in the goal scene (selectivity), occupied goal locations of some objects (swap) and {6-DoF pose transformations}. White boxes mark the manipulated objects, green boxes highlight the active {perception} process, white arrows show the moving directions, and yellow boxes show the planned places. Each number refers to the manipulation process of one object. Our method allows the robot to plan a sequence of actions to grasp each object, rotate it for active {perception} if necessary, and place it to the desired goal pose or outside.}
  \label{fig:real-sequence}
  \vspace{-0.7cm}
\end{figure*}

\section{Conclusion}
In this work, we study the task of unknown object rearrangement with swap, clutter, selectivity and {6-DoF pose transformations}. 
We propose GSP, a dual-loop system with the decoupled structure as prior for efficient unknown object rearrangement. We introduce the skill of {see} for self-confident object matching as the inner loop, which improves {the in-hand object matching}. Then grasp and place planning serves as the outer loop. The place policy is rule-based, while the grasp policy is learned to be aware of noisy object matching and grasp capability, guided by task-level rewards. We evaluate our method in simulation and the real world. Results show that GSP can conduct unknown object rearrangement with higher success rates using {fewer} steps. 

{
{\bf Future Work.} To further employ GSP in industrial applications, there are some remaining problems. For example, for 6-DoF transformation tasks, objects may need to be reoriented several times to achieve their target poses. In future work, we plan to make efforts on this problem for unknown objects. In addition, our system may struggle with stacked goal configurations due to the assumption of the goal scene. The assumption can be relieved by extending the goal representation to point clouds. It is worth noting that our theoretical analysis can be extended beyond the 2D tabletop setting. This is because our problem formulation is based on the feasibility of the $M$-to-$N$ object matching between current and goal objects, as well as the 1-to-$N$ object matching between the in-hand object and goal objects. Other settings, such as 3D tabletop and room-level environments, only affect the degree of uncertainty in object matching. We believe that extending GSP to 3D tabletop setting, and further to the room-level environment {for mobile manipulation}, is a meaningful direction for future work. }




\bibliographystyle{IEEEtran}
\bibliography{IEEEabrv,ref}

\begin{thebibliography}{10}
\providecommand{\url}[1]{#1}
\csname url@samestyle\endcsname
\providecommand{\newblock}{\relax}
\providecommand{\bibinfo}[2]{#2}
\providecommand{\BIBentrySTDinterwordspacing}{\spaceskip=0pt\relax}
\providecommand{\BIBentryALTinterwordstretchfactor}{4}
\providecommand{\BIBentryALTinterwordspacing}{\spaceskip=\fontdimen2\font plus
\BIBentryALTinterwordstretchfactor\fontdimen3\font minus \fontdimen4\font\relax}
\providecommand{\BIBforeignlanguage}[2]{{%
\expandafter\ifx\csname l@#1\endcsname\relax
\typeout{** WARNING: IEEEtran.bst: No hyphenation pattern has been}%
\typeout{** loaded for the language `#1'. Using the pattern for}%
\typeout{** the default language instead.}%
\else
\language=\csname l@#1\endcsname
\fi
#2}}
\providecommand{\BIBdecl}{\relax}
\BIBdecl

\bibitem{batra2020rearrangement}
D.~Batra, A.~X. Chang, S.~Chernova, A.~J. Davison, J.~Deng, V.~Koltun, S.~Levine, J.~Malik, I.~Mordatch, R.~Mottaghi \emph{et~al.}, ``Rearrangement: A challenge for embodied ai,'' \emph{arXiv preprint arXiv:2011.01975}, 2020.

\bibitem{krontiris2015dealing}
A.~Krontiris and K.~E. Bekris, ``Dealing with difficult instances of object rearrangement.'' in \emph{Robotics: Science and Systems}, vol. 1123, 2015.

\bibitem{king2016rearrangement}
J.~E. King, M.~Cognetti, and S.~S. Srinivasa, ``Rearrangement planning using object-centric and robot-centric action spaces,'' in \emph{2016 IEEE International Conference on Robotics and Automation (ICRA)}.\hskip 1em plus 0.5em minus 0.4em\relax IEEE, 2016, pp. 3940--3947.

\bibitem{huang2019large}
E.~Huang, Z.~Jia, and M.~T. Mason, ``Large-scale multi-object rearrangement,'' in \emph{2019 International Conference on Robotics and Automation (ICRA)}.\hskip 1em plus 0.5em minus 0.4em\relax IEEE, 2019, pp. 211--218.

\bibitem{song2020multi}
H.~Song, J.~A. Haustein, W.~Yuan, K.~Hang, M.~Y. Wang, D.~Kragic, and J.~A. Stork, ``Multi-object rearrangement with monte carlo tree search: A case study on planar nonprehensile sorting,'' in \emph{2020 IEEE/RSJ International Conference on Intelligent Robots and Systems (IROS)}.\hskip 1em plus 0.5em minus 0.4em\relax IEEE, 2020, pp. 9433--9440.

\bibitem{gao2021minimizing}
K.~Gao, S.~Feng, and J.~Yu, ``On minimizing the number of running buffers for tabletop rearrangement,'' \emph{Robotics: Science and Systems XVII}, 2021.

\bibitem{gao2022fast}
K.~Gao, D.~Lau, B.~Huang, K.~E. Bekris, and J.~Yu, ``Fast high-quality tabletop rearrangement in bounded workspace,'' in \emph{2022 International Conference on Robotics and Automation (ICRA)}.\hskip 1em plus 0.5em minus 0.4em\relax IEEE, 2022, pp. 1961--1967.

\bibitem{wada2022reorientbot}
K.~Wada, S.~James, and A.~J. Davison, ``Reorientbot: Learning object reorientation for specific-posed placement,'' in \emph{2022 International Conference on Robotics and Automation (ICRA)}.\hskip 1em plus 0.5em minus 0.4em\relax IEEE, 2022, pp. 8252--8258.

\bibitem{xu2022efficient}
K.~Xu, H.~Yu, R.~Huang, D.~Guo, Y.~Wang, and R.~Xiong, ``Efficient object manipulation to an arbitrary goal pose: Learning-based anytime prioritized planning,'' in \emph{2022 International Conference on Robotics and Automation (ICRA)}.\hskip 1em plus 0.5em minus 0.4em\relax IEEE, 2022, pp. 7277--7283.

\bibitem{tian2022sampling}
H.~Tian, C.~Song, C.~Wang, X.~Zhang, and J.~Pan, ``Sampling-based planning for retrieving near-cylindrical objects in cluttered scenes using hierarchical graphs,'' \emph{IEEE Transactions on Robotics}, vol.~39, no.~1, pp. 165--182, 2022.

\bibitem{king2017unobservable}
J.~E. King, V.~Ranganeni, and S.~S. Srinivasa, ``Unobservable monte carlo planning for nonprehensile rearrangement tasks,'' in \emph{2017 IEEE International Conference on Robotics and Automation (ICRA)}.\hskip 1em plus 0.5em minus 0.4em\relax IEEE, 2017, pp. 4681--4688.

\bibitem{nam2019planning}
C.~Nam, J.~Lee, Y.~Cho, J.~Lee, D.~H. Kim, and C.~Kim, ``Planning for target retrieval using a robotic manipulator in cluttered and occluded environments,'' \emph{arXiv preprint arXiv:1907.03956}, 2019.

\bibitem{garrett2020online}
C.~R. Garrett, C.~Paxton, T.~Lozano-P{\'e}rez, L.~P. Kaelbling, and D.~Fox, ``Online replanning in belief space for partially observable task and motion problems,'' in \emph{2020 IEEE International Conference on Robotics and Automation (ICRA)}.\hskip 1em plus 0.5em minus 0.4em\relax IEEE, 2020, pp. 5678--5684.

\bibitem{driess2020deep}
D.~Driess, J.-S. Ha, and M.~Toussaint, ``Deep visual reasoning: Learning to predict action sequences for task and motion planning from an initial scene image,'' in \emph{Robotics: Science and Systems 2020 (RSS 2020)}.\hskip 1em plus 0.5em minus 0.4em\relax RSS Foundation, 2020.

\bibitem{liu2021ocrtoc}
Z.~Liu, W.~Liu, Y.~Qin, F.~Xiang, M.~Gou, S.~Xin, M.~A. Roa, B.~Calli, H.~Su, Y.~Sun \emph{et~al.}, ``Ocrtoc: A cloud-based competition and benchmark for robotic grasping and manipulation,'' \emph{IEEE Robotics and Automation Letters}, vol.~7, no.~1, pp. 486--493, 2021.

\bibitem{zhu2021hierarchical}
Y.~Zhu, J.~Tremblay, S.~Birchfield, and Y.~Zhu, ``Hierarchical planning for long-horizon manipulation with geometric and symbolic scene graphs,'' in \emph{2021 IEEE International Conference on Robotics and Automation (ICRA)}.\hskip 1em plus 0.5em minus 0.4em\relax IEEE, 2021, pp. 6541--6548.

\bibitem{labbe2020monte}
Y.~Labb{\'e}, S.~Zagoruyko, I.~Kalevatykh, I.~Laptev, J.~Carpentier, M.~Aubry, and J.~Sivic, ``Monte-carlo tree search for efficient visually guided rearrangement planning,'' \emph{IEEE Robotics and Automation Letters}, vol.~5, no.~2, pp. 3715--3722, 2020.

\bibitem{xu2021learning}
Z.~Xu, Z.~He, J.~Wu, and S.~Song, ``Learning 3d dynamic scene representations for robot manipulation,'' in \emph{Conference on Robot Learning}.\hskip 1em plus 0.5em minus 0.4em\relax PMLR, 2021, pp. 126--142.

\bibitem{zeng2021transporter}
A.~Zeng, P.~Florence, J.~Tompson, S.~Welker, J.~Chien, M.~Attarian, T.~Armstrong, I.~Krasin, D.~Duong, V.~Sindhwani \emph{et~al.}, ``Transporter networks: Rearranging the visual world for robotic manipulation,'' in \emph{Conference on Robot Learning}.\hskip 1em plus 0.5em minus 0.4em\relax PMLR, 2021, pp. 726--747.

\bibitem{qureshi2021nerp}
A.~H. Qureshi, A.~Mousavian, C.~Paxton, M.~C. Yip, and D.~Fox, ``Nerp: Neural rearrangement planning for unknown objects,'' in \emph{Robotics: Science and Systems (RSS)}, 2020.

\bibitem{goyal2022ifor}
A.~Goyal, A.~Mousavian, C.~Paxton, Y.-W. Chao, B.~Okorn, J.~Deng, and D.~Fox, ``Ifor: Iterative flow minimization for robotic object rearrangement,'' in \emph{Proceedings of the IEEE/CVF Conference on Computer Vision and Pattern Recognition}, 2022, pp. 14\,787--14\,797.

\bibitem{tang2023selective}
B.~Tang and G.~S. Sukhatme, ``Selective object rearrangement in clutter,'' in \emph{Conference on Robot Learning}.\hskip 1em plus 0.5em minus 0.4em\relax PMLR, 2023, pp. 1001--1010.

\bibitem{bajcsy1988active}
R.~Bajcsy, ``Active perception,'' \emph{Proceedings of the IEEE}, vol.~76, no.~8, pp. 966--1005, 1988.

\bibitem{aloimonos1988active}
J.~Aloimonos, I.~Weiss, and A.~Bandyopadhyay, ``Active vision,'' \emph{International journal of computer vision}, vol.~1, pp. 333--356, 1988.

\bibitem{chen2011active}
S.~Chen, Y.~Li, and N.~M. Kwok, ``Active vision in robotic systems: A survey of recent developments,'' \emph{The International Journal of Robotics Research}, vol.~30, no.~11, pp. 1343--1377, 2011.

\bibitem{radford2021learning}
A.~Radford, J.~W. Kim, C.~Hallacy, A.~Ramesh, G.~Goh, S.~Agarwal, G.~Sastry, A.~Askell, P.~Mishkin, J.~Clark \emph{et~al.}, ``Learning transferable visual models from natural language supervision,'' in \emph{International Conference on Machine Learning}.\hskip 1em plus 0.5em minus 0.4em\relax PMLR, 2021, pp. 8748--8763.

\bibitem{wang2021uniform}
R.~Wang, K.~Gao, D.~Nakhimovich, J.~Yu, and K.~E. Bekris, ``Uniform object rearrangement: From complete monotone primitives to efficient non-monotone informed search,'' in \emph{2021 IEEE International Conference on Robotics and Automation (ICRA)}.\hskip 1em plus 0.5em minus 0.4em\relax IEEE, 2021, pp. 6621--6627.

\bibitem{wang2022efficient}
R.~Wang, Y.~Miao, and K.~E. Bekris, ``Efficient and high-quality prehensile rearrangement in cluttered and confined spaces,'' in \emph{2022 International Conference on Robotics and Automation (ICRA)}.\hskip 1em plus 0.5em minus 0.4em\relax IEEE, 2022, pp. 1968--1975.

\bibitem{zeng2018semantic}
Z.~Zeng, Z.~Zhou, Z.~Sui, and O.~C. Jenkins, ``Semantic robot programming for goal-directed manipulation in cluttered scenes,'' in \emph{2018 IEEE international conference on robotics and automation (ICRA)}.\hskip 1em plus 0.5em minus 0.4em\relax IEEE, 2018, pp. 7462--7469.

\bibitem{xu2021efficient}
K.~Xu, H.~Yu, Q.~Lai, Y.~Wang, and R.~Xiong, ``Efficient learning of goal-oriented push-grasping synergy in clutter,'' \emph{IEEE Robotics and Automation Letters}, vol.~6, no.~4, pp. 6337--6344, 2021.

\bibitem{cheng2022learning}
S.~Cheng, K.~Mo, and L.~Shao, ``Learning to regrasp by learning to place,'' in \emph{Conference on Robot Learning}.\hskip 1em plus 0.5em minus 0.4em\relax PMLR, 2022, pp. 277--286.

\bibitem{simeonov2022neural}
A.~Simeonov, Y.~Du, A.~Tagliasacchi, J.~B. Tenenbaum, A.~Rodriguez, P.~Agrawal, and V.~Sitzmann, ``Neural descriptor fields: Se (3)-equivariant object representations for manipulation,'' in \emph{2022 International Conference on Robotics and Automation (ICRA)}.\hskip 1em plus 0.5em minus 0.4em\relax IEEE, 2022, pp. 6394--6400.

\bibitem{simeonov2023se}
A.~Simeonov, Y.~Du, Y.-C. Lin, A.~R. Garcia, L.~P. Kaelbling, T.~Lozano-P{\'e}rez, and P.~Agrawal, ``Se (3)-equivariant relational rearrangement with neural descriptor fields,'' in \emph{Conference on Robot Learning}.\hskip 1em plus 0.5em minus 0.4em\relax PMLR, 2023, pp. 835--846.

\bibitem{chun2023local}
E.~Chun, Y.~Du, A.~Simeonov, T.~Lozano-Perez, and L.~Kaelbling, ``Local neural descriptor fields: Locally conditioned object representations for manipulation,'' in \emph{2023 IEEE International Conference on Robotics and Automation (ICRA)}, 2023, pp. 1830--1836.

\bibitem{shridhar2022cliport}
M.~Shridhar, L.~Manuelli, and D.~Fox, ``Cliport: What and where pathways for robotic manipulation,'' in \emph{Conference on Robot Learning}.\hskip 1em plus 0.5em minus 0.4em\relax PMLR, 2022, pp. 894--906.

\bibitem{liu2022structformer}
W.~Liu, C.~Paxton, T.~Hermans, and D.~Fox, ``Structformer: Learning spatial structure for language-guided semantic rearrangement of novel objects,'' in \emph{2022 International Conference on Robotics and Automation (ICRA)}.\hskip 1em plus 0.5em minus 0.4em\relax IEEE, 2022, pp. 6322--6329.

\bibitem{liu2022structdiffusion}
W.~Liu, T.~Hermans, S.~Chernova, and C.~Paxton, ``Structdiffusion: Object-centric diffusion for semantic rearrangement of novel objects,'' in \emph{Workshop on Language and Robotics at CoRL 2022}, 2022.

\bibitem{paxton2022predicting}
C.~Paxton, C.~Xie, T.~Hermans, and D.~Fox, ``Predicting stable configurations for semantic placement of novel objects,'' in \emph{Conference on Robot Learning}.\hskip 1em plus 0.5em minus 0.4em\relax PMLR, 2022, pp. 806--815.

\bibitem{goodwin2022semantically}
W.~Goodwin, S.~Vaze, I.~Havoutis, and I.~Posner, ``Semantically grounded object matching for robust robotic scene rearrangement,'' in \emph{2022 International Conference on Robotics and Automation (ICRA)}.\hskip 1em plus 0.5em minus 0.4em\relax IEEE, 2022, pp. 11\,138--11\,144.

\bibitem{xu2023joint}
K.~Xu, S.~Zhao, Z.~Zhou, Z.~Li, H.~Pi, Y.~Zhu, Y.~Wang, and R.~Xiong, ``A joint modeling of vision-language-action for target-oriented grasping in clutter,'' in \emph{2023 IEEE International Conference on Robotics and Automation (ICRA)}, 2023, pp. 11\,597--11\,604.

\bibitem{xu2023object}
Z.~Xu, K.~Xu, R.~Xiong, and Y.~Wang, ``Object-centric inference for language conditioned placement: A foundation model based approach,'' in \emph{2023 International Conference on Advanced Robotics and Mechatronics (ICARM)}, 2023, pp. 203--208.

\bibitem{eidenberger2010active}
R.~Eidenberger and J.~Scharinger, ``Active perception and scene modeling by planning with probabilistic 6d object poses,'' in \emph{2010 IEEE/RSJ international conference on intelligent robots and systems}.\hskip 1em plus 0.5em minus 0.4em\relax IEEE, 2010, pp. 1036--1043.

\bibitem{bohg2017interactive}
J.~Bohg, K.~Hausman, B.~Sankaran, O.~Brock, D.~Kragic, S.~Schaal, and G.~S. Sukhatme, ``Interactive perception: Leveraging action in perception and perception in action,'' \emph{IEEE Transactions on Robotics}, vol.~33, no.~6, pp. 1273--1291, 2017.

\bibitem{ren2019domain}
X.~Ren, J.~Luo, E.~Solowjow, J.~A. Ojea, A.~Gupta, A.~Tamar, and P.~Abbeel, ``Domain randomization for active pose estimation,'' in \emph{2019 International Conference on Robotics and Automation (ICRA)}.\hskip 1em plus 0.5em minus 0.4em\relax IEEE, 2019, pp. 7228--7234.

\bibitem{calli2018active}
B.~Calli, W.~Caarls, M.~Wisse, and P.~P. Jonker, ``Active vision via extremum seeking for robots in unstructured environments: Applications in object recognition and manipulation,'' \emph{IEEE Transactions on Automation Science and Engineering}, vol.~15, no.~4, pp. 1810--1822, 2018.

\bibitem{fu2019active}
X.~Fu, Y.~Liu, and Z.~Wang, ``Active learning-based grasp for accurate industrial manipulation,'' \emph{IEEE Transactions on Automation Science and Engineering}, vol.~16, no.~4, pp. 1610--1618, 2019.

\bibitem{morrison2019multi}
D.~Morrison, P.~Corke, and J.~Leitner, ``Multi-view picking: Next-best-view reaching for improved grasping in clutter,'' in \emph{2019 International Conference on Robotics and Automation (ICRA)}.\hskip 1em plus 0.5em minus 0.4em\relax IEEE, 2019, pp. 8762--8768.

\bibitem{sun2019active}
Y.~Sun, M.~Liu, and M.~Q.-H. Meng, ``Active perception for foreground segmentation: An rgb-d data-based background modeling method,'' \emph{IEEE Transactions on Automation Science and Engineering}, vol.~16, no.~4, pp. 1596--1609, 2019.

\bibitem{saito2021select}
N.~Saito, T.~Ogata, S.~Funabashi, H.~Mori, and S.~Sugano, ``How to select and use tools?: Active perception of target objects using multimodal deep learning,'' \emph{IEEE Robotics and Automation Letters}, vol.~6, no.~2, pp. 2517--2524, 2021.

\bibitem{cheng2018reinforcement}
R.~Cheng, A.~Agarwal, and K.~Fragkiadaki, ``Reinforcement learning of active vision for manipulating objects under occlusions,'' in \emph{Conference on Robot Learning}.\hskip 1em plus 0.5em minus 0.4em\relax PMLR, 2018, pp. 422--431.

\bibitem{jangir2022look}
R.~Jangir, N.~Hansen, S.~Ghosal, M.~Jain, and X.~Wang, ``Look closer: Bridging egocentric and third-person views with transformers for robotic manipulation,'' \emph{IEEE Robotics and Automation Letters}, vol.~7, no.~2, pp. 3046--3053, 2022.

\bibitem{zeng2022robotic}
A.~Zeng, S.~Song, K.-T. Yu, E.~Donlon, F.~R. Hogan, M.~Bauza, D.~Ma, O.~Taylor, M.~Liu, E.~Romo \emph{et~al.}, ``Robotic pick-and-place of novel objects in clutter with multi-affordance grasping and cross-domain image matching,'' \emph{The International Journal of Robotics Research}, vol.~41, no.~7, pp. 690--705, 2022.

\bibitem{ren2023robot}
H.~Ren and A.~H. Qureshi, ``Robot active neural sensing and planning in unknown cluttered environments,'' \emph{IEEE Transactions on Robotics}, 2023.

\bibitem{danielczuk2019mechanical}
M.~Danielczuk, A.~Kurenkov, A.~Balakrishna, M.~Matl, D.~Wang, R.~Mart{\'\i}n-Mart{\'\i}n, A.~Garg, S.~Savarese, and K.~Goldberg, ``Mechanical search: Multi-step retrieval of a target object occluded by clutter,'' in \emph{2019 International Conference on Robotics and Automation (ICRA)}.\hskip 1em plus 0.5em minus 0.4em\relax IEEE, 2019, pp. 1614--1621.

\bibitem{novkovic2020object}
T.~Novkovic, R.~Pautrat, F.~Furrer, M.~Breyer, R.~Siegwart, and J.~Nieto, ``Object finding in cluttered scenes using interactive perception,'' in \emph{2020 IEEE International Conference on Robotics and Automation (ICRA)}.\hskip 1em plus 0.5em minus 0.4em\relax IEEE, 2020, pp. 8338--8344.

\bibitem{bejjani2021occlusion}
W.~Bejjani, W.~C. Agboh, M.~R. Dogar, and M.~Leonetti, ``Occlusion-aware search for object retrieval in clutter,'' in \emph{2021 IEEE/RSJ International Conference on Intelligent Robots and Systems (IROS)}.\hskip 1em plus 0.5em minus 0.4em\relax IEEE, 2021, pp. 4678--4685.

\bibitem{miao2022safe}
Y.~Miao, R.~Wang, and K.~Bekris, ``Safe, occlusion-aware manipulation for online object reconstruction in confined spaces,'' in \emph{The International Symposium of Robotics Research}.\hskip 1em plus 0.5em minus 0.4em\relax Springer, 2022, pp. 268--284.

\bibitem{huang2022mechanical}
H.~Huang, L.~Fu, M.~Danielczuk, C.~M. Kim, Z.~Tam, J.~Ichnowski, A.~Angelova, B.~Ichter, and K.~Goldberg, ``Mechanical search on shelves with efficient stacking and destacking of objects,'' in \emph{The International Symposium of Robotics Research}.\hskip 1em plus 0.5em minus 0.4em\relax Springer, 2022, pp. 205--221.

\bibitem{das2018embodied}
A.~Das, S.~Datta, G.~Gkioxari, S.~Lee, D.~Parikh, and D.~Batra, ``Embodied question answering,'' in \emph{Proceedings of the IEEE conference on computer vision and pattern recognition}, 2018, pp. 1--10.

\bibitem{deitke2020robothor}
M.~Deitke, W.~Han, A.~Herrasti, A.~Kembhavi, E.~Kolve, R.~Mottaghi, J.~Salvador, D.~Schwenk, E.~VanderBilt, M.~Wallingford \emph{et~al.}, ``Robothor: An open simulation-to-real embodied ai platform,'' in \emph{Proceedings of the IEEE/CVF conference on computer vision and pattern recognition}, 2020, pp. 3164--3174.

\bibitem{savva2019habitat}
M.~Savva, A.~Kadian, O.~Maksymets, Y.~Zhao, E.~Wijmans, B.~Jain, J.~Straub, J.~Liu, V.~Koltun, J.~Malik \emph{et~al.}, ``Habitat: A platform for embodied ai research,'' in \emph{Proceedings of the IEEE/CVF international conference on computer vision}, 2019, pp. 9339--9347.

\bibitem{shridhar2020alfred}
M.~Shridhar, J.~Thomason, D.~Gordon, Y.~Bisk, W.~Han, R.~Mottaghi, L.~Zettlemoyer, and D.~Fox, ``Alfred: A benchmark for interpreting grounded instructions for everyday tasks,'' in \emph{Proceedings of the IEEE/CVF conference on computer vision and pattern recognition}, 2020, pp. 10\,740--10\,749.

\bibitem{xia2018gibson}
F.~Xia, A.~R. Zamir, Z.~He, A.~Sax, J.~Malik, and S.~Savarese, ``Gibson env: Real-world perception for embodied agents,'' in \emph{Proceedings of the IEEE conference on computer vision and pattern recognition}, 2018, pp. 9068--9079.

\bibitem{fan2018surreal}
L.~Fan, Y.~Zhu, J.~Zhu, Z.~Liu, O.~Zeng, A.~Gupta, J.~Creus-Costa, S.~Savarese, and L.~Fei-Fei, ``Surreal: Open-source reinforcement learning framework and robot manipulation benchmark,'' in \emph{Conference on Robot Learning}.\hskip 1em plus 0.5em minus 0.4em\relax PMLR, 2018, pp. 767--782.

\bibitem{james2020rlbench}
S.~James, Z.~Ma, D.~R. Arrojo, and A.~J. Davison, ``Rlbench: The robot learning benchmark \& learning environment,'' \emph{IEEE Robotics and Automation Letters}, vol.~5, no.~2, pp. 3019--3026, 2020.

\bibitem{yu2020meta}
T.~Yu, D.~Quillen, Z.~He, R.~Julian, K.~Hausman, C.~Finn, and S.~Levine, ``Meta-world: A benchmark and evaluation for multi-task and meta reinforcement learning,'' in \emph{Conference on robot learning}.\hskip 1em plus 0.5em minus 0.4em\relax PMLR, 2020, pp. 1094--1100.

\bibitem{ehsani2021manipulathor}
K.~Ehsani, W.~Han, A.~Herrasti, E.~VanderBilt, L.~Weihs, E.~Kolve, A.~Kembhavi, and R.~Mottaghi, ``Manipulathor: A framework for visual object manipulation,'' in \emph{Proceedings of the IEEE/CVF conference on computer vision and pattern recognition}, 2021, pp. 4497--4506.

\bibitem{james2017transferring}
S.~James, A.~J. Davison, and E.~Johns, ``Transferring end-to-end visuomotor control from simulation to real world for a multi-stage task,'' in \emph{Conference on Robot Learning}.\hskip 1em plus 0.5em minus 0.4em\relax PMLR, 2017, pp. 334--343.

\bibitem{james2019sim}
S.~James, P.~Wohlhart, M.~Kalakrishnan, D.~Kalashnikov, A.~Irpan, J.~Ibarz, S.~Levine, R.~Hadsell, and K.~Bousmalis, ``Sim-to-real via sim-to-sim: Data-efficient robotic grasping via randomized-to-canonical adaptation networks,'' in \emph{Proceedings of the IEEE/CVF conference on computer vision and pattern recognition}, 2019, pp. 12\,627--12\,637.

\bibitem{tobin2017domain}
J.~Tobin, R.~Fong, A.~Ray, J.~Schneider, W.~Zaremba, and P.~Abbeel, ``Domain randomization for transferring deep neural networks from simulation to the real world,'' in \emph{2017 IEEE/RSJ international conference on intelligent robots and systems (IROS)}.\hskip 1em plus 0.5em minus 0.4em\relax IEEE, 2017, pp. 23--30.

\bibitem{han2018complexity}
S.~D. Han, N.~M. Stiffler, A.~Krontiris, K.~E. Bekris, and J.~Yu, ``Complexity results and fast methods for optimal tabletop rearrangement with overhand grasps,'' \emph{The International Journal of Robotics Research}, vol.~37, no. 13-14, pp. 1775--1795, 2018.

\bibitem{appendix}
K.~Xu, ``Paper with appendix,'' \url{https://arxiv.org/abs/2402.15402}, 2024.

\bibitem{zhou2023open}
Z.~Zhou, Y.~Yang, Y.~Wang, and R.~Xiong, ``Open-set object detection using classification-free object proposal and instance-level contrastive learning,'' \emph{IEEE Robotics and Automation Letters}, vol.~8, no.~3, pp. 1691--1698, 2023.

\bibitem{he2016deep}
K.~He, X.~Zhang, S.~Ren, and J.~Sun, ``Deep residual learning for image recognition,'' in \emph{Proceedings of the IEEE conference on computer vision and pattern recognition}, 2016, pp. 770--778.

\bibitem{gao2021clip}
P.~Gao, S.~Geng, R.~Zhang, T.~Ma, R.~Fang, Y.~Zhang, H.~Li, and Y.~Qiao, ``Clip-adapter: Better vision-language models with feature adapters,'' \emph{arXiv preprint arXiv:2110.04544}, 2021.

\bibitem{teed2020raft}
Z.~Teed and J.~Deng, ``Raft: Recurrent all-pairs field transforms for optical flow,'' in \emph{Computer Vision--ECCV 2020: 16th European Conference, Glasgow, UK, August 23--28, 2020, Proceedings, Part II 16}.\hskip 1em plus 0.5em minus 0.4em\relax Springer, 2020, pp. 402--419.

\bibitem{coumans2021}
E.~Coumans and Y.~Bai, ``Pybullet, a python module for physics simulation for games, robotics and machine learning,'' \url{http://pybullet.org}, 2016--2021.

\bibitem{fang2020graspnet}
H.-S. Fang, C.~Wang, M.~Gou, and C.~Lu, ``Graspnet-1billion: A large-scale benchmark for general object grasping,'' in \emph{Proceedings of the IEEE/CVF conference on computer vision and pattern recognition}, 2020, pp. 11\,444--11\,453.

\bibitem{mildenhall2020nerf}
B.~Mildenhall, P.~P. Srinivasan, M.~Tancik, J.~T. Barron, R.~Ramamoorthi, and R.~Ng, ``Nerf: Representing scenes as neural radiance fields for view synthesis,'' in \emph{Computer Vision--ECCV 2020: 16th European Conference, Glasgow, UK, August 23--28, 2020, Proceedings, Part I}, 2020, pp. 405--421.

\bibitem{vaswani2017attention}
A.~Vaswani, N.~Shazeer, N.~Parmar, J.~Uszkoreit, L.~Jones, A.~N. Gomez, {\L}.~Kaiser, and I.~Polosukhin, ``Attention is all you need,'' \emph{Advances in neural information processing systems}, vol.~30, 2017.

\bibitem{sarlin2020superglue}
P.-E. Sarlin, D.~DeTone, T.~Malisiewicz, and A.~Rabinovich, ``Superglue: Learning feature matching with graph neural networks,'' in \emph{Proceedings of the IEEE/CVF conference on computer vision and pattern recognition}, 2020, pp. 4938--4947.

\bibitem{sun2021loftr}
J.~Sun, Z.~Shen, Y.~Wang, H.~Bao, and X.~Zhou, ``Loftr: Detector-free local feature matching with transformers,'' in \emph{Proceedings of the IEEE/CVF conference on computer vision and pattern recognition}, 2021, pp. 8922--8931.

\bibitem{nair2022r3m}
S.~Nair, A.~Rajeswaran, V.~Kumar, C.~Finn, and A.~Gupta, ``R3m: A universal visual representation for robot manipulation,'' in \emph{6th Annual Conference on Robot Learning}, 2022.

\bibitem{xiao2022masked}
T.~Xiao, I.~Radosavovic, T.~Darrell, and J.~Malik, ``Masked visual pre-training for motor control,'' \emph{arXiv preprint arXiv:2203.06173}, 2022.

\bibitem{hinterstoisser2013model}
S.~Hinterstoisser, V.~Lepetit, S.~Ilic, S.~Holzer, G.~Bradski, K.~Konolige, and N.~Navab, ``Model based training, detection and pose estimation of texture-less 3d objects in heavily cluttered scenes,'' in \emph{Computer Vision--ACCV 2012: 11th Asian Conference on Computer Vision, Daejeon, Korea, November 5-9, 2012, Revised Selected Papers, Part I 11}.\hskip 1em plus 0.5em minus 0.4em\relax Springer, 2013, pp. 548--562.

\bibitem{mandlekar2021matters}
A.~Mandlekar, D.~Xu, J.~Wong, S.~Nasiriany, C.~Wang, R.~Kulkarni, L.~Fei-Fei, S.~Savarese, Y.~Zhu, and R.~Mart{\'\i}n-Mart{\'\i}n, ``What matters in learning from offline human demonstrations for robot manipulation,'' \emph{arXiv preprint arXiv:2108.03298}, 2021.

\bibitem{nachar2008mann}
N.~Nachar \emph{et~al.}, ``The mann-whitney u: A test for assessing whether two independent samples come from the same distribution,'' \emph{Tutorials in quantitative Methods for Psychology}, vol.~4, no.~1, pp. 13--20, 2008.

\bibitem{haarnoja2018soft}
T.~Haarnoja, A.~Zhou, K.~Hartikainen, G.~Tucker, S.~Ha, J.~Tan, V.~Kumar, H.~Zhu, A.~Gupta, P.~Abbeel \emph{et~al.}, ``Soft actor-critic algorithms and applications,'' \emph{arXiv preprint arXiv:1812.05905}, 2018.

\bibitem{christodoulou2019soft}
P.~Christodoulou, ``Soft actor-critic for discrete action settings,'' \emph{arXiv preprint arXiv:1910.07207}, 2019.

\end{thebibliography}


{
\newpage
\appendix
\subsection{Proof of Theorem 1}
\label{sec:appendix-a}

\begin{figure}[t]
  \centering
  \includegraphics[width=\linewidth]{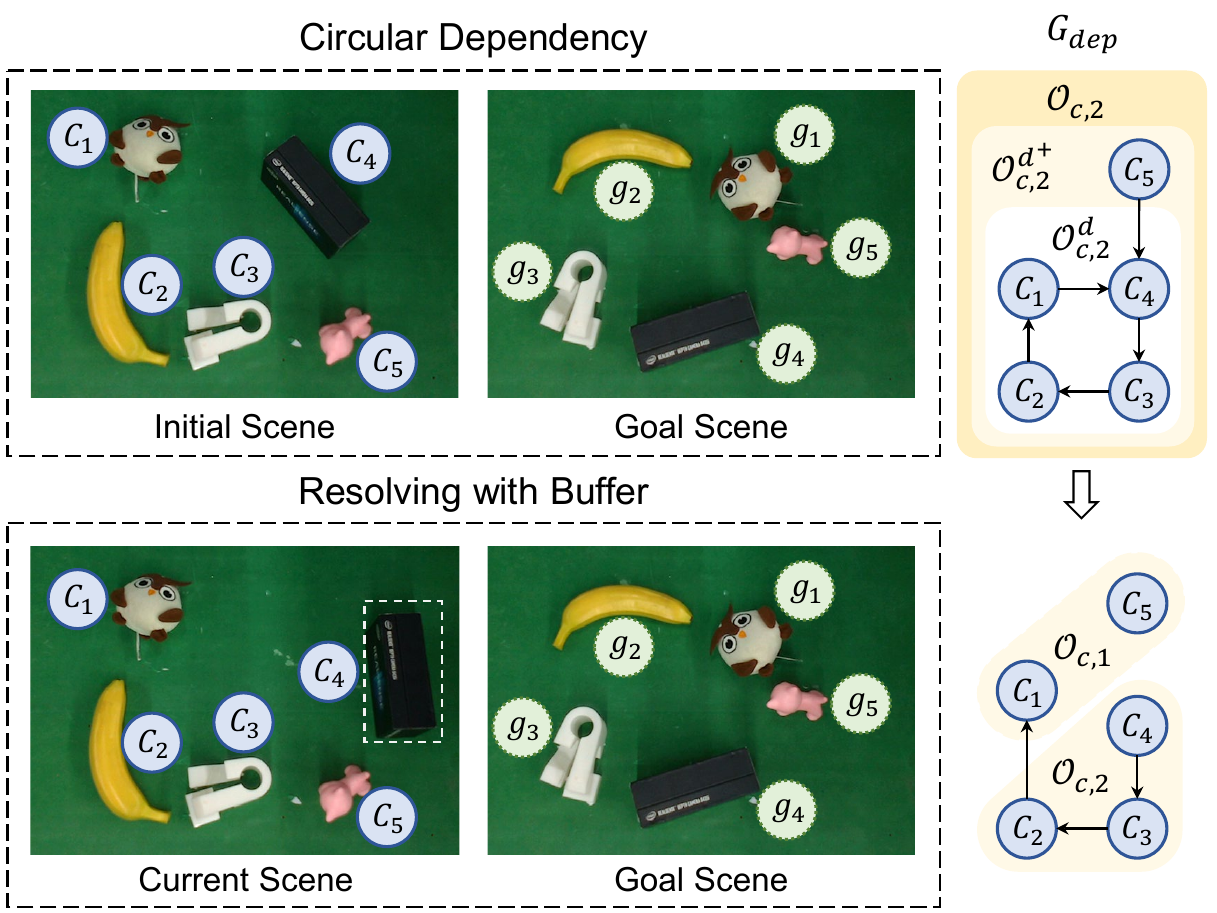}
  \vspace{-0.6cm}
  \caption{An example to illustrate circular dependency, the corresponding dependency digraph, and buffer. After moving $c_4$ to the buffer~(marked with the white box), the circular dependency breaks.}
  \label{fig:dependency-graph}
  \vspace{-0.6cm}
\end{figure}

{\bf Circular Dependency.} It is possible to have circular dependencies~\cite{gao2021minimizing,gao2022fast}. An example is shown in Fig.~\ref{fig:dependency-graph}. The objects cannot be moved to the goals until the circular dependency is broken. Formally, these objects form cycles in the graph $G_{dep}$, which we denote the set of circular dependent objects as $\mathcal{O}_{c,2}^d$:
\begin{equation}
\begin{aligned}
\mathcal{O}_{c,2}^d:\{o_c^{i_0}\big|&\exists o_c^{i_1},o_c^{i_2},...,o_c^{i_n}\in\mathcal{O}_c,n\geq1, \\
&(o_c^{i_0},o_c^{i_1}),(o_c^{i_1}, o_c^{i_2}),...,(o_c^{i_n}, o_c^{i_0})\in A_{dep}\}\nonumber
\end{aligned}
\end{equation}
Furthermore, if an object has a path directed to $\mathcal{O}_{c,2}^d$, it also cannot be moved to the goals before the circular dependency breaks. The set including this kind of objects, along with objects in $\mathcal{O}_{c,2}^d$, is denoted as $\mathcal{O}_{c,2}^{d^+}$: 
\begin{equation}
\mathcal{O}_{c,2}^{d^+}:\mathcal{O}_{c,2}^d \cup \left\{o_c^{i_0}\big|\exists o_c^{i_1}\in\mathcal{O}_{c,2}^d, (o_c^{i_0}, o_c^{i_1})\in A_{dep}\right\}\nonumber
\end{equation}
Then we have $\mathcal{O}_{c,2}^d\subseteq\mathcal{O}_{c,2}^{d^+}\subseteq\mathcal{O}_{c,2}$. Once existing circular dependencies, some objects must be temporarily moved to the intermediate places to break the dependencies. These intermediate places are defined as {buffers}. A buffer can be a place that is currently not occupied and does not overlap with all goal regions. Fig.~\ref{fig:dependency-graph} shows an illustration of the buffer. We follow the assumption in \cite{han2018complexity} that the capacity of the buffer is infinite, which is practical since the workspace in daily life is large enough relative to the space occupied by the objects.

{\bf Optimal Number of Steps.} In \cite{han2018complexity}, the problem of resolving objects within circular dependency mirrors the Feedback Vertex Set~(FVS) problem, which is a set of vertices whose removal makes $G_{dep}$ acyclic. After resolution, all objects can be placed to goals without buffers. Then we have the optimal number of steps for object rearrangement proved in \cite{han2018complexity}:
\begin{lemma}
    Given the dependency graph $G_{dep}$ of a tabletop object rearrangement problem of $M$ objects, the minimum steps to resolve objects in $\mathcal{O}_{c,2}^d$ equals the size of the minimum FVS of $G_{dep}$. Let this size be $|B|$, the minimum total pick-n-place steps to achieve the goal configuration is $M+|B|$.
\end{lemma}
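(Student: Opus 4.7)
The plan is to prove the bound via matching upper and lower arguments, following the buffer-minimization strategy attributed to \cite{han2018complexity}.

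First, I would establish the upper bound by explicit construction. Fix a minimum feedback vertex set $B$ of $G_{dep}$ with $|B|$ elements and proceed in three phases: (i) move every object in $B$ to a buffer location that overlaps no goal region, taking $|B|$ steps, after which the dependency graph restricted to $\mathcal{O}_c \setminus B$ is acyclic by the FVS property; (ii) process the remaining $M - |B|$ objects in a reverse topological order of this acyclic subgraph so that each object's goal is guaranteed free at the moment it is placed, consuming $M - |B|$ steps; (iii) transfer each of the $|B|$ buffered objects from its buffer to its true goal, which is now free since Phase~(ii) cleared every goal region associated with $B$. Summing gives $|B| + (M - |B|) + |B| = M + |B|$.

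Next, I would establish the matching lower bound. Every one of the $M$ objects must finish at its designated location, contributing at least $M$ pick-n-place actions. The remaining slack is captured by buffer moves: letting $B'$ denote the set of objects placed into a buffer at least once during a given execution, the key claim is that $B'$ must itself be a feedback vertex set of $G_{dep}$. Supposing otherwise, some cycle $o_c^{i_1} \to o_c^{i_2} \to \cdots \to o_c^{i_k} \to o_c^{i_1}$ contains no element of $B'$, so each of its members is picked exactly once and placed directly at its goal. Considering the temporally first cycle member to be moved, its goal is, by the cycle edge and the fact that no predecessor in the cycle has moved yet, still occupied by that predecessor, contradicting a direct-to-goal placement. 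Hence $|B'| \geq |B|$, and because every buffered object incurs at least one buffer move in addition to its terminal goal move, the total is at least $M + |B'| \geq M + |B|$.

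The main obstacle will be the contradiction step in the lower bound: one must be careful that the temporally first cycle member is well-defined for an arbitrary interleaved execution, and that its initial cycle-predecessor truly still blocks its goal at that moment. This reduces to the observation that no other cycle member has moved yet, so each cycle-predecessor remains at its initial configuration, which by definition of the dependency edge occupies the first-mover's goal. A secondary subtlety, in the upper bound, is verifying that the buffer slots chosen in Phase~(i) can always coexist with still-unmoved objects; this is handled by the assumption of abundant buffer space in \cite{han2018complexity} and by the requirement that buffers do not overlap any goal region.
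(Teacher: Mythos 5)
Your argument is essentially correct, but note that the paper itself offers no proof of this lemma: it is stated as imported wholesale from the cited complexity analysis of tabletop rearrangement (Han et al.), and Appendix~A only uses it as a black box to prove Theorem~1. What you have written is a self-contained reconstruction of that external argument, and it is the standard one: the upper bound by evacuating a minimum FVS $B$ to buffers, clearing the now-acyclic remainder in an order consistent with the dependency DAG, and then homing the buffered objects; the lower bound by observing that the set $B'$ of objects ever parked off-goal must hit every cycle of $G_{dep}$, since otherwise the temporally first mover on an untouched cycle would find its goal still occupied by the (as yet unmoved) cycle neighbor it depends on, contradicting a direct-to-goal placement. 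Two small points to tighten. First, your phrasing "occupied by that predecessor" inverts the paper's edge convention: $(o_c^{i_0},o_c^{i_1})\in A_{dep}$ means $o_c^{i_1}$ sits on $o_c^{i_0}$'s goal, so the blocker of the first mover is its out-neighbor along the cycle; the logic is unaffected but the wording should match. Second, the count of $M$ mandatory moves tacitly assumes no object starts at its own goal (consistent with the lemma's statement and with the paper's remark that self-occupancy creates no dependency), and your $B'$ should be defined as the set of objects ever placed anywhere other than their own goal, not only at designated buffer slots, so that the bound $M+|B'|$ covers arbitrary intermediate placements. With those clarifications the proof is complete and matches the source the paper relies on.
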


Based on the lemma, we have the proof of Theorem 1:
\begin{proof}
For each object in the current scene, it belongs to $\mathcal{O}_{c,1}$ or $\mathcal{O}_{c,2}^{d^+}$ or $\mathcal{O}_{c,2}\setminus\mathcal{O}_{c,2}^{d^+}$. We denote $0/1$ as the existence of objects in these three sets, {\it e.g.} $\{111\}$ means the current scene contains objects in all three sets, resulting in $2^3$ situations. Notably, there are paths from objects in $\mathcal{O}_{c,2}\setminus\mathcal{O}_{c,2}^{d^+}$ to those in $\mathcal{O}_{c,1}$. Thus, the existence of objects in $\mathcal{O}_{c,2}\setminus\mathcal{O}_{c,2}^{d^+}$ indicates the existence of objects in $\mathcal{O}_{c,1}$. In addition, the empty scene is not considered. Therefore, there are at most 5 cases for the current scene: $\{010\}, \{100\}, \{101\}, \{110\}, \{111\}$. Furthermore, we merge case 2 and case 3 as $\{10\cdot\}$, {\it i.e.} the current scene contains objects in $\mathcal{O}_{c,1}$, but does not contain objects in $\mathcal{O}_{c,2}^{d^+}$, and merge case 4 and case 5 as $\{11\cdot\}$, {\it i.e.} the current scene contains both objects in $\mathcal{O}_{c,1}$ and $\mathcal{O}_{c,2}^{d^+}$. 

Let $\mathcal{F}^\pi(\mathcal{O}_c|\mathcal{O}_g)$ be the total steps of a policy $\pi$ to reach the goal configuration $\mathcal{O}_g$ from $\mathcal{O}_c$. Following {\bf Lemma 1}, an optimal policy $\pi^*$ has $\mathcal{F}^{\pi^\star}(\mathcal{O}_c|\mathcal{O}_g)=M+|B|$. To prove $\pi^0$ is optimal, we show that $\pi^0$ can achieve the minimum total pick-n-place steps in all cases:

{\it 1) Existence of both $\mathcal{O}_{c,1}$ and $\mathcal{O}_{c,2}^{d^+}$, $\{11\cdot\}$.}

In this case, there exist objects $o_c^i\in\mathcal{O}_{c,1}$, the policy $\pi^0$ grasps objects $o_c^{i} \in \mathcal{O}_{c,1}$ and directly places them to their goals. Note that moving an object $o_c^{i}\in\mathcal{O}_{c,1}$ may bring new objects whose goals are initially occupied by $o_c^{i}$ into the set of $\mathcal{O}_{c,1}$. Such objects are then moved to their goals. That is, objects in $\mathcal{O}_{c,1}$ are iteratively moved to their goals until there is no object in $\mathcal{O}_{c,1}$. Let the number of these objects be $M^0$, then the number of steps to move them to the goals is $M^0$.

After these objects are all moved to their goals, there must remain only objects in $\mathcal{O}_{c,2}^{d^+}$. To prove this, we assume there is an object $o_c^{i_0}$ that does not belong to $\mathcal{O}_{c,2}^{d^+}$ after moving all objects in $\mathcal{O}_{c,1}$. That is, the reachable subgraph of $o_c^{i_0}$ is acyclic. So, there must be another object $o_c^{i_1}$ in the reachable subgraph of $o_c^{i_0}$, that does not depend on any other objects. As a result, $o_c^{i_1} \in \mathcal{O}_{c,1}$, which is in conflict with the assumption. 

For objects in $\mathcal{O}_{c,2}^{d^+}$, the policy $\pi^0$ first resolves them as $\pi_{\mathcal{G}}^{fvs}$, which costs $|B|$ times of place in the buffer. After resolving the circular dependency by $\pi_{\mathcal{G}}^{fvs}$, all the remaining $M-M^0$ objects can be moved to their goals without additional places in buffers. 
Therefore, the number of steps to move these objects to the goals is $M-M^0+|B|$. 

Finally, the total steps of the policy equals $M+|B|$.

{\it 2) Existence of $\mathcal{O}_{c,1}$, absence of $\mathcal{O}_{c,2}^{d^+}$, $\{10\cdot\}$.}

In this case, there is no circular dependency. That is, all the objects can be moved to the goals without buffers {\it i.e.} $|B|=0$. 
Therefore, $\mathcal{F}^{\pi^0}(\mathcal{O}_c|\mathcal{O}_g)=M=M+|B|$.

{\it 3) Existence of only $\mathcal{O}_{c,2}^{d^+}$, $\{010\}$.}

In this case, the policy $\pi^0$ first resolves circular dependency as $\pi_{\mathcal{G}}^{fvs}$, which costs $|B|$ times of place in the buffer. Then, all the objects can be moved to their goals without additional places in buffers. Therefore, the number of steps to move all the objects to the goals is $M+|B|$. 

Overall, we have $\mathcal{F}^{\pi^0}\!(\mathcal{O}_c|\mathcal{O}_g)\!=\!\mathcal{F}^{\pi^\star}\!(\mathcal{O}_c|\mathcal{O}_g)\!=\!M+|B|$ in all possible cases, demonstrating that the policy $\pi^0$ is optimal.

\end{proof}

\subsection{Proof of Theorem 2}
\label{sec:appendix-b}
\begin{proof}
It has been stated in Eqn.~\ref{eqn-pi0-place} that with ideal perception, the optimal place policy $\bar{\pi}_{\mathcal{P}}^0$ follows a rule conditioned on the object matching derived from the grasp policy. With perception noise, the derived object matching may be wrong. 

Given two current objects $o_c^{i_0}$, $o_c^{i_1}$ and their ground-truth matched objects $o_g^{j_0}$, $o_g^{j_1}$, assume that the object matching derived from $\pi_{\mathcal{G}}^0$ wrongly pairs $o_c^{i_0}$ to $o_g^{j_1}$. Instead, by improving the in-hand object matching independently, $\bar{\pi}_{\mathcal{P}}^1$ can get the correct matching of $o_c^{i_0}$ to $o_g^{j_0}$. 

Now let $o_c^{i_0}$ be the in-hand object. If $o_g^{j_0}$ and $o_g^{j_1}$ are both occupied, $\pi^0$ and $\pi^1$ will both move $o_c^{i_0}$ to the buffer, then the wrong matching does not affect their pick-n-place steps. However, if $o_g^{j_0}$ is not occupied, $\pi^1$ will move $o_c^{i_0}$ to the goal. Instead, $\pi^0$ will move $o_c^{i_0}$ to the buffer or the goal of $o_c^{i_1}$, which calls for additional steps to move $o_c^{i_0}$ to its goal, or brings additional occupancy of the goal of  $o_c^{i_1}$. Thus, for the rearrangement of $o_c^{i_0}$, $\pi^0$ costs more steps than $\pi^1$.

Notably, each time there is only one in-hand object, and the effect on total pick-n-place steps can be accumulated if there is more than one wrong matching of the in-hand objects. Therefore, $\mathcal{F}^{\pi^0}\!(\mathcal{O}_c|\mathcal{O}_g)\!\geq\!\mathcal{F}^{\pi^1}\!(\mathcal{O}_c|\mathcal{O}_g)$, indicating that the improvement of perception of the place policy is {valuable} to reduce the total pick-n-place steps, {\it i.e.} improve task-level performance.

\end{proof}

\subsection{More Implementation Details of See Policy}
\label{sec:appendix-c}
{\bf RL Algorithm Details.} We train the see policy with SAC~\cite{haarnoja2018soft,christodoulou2019soft}. The temperature parameter $\alpha$ is initialized as 0.2 with automatic entropy tuning. All the networks are trained with Adam optimizer using fixed learning rates $3\times 10^{-4}$. The future discount $\gamma$ is set as a constant at 0.99. Note that during the training stage, the network parameters of the finetuned CLIP are fixed.

{\bf Network Architecture.} For finetuned CLIP, we use the CLIP-Adapter~\cite{gao2021clip} containing a 3-layer MLP with sizes [512, 128, 512]. For see policy, the policy and critic share the same state encoder with separate MLPs to output action and Q value respectively. To be specific, for policy, we employ ResNet50~\cite{he2016deep} to encode the state $\Delta f$, followed by MLPs to generate the mean and standard deviation of a normal Gaussian distribution. Subsequently, the action $a_{\mathcal{S}}$ is sampled from the distribution. Each angle of delta orientation is limited by the range of $[-1.57, 1.57]$. The MLPs of policy and critic are with sizes [1024, 1024, 3] and [1024, 1024, 1] respectively. 

\subsection{More Implementation Details of Grasp Policy}
\label{sec:appendix-d}

{\bf RL Algorithm Details.} The reinforcement learning algorithm used to train our networks is discrete SAC~\cite{haarnoja2018soft, christodoulou2019soft}. The temperature parameter $\alpha$ is initialized as 0.2 with automatic entropy tuning. All the networks are trained with Adam optimizer using fixed learning rates $3\times 10^{-4}$. Our future discount $\gamma$ is set as a constant at 0.99. Note that during the training stage, the network parameters of the finetuned CLIP and graspnet are fixed.

{\bf Network Architecture.} We adopt the transformer architecture of the text encoder in \cite{radford2021learning}, and conduct multi-head attention with different query, key and value. Parameters of each cross transformer include width of 512, head of 8 and layer of 1. The sizes of hidden layers of $\mathrm{MLP_1}$, $\mathrm{MLP_2}$ and $\mathrm{MLP_3}$ are [256, 512, 512], [256, 512, 512] and [512, 256, 1]. 

\subsection{Case Visualization}
\label{sec:appendix-e}

Fig.~\ref{fig:as-case-vis} shows example testing cases of see with different categories. Fig.~\ref{fig:or-case-vis} visualizes example testing cases of different categories in the tabletop rearrangement. 

\begin{figure}[t]
    \centering
    \includegraphics[width=\linewidth]{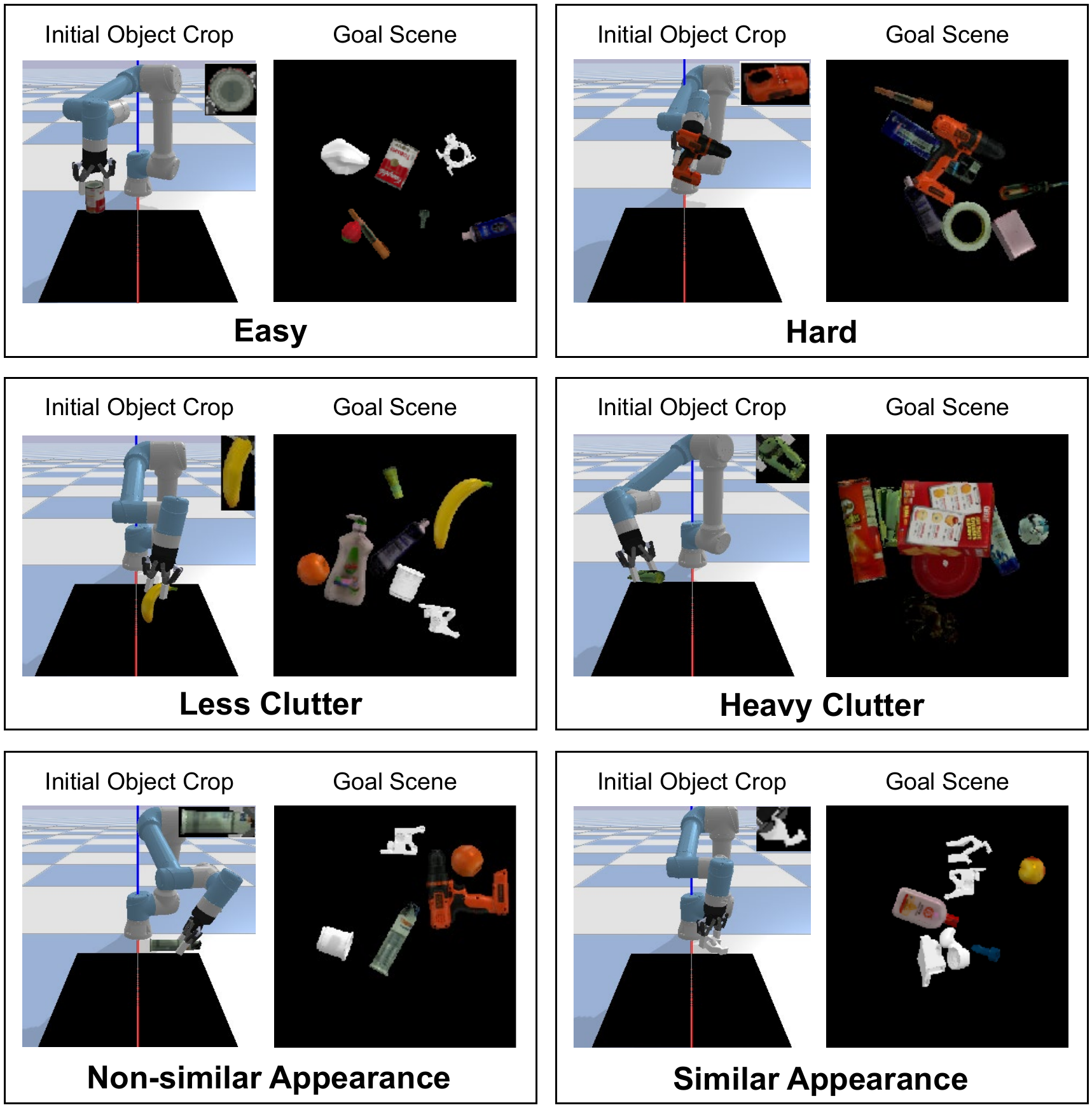}
    \vspace{-0.6cm}
    \caption{Example testing cases of different categories in see.}
    \label{fig:as-case-vis}
    \vspace{-0.3cm}
\end{figure}

\begin{figure}[t]
    \centering
    \includegraphics[width=\linewidth]{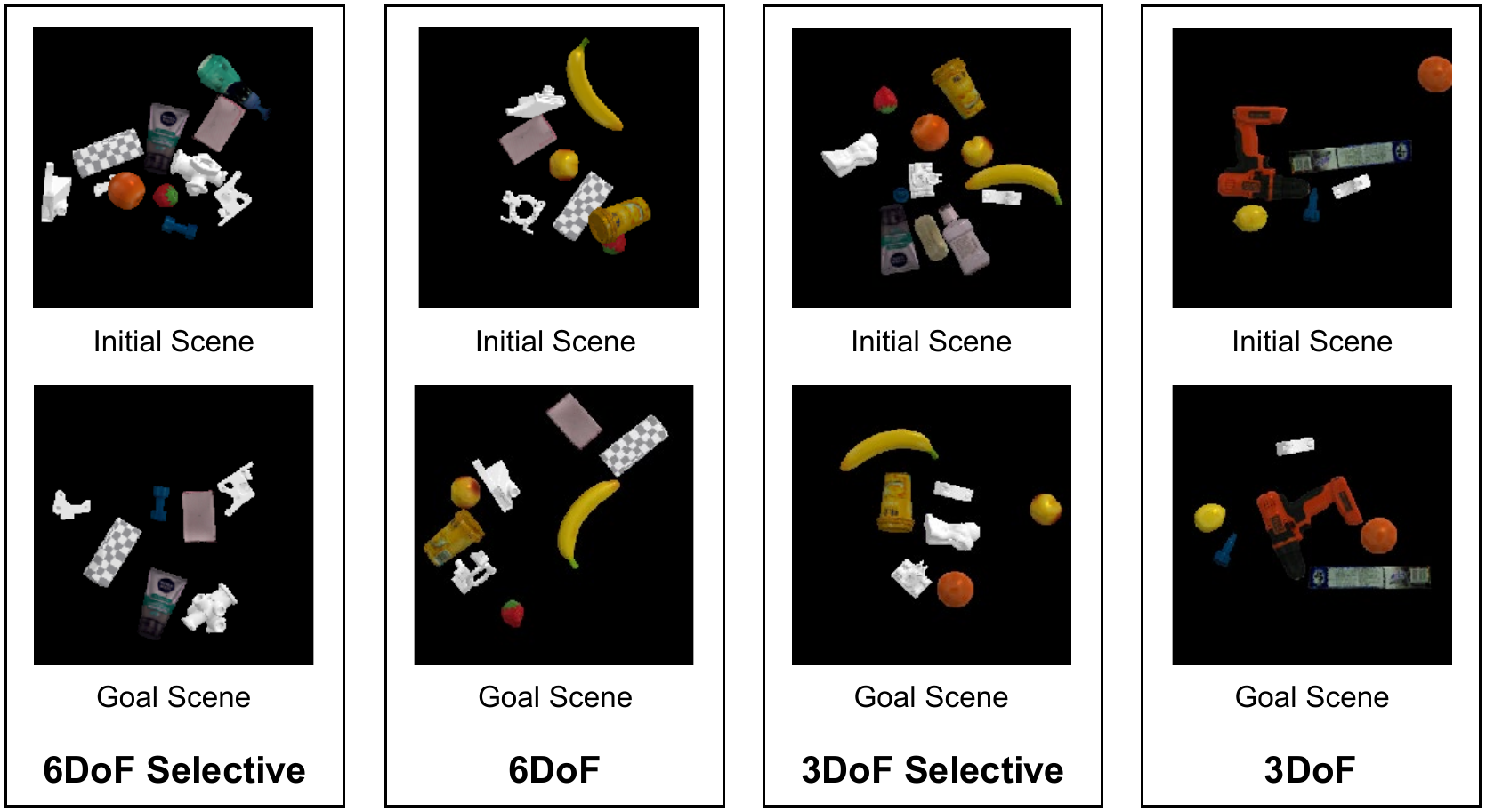}
    \vspace{-0.6cm}
    \caption{Example testing cases of different categories in object rearrangement.}
    \label{fig:or-case-vis}
    \vspace{-0.6cm}
\end{figure}

\subsection{Generalization on Perception Noise}
\label{sec:appendix-f}
To demonstrate the generalization of our method in noisy perception, we conduct experiments with unseen camera angle and table background in Table~\ref{table:simulated-diverse-noise}. The original camera angle is nearly top-down, while the newly tested camera overlooks the workspace from the front at a 36$\degree$ downward angle. Additionally, we conduct experiments with an unseen wooden table background. To overcome worse object matching, the maximum see steps is extended from $5$ to $10$. Environment settings of the unseen camera view and table background are visualized in Fig.~\ref{fig:or-case-noise}. Results show that our method can generalize to the unseen camera view and the table background. For the unseen table background, our policy achieves similar completion rates with more planning steps compared to that with the original background. For the unseen camera angle, there are performance drops both in completion rates and planning steps.

For the unseen table background, we take the object-centric image crops as input, thereby the textured background slightly affects the $M$-to-$N$ object matching. Accordingly, the performance of the grasp policy is hindered to some extent. Therefore, the policy needs to spend more planning steps. Additionally, by extending the maximum see steps, the see policy reduces the background influence on the in-hand object matching, prompting the whole policy to reach similar completion rates as that with the original background.

Our current model can adapt to unseen camera views by projecting the camera images into heightmaps as described in \cite{zeng2021transporter} ({\it i.e.} the top-down workspace image). However, the unseen camera view simultaneously affects the current and goal scenes. Even with the object-centric image crops, the impact is still significant. Due to the affected $M$-to-$N$ object matching, the performance of the grasp policy is hindered, calling for more planning steps. Due to the affected in-hand object matching, even extending the maximum see steps, the improvement of active perception is inapparent. Therefore, compared to the results of the original camera view, there remains a 5$\%$$\sim$10$\%$ margin in completion rates.

\begin{figure}[t]
    \centering
    \includegraphics[width=\linewidth]{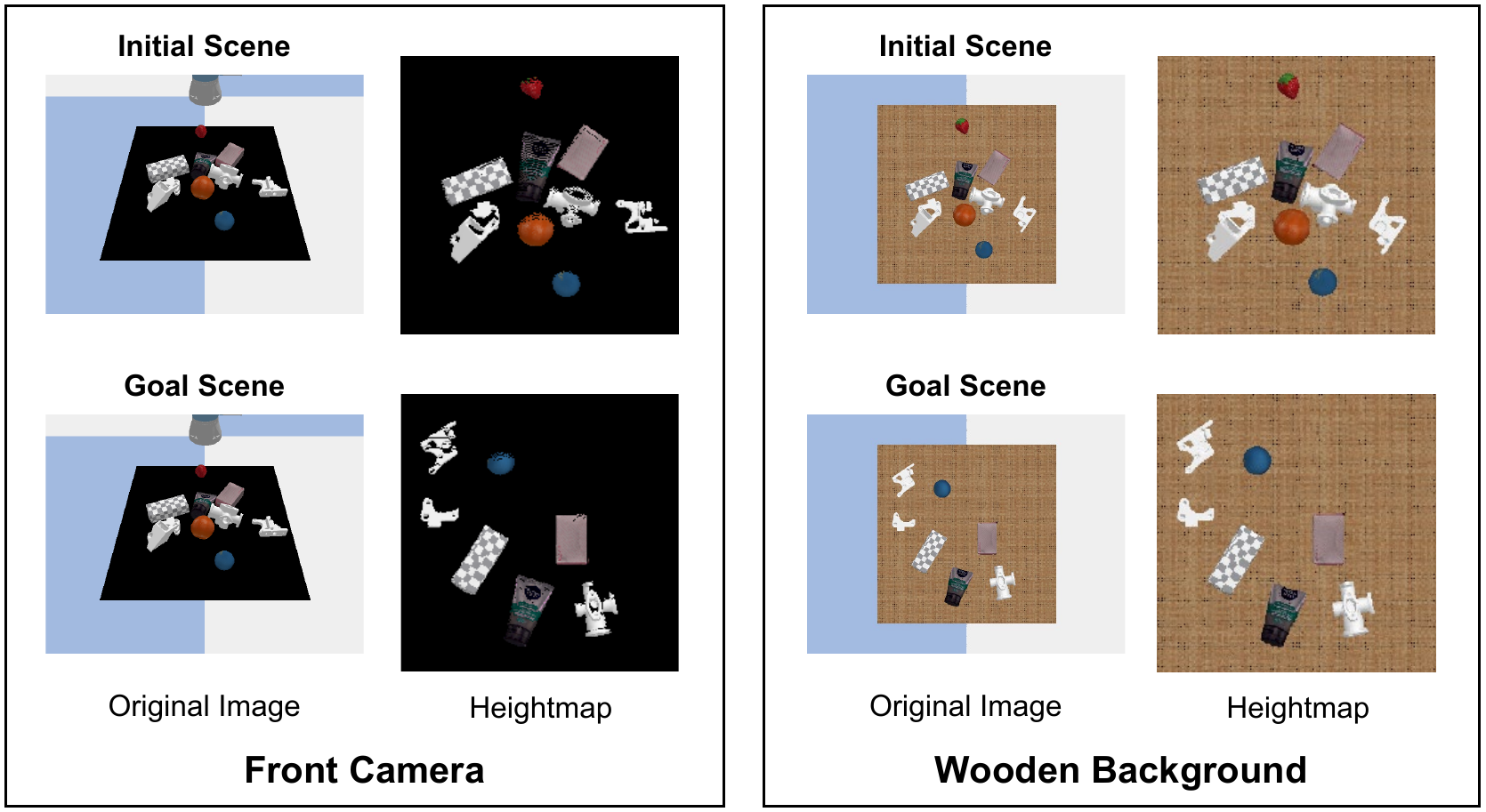}
    \vspace{-0.6cm}
    \caption{Example testing cases of generalization on perception noise in object rearrangement.}
    \label{fig:or-case-noise}
    \vspace{-0.6cm}
\end{figure}

\begin{table*}[t]
\caption{Simulation Results of Object Rearrangements on Generalization of Perception Noise\\(Maximum See Step=10)}
\label{table:simulated-diverse-noise}
\vspace{-0.2cm}
\begin{center}
\begin{threeparttable}
\begin{tabular}{ccccccccccccc}
\hline Environment & Rotation & \makebox[0.05\textwidth][c]{Selectivity}& Init. \#obj. & Goal \#obj. & \multicolumn{2}{c}{Task Completion} & \multicolumn{2}{c}{Completion Planning Steps} & \multicolumn{2}{c}{Overall Planning Steps}\\
\hline 
 & & & & & seen & unseen & seen & unseen & seen & unseen \\
\hline 
\multirow{4}{*}{Front Camera} & 3-DoF & \XSolidBrush & 4-8 & 4-8 & 87.5 & 70.0 & 19.6$\pm$3.5 & 22.5$\pm$3.2 & 20.9$\pm$3.7& 24.8$\pm$4.7\\
& 3-DoF & \Checkmark & 9-13 & 4-8 & 83.3 & 66.7 & 18.6$\pm$3.7 & 18.3$\pm$0.5 & 20.5$\pm$3.9 & 22.2$\pm$2.8 \\
& 6-DoF & \XSolidBrush & 4-8 & 4-8 & 80.0 & 66.7 & 21.8$\pm$3.9 & 19.2$\pm$1.7 & 23.4$\pm$3.8 & 22.8$\pm$3.5\\
& 6-DoF & \Checkmark & 9-13 & 4-8 & 75.0 & 66.7 & 19.6$\pm$2.5 & 25.6$\pm$1.5 & {22.2$\pm$3.5} & 27.1$\pm$1.8\\
\hline
\multirow{4}{*}{Wooden Background} & 3-DoF & \XSolidBrush & 4-8 & 4-8 & 92.0 & 66.7 & 15.3$\pm$2.0 & 12.2$\pm$1.1 & 16.5$\pm$4.3 & 18.1$\pm$4.8\\
& 3-DoF & \Checkmark & 9-13 & 4-8 & 91.7 & 66.7 & 14.4$\pm$0.8 & 18.3$\pm$1.5 & 15.7$\pm$1.5 & 22.2$\pm$3.5\\
& 6-DoF & \XSolidBrush & 4-8 & 4-8 & 90.0 & 66.7 & 12.1$\pm$1.8 & 13.0$\pm$1.8 & 13.9$\pm$1.9 & 18.7$\pm$4.7\\
& 6-DoF & \Checkmark & 9-13 & 4-8 & 84.6 & 66.7 & 20.0$\pm$2.3 & 20.0$\pm$1.3 & 21.5$\pm$3.7 & 23.3$\pm$4.5\\
\hline
\end{tabular}
\begin{tablenotes}
\scriptsize
\item * All cases are with challenges of clutter and swap.
\end{tablenotes}
\end{threeparttable}
\end{center}
\vspace{-0.7cm}
\end{table*}

}

\end{document}